\documentclass[11pt]{article}

\pdfoutput=1
\usepackage[numbers]{natbib}

\usepackage[colorlinks=true,linkcolor=blue,citecolor=blue]{hyperref}
\usepackage[letterpaper, margin=1in]{geometry}

%opening
\title{Nonasymptotic Guarantees for Spiked Matrix Recovery \\with Generative Priors}

\usepackage[utf8]{inputenc}
\usepackage[T1]{fontenc}
\usepackage{authblk}

\usepackage{natbib}
\bibliographystyle{plain}

\author[1]{Jorio ~Cocola}
\author[1,2]{Paul ~Hand\thanks{Partially supported by NSF CAREER Grant DMS-1848087 and NSF Grant DMS-2022205.}}
\author[3]{Vladislav ~Voroninski}
\affil[1]{Department of Mathematics, Northeastern University}
\affil[2]{Khoury College of Computer Sciences, Northeastern University}
\affil[3]{Helm.ai}

\usepackage[utf8]{inputenc} % allow utf-8 input
\usepackage[T1]{fontenc}    % use 8-bit T1 fonts
\usepackage{url}            % simple URL typesetting
\usepackage{booktabs}       % professional-quality tables
\usepackage{amsfonts}       % blackboard math symbols
\usepackage{nicefrac}       % compact symbols for 1/2, etc.
\usepackage{microtype}      % microtypography
\usepackage{color}
\newcommand{\red}[1]{{\color{red} {#1}}}

\usepackage{authblk}

\usepackage{amsfonts,amsmath,bbm,amssymb, amsthm}
\usepackage{comment}
 \usepackage[pdftex]{graphicx}
\usepackage{subfig}
\usepackage{enumitem}
\usepackage{subfig}
\usepackage{algorithm}

\usepackage[noend]{algpseudocode}

%\makeatletter
%\def\BState{\State\hskip-\ALG@thistlm}
%\makeatother

\newcommand{\htilde}{\tilde{h}}

\newcommand{\eps}{\epsilon}

\newcommand{\relu}{\text{relu}}
\newcommand{\T}{\intercal}
\newcommand{\bigO}{\mathcal{O}}
\newcommand{\R}{\mathbb{R}}
\newcommand{\diag}{\text{diag}}

\newcommand{\xstar}{{x_{\star}}}
\newcommand{\vbar}{{\bar{v}}}
\newcommand{\thbar}{{\bar{\theta}}}

\newcommand{\ystar}{{{y_\star}}}
\newcommand{\vtilde}{\tilde{v}}
\newcommand{\Sbeta}{\mathcal{S}_\beta}
\newcommand{\pa}{\partial}
\newcommand{\PX}{\mathbb{P}}
\newcommand{\EX}{\mathbb{P}}
\newcommand{\argmax}{\operatornamewithlimits{argmax}}

\newtheorem{assumption}{Assumption}
\newtheorem{prop}{Proposition}
\newtheorem{definition}{Definition}
\newtheorem{thm}{Theorem}
\newtheorem*{thm*}{Theorem}
\newtheorem{lemma}{Lemma}

\theoremstyle{remark}

\begin{document}

\maketitle

\begin{abstract}
Many problems in statistics and machine learning require the reconstruction of a rank-one signal matrix from noisy data. Enforcing additional prior information on the rank-one component is often key to guaranteeing good recovery performance. One such prior on the low-rank component is sparsity, giving rise to the sparse principal component analysis problem. Unfortunately, there is strong evidence that this problem suffers from a computational-to-statistical gap, which may be fundamental. In this work, we study an alternative prior where the low-rank component is in the range of a trained generative network. We provide a non-asymptotic analysis with optimal sample complexity, up to logarithmic factors, for rank-one matrix recovery under an expansive-Gaussian network prior. Specifically, we establish a favorable global optimization landscape for a nonlinear least squares objective, provided the number of samples is on the order of the dimensionality of the input to the generative model. This result suggests that generative priors have no computational-to-statistical gap for structured rank-one matrix recovery in the finite data, nonasymptotic regime. We present this analysis in the case of both the Wishart and Wigner spiked matrix models.
\end{abstract}

\section{Introduction}

In this paper we study the problem of estimating a spike vector $\ystar \in \R^n$ from data $Y$ consisting of a rank-1 matrix perturbed with random noise. In particular, the following random models for $Y$ will be considered.
\begin{itemize}
	\item The \textbf{Spiked Wishart Model} in which $Y \in \R^{N \times n}$ is given by:
	\begin{equation}\label{eq:wishart_Y}
	Y = u \, \ystar^\T + \sigma \mathcal{Z},
	\end{equation}
	where $\sigma > 0$, $u \sim \mathcal{N}(0,I_n)$ and $\mathcal{Z}$ are independent
	and $\mathcal{Z}_{ij}$ are i.i.d. from $\mathcal{N}(0,1)$.
	
	\item The \textbf{Spiked Wigner Model} in which $Y  \in \R^{n \times n}$ is given by:
	\begin{equation}\label{eq:wigner_Y}
	Y = \ystar \ystar^\T + \nu \mathcal{H} 
	\end{equation}
	where $\nu > 0$, $\mathcal{H} \in \R^{n \times n}$ 
	is drawn from a \textit{Gaussian Orthogonal Ensemble} GOE$(n)$, 
	i.e. $\mathcal{H}_{ii} \sim \mathcal{N}(0, 2/n)$ for all $1 \leq i \leq n$ and $\mathcal{H}_{ij} = \mathcal{H}_{ji} \sim \mathcal{N}(0,1/n)$ for $1 \leq j < i \leq n$.
\end{itemize}
Spiked random matrices have been extensively studied in recent years as they serve as a mathematical model for many statistical inverse problems such as PCA \citep{johnstone2001distribution,amini2008high, deshpande2014sparse, vu2012minimax}, synchronization over graphs \citep{abbe2014decoding,bandeira2015non,javanmard2016phase} and community detection  \citep{mcsherry2001spectral,deshpande2016asymptotic,moore2017computer}. They are, moreover, connected to the rank-1 case of other linear inverse problems such as matrix sensing and matrix completion under RIP-like assumptions on the measurements operator \cite{bhojanapalli2016global,zhang2018much}. 

\begin{comment}
\red{[Can you do without this text in red?]In the noiseless case ($\mathcal{Z} = 0$ or $\mathcal{H} = 0$) the spike $\ystar$ can be exactly recovered (up to scaling) from the leading right singular vector of the matrix $Y$.  In the noisy case, classical results in random matrix theory have then studied the spectral properties of the matrices $Y$ in the  $N,n \to \infty$ regime, identifying phase transition with respect to the signal-to-noise ratio above which $\ystar$ correlates with the top singular vector of $Y$, and below which the two vectors are instead asymptotically orthogonal \cite{baik2006eigenvalues,feral2007largest}. These critical values have been shown to furthermore correspond to information-theoretical limits for the detection problem (distinguishing between the case $\ystar=0$ and $\ystar \neq 0$) \cite{onatski2013asymptotic,ke2016detecting,perry2016optimality, banks2018information}. It follows that} in \red{these}
\end{comment}
In the high-dimensional/low signal-to-noise ratio 
regimes, it is fundamental to leverage additional prior information on the low-rank component in order to obtain consistent estimates of $\ystar$. 
Recent works, however,  have discovered that some priors give rise to gaps between what is statistically-theoretically optimal and can be achieved with unbounded computational resources, and what instead can be achieved with polynomial-time algorithms. A prominent example is represented by 
the Sparse PCA problem in which the vector $\ystar$ in \eqref{eq:wishart_Y} is taken to be sparse (see next section and \cite{bandeira2018notes,kunisky2019notes} for surveys of recent approaches).  

In this paper we study the spiked random matrix models  \eqref{eq:wishart_Y} and \eqref{eq:wigner_Y}, where 
the prior information on the planted signal $\ystar$ 
comes from a learned generative network. 
In particular, we assume that a generative neural network $G: \R^k \to \R^n$ with $k < n$, has been trained on a data set of spikes, and the unknown spike $\ystar \in \R^n$ lies on the range of $G$, i.e. we can write $\ystar = G(\xstar)$ for some $\xstar \in \R^k$.
As a mathematical model for the trained $G$, we consider  
a $d$-layer feed forward network of the form:
\begin{equation}\label{eq:Gx}
G(x) = \relu(W_d \dots \relu(W_2 \relu(W_1 x)) \dots)
\end{equation}
with weight matrices $W_i \in \R^{n_i \times n_{i-1}}$ and $\relu(x) = \max(x,0)$  is applied entrywise. We furthermore assume that the network is expansive, i.e. $n = n_d > n_{d-1} > \dots > n_0 = k$, and the weights have Gaussian entries. This  modeling assumption was introduced in \cite{HV17}, and additionally it and its variants were used in \cite{HHHV18,oscar2018phase,ma2018invertibility,hand2019global,song2019surfing}. See  Section \ref{sec:informal} for justifications of this model. 

Generative priors have been shown to close a computational-to-statistical gap in the Compressive Phase Retrieval problem. %in which a signal has to be recovered from compressed magnitude-only linear measurements. 
With a sparsity prior the information-theoretically optimal sample complexity is proportional to the sparsity level $s$ of the signal, on the other hand the best known algorithms (convex methods \cite{hand2016compressed,li2013sparse,ohlsson2011compressive}, iterative thresholding \cite{cai2016optimal, wang2017sparse, yuan2019phase}, etc.) require a sample complexity proportional to $s^2$ for stable recovery, a barrier which might not be resolvable by polynomial-time algorithms \cite{barbier2019optimal}.
Under the generative prior \eqref{eq:Gx}, \cite{oscar2018phase} has shown that, compressive phase retrieval is possible via gradient descent over a nonlinear objective with sample complexity proportional (up to log factors) to the underlying signal dimensionality $k$.
This result suggests that it may be possible to use generative priors to close other computational-to-statistical gaps such as for models \eqref{eq:wishart_Y} and \eqref{eq:wigner_Y}. 
Indeed, recently \cite{aubin2019spiked}
considered these low-rank models and the generative network prior \eqref{eq:Gx}
and shows that 
in the asymptotic limit $k,n,N \to \infty$ with $n/k = \bigO(1)$ 
and  $N/n = \bigO(1)$, an Approximate-Message Passing algorithm  achieves the statistical information-theoretic lower bound 
and no computational-to-statistical gap is present.  %This paper seeks to show that there is no computational-statistical gap for this problem in the nonasymptotic case with finite dimensional models and finite data.
%
%manifld contrary to the sparsity case \cite{li2013sparse}, phase retrieval does not have a statistical to compuational gap. This success gives hope that the same might be the case for spiked matrix

This paper analyzes the low-rank matrix models \eqref{eq:wishart_Y} and \eqref{eq:wigner_Y} under the generative network prior \eqref{eq:Gx}. 

The contributions of this paper are as follows.  We
analyze the global landscape of a natural least-square loss over the range of the generative network demonstrating its benign optimization geometry. Our result provide further evidences for the claim that rank-one matrix recovery 
does not have  computational-to-statistical gaps when enforcing a generative prior in the non-asymptotic finite-data regime. This provides a second problem for which generative priors have closed such gaps in a non-asymptotic case.  
%\red{In addition to our focus on the non-asymptotic regime, our analysis departs from \cite{aubin2019spiked} 
%as we obtain statistically-optimal bounds 
%by directly analyzing the optimization geometry of the %problem, demonstrating the benign properties that make it %favorable to gradient descent methods. 
%[Do we need this?]} 
We further corroborate these findings by proposing a (sub)gradient algorithm which, as shown by our numerical experiments, is able to recover the sought spike with optimal sample complexity. 
This paper, therefore, strengthens the case for generative networks as priors for statistical inverse problems, not only because of their ability to learn natural signal priors, but also because of their capacity to lead to statistically optimal polynomial-time algorithms and zero computational-to-statistical gaps. 

\subsection{Problem formulation and main results}\label{sec:informal}

We consider the rank-one matrix recovery problem under a deep generative prior.  We assume that the signal spike lies in the range of the generative prior $\ystar = G(\xstar)$.  To estimate $\ystar$, we propose to 
first find an estimate $\hat{x}$ of the latent variable $\xstar$ and then use $G(\hat{x}) \approx \ystar$.
We thus consider the following  minimization problem\footnote{Under the  conditions below on the generative network, it was shown in \citep{HV17} that $G$ is invertible and therefore there exists a unique $\xstar$ that satisfies $\ystar = G(x)$.}:
\begin{equation}\label{eq:minM}
\min_{x \in \R^k} f(x) := \frac{1}{4} \| G(x)G(x)^\T - {M}\|_F^2.
\end{equation}
where:
\begin{itemize}
	\item for the \textbf{Wishart model} \eqref{eq:wishart_Y} we take $M =  \Sigma_N - \sigma^2 I_n$ with $\Sigma_N = Y^\T Y/N$.
	
	\item for the \textbf{Wigner model} \eqref{eq:wigner_Y} we take $M = Y$.
\end{itemize}
Despite the objective function \eqref{eq:minM} being nonconvex and nonsmooth, we show that it enjoys a favorable global optimization geometry for Gaussian weight matrices $\{W_i\}_{i=1}^d$. 
The informal version of our main results for the two spiked models is given below.
\begin{thm}[Informal]\label{thm:informal}
	Let $\ystar = G(\xstar)$ for a given a generative network $G: \R^k \to \R^n$ as in \eqref{eq:Gx}.
	Assume that each layer is sufficiently expansive, i.e.  $n_{i+1} = \Omega(n_i \log n_i)$, and the weights are Gaussian. Consider the minimization problem  \eqref{eq:minM} and assume 
	that up to factors dependent on the number of layers $d$:
	\begin{itemize}
		\item  for the \textbf{Wishart model}:
		%$(\sigma^2+\|\ystar\|_2^2) 2^d \sqrt{k\, \text{poly}(d)\, (\log n) /N} = \bigO(1)$
		$\sqrt{k \log n \, /N} \lesssim 1 $,
		\item for the \textbf{Wigner model}: 
		%$\nu\, 2^d \sqrt{k \, \text{poly}(d)\,(\log{n})/n} = \bigO(1),$
		$\nu \sqrt{k \log n \, /n} \lesssim 1$.
	\end{itemize}
	With high probability:
	\begin{enumerate}[label=\Alph*.]
		\item for any nonzero point $x \in \R^k$ outside two small neighborhoods of $\xstar$ and $- \rho_d \xstar$ with $0 < \rho_d  \leq 1$, the objective function \eqref{eq:minM} has a direction of strict descent given almost everywhere by the gradient of $f$;
		%\item for any nonzero point $x \in \R^k$ outside two small neighborhoods of $\xstar$ and $- \rho_d \xstar$ with $0 < \rho_d  \leq 1$, the one-sided directional derivative of the objective function $f$ is a direction of strict descent;
		
		\item the objective function values near
		$-\rho_d \xstar$ are larger than those near $\xstar$, while $x = 0$ is a local maximum;
		
		\item for any point $x$ in the small neighborhood around of $\xstar$, up to polynomials in $d$:
		\begin{itemize}
			\item for the \textbf{Wishart model}:	
			\begin{equation}\label{eq:ws_scaling}
			\| G(x) - \ystar\|_2	\lesssim \sqrt{\frac{k \log n }{N}},
			\end{equation}
			\item  for the \textbf{Wigner model}: 
			\begin{equation}\label{eq:wg_scaling}
			\| G(x) - \ystar\|_2	\lesssim \nu \sqrt{\frac{k \log n}{n}}.
			\end{equation}
		\end{itemize}
	\end{enumerate}
\end{thm}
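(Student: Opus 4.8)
The plan is to follow the by-now-standard two-stage strategy for landscape analysis under expansive-Gaussian generative priors (as used for compressive phase retrieval in \cite{oscar2018phase}), decoupling the randomness in the network weights from that in the data. In the first stage I would reduce the theorem to a purely deterministic statement asserting that both the network and the data matrix are ``typical''. For the network I would establish a Weight Distribution Condition for each $W_i$ --- valid with high probability exactly when $n_{i+1}=\Omega(n_i\log n_i)$ --- controlling the products of the ReLU-activated weight matrices $\Lambda_x := W_{d,+,x}\cdots W_{1,+,x}$ (diagonal $0/1$ masks determined by the sign pattern at $x$, so that $G(x)=\Lambda_x x$ on a neighborhood of $x$): this yields $\Lambda_x^\T\Lambda_y\approx 2^{-d}\,\Xi_{x,y}$ for an explicit deterministic matrix $\Xi_{x,y}\in\mathrm{span}\{x,y\}$-valued operator depending only on $\|x\|,\|y\|$ and $\theta:=\angle(x,y)$ through the iterated angle map $\theta_0=\theta$, $\theta_{i+1}=g(\theta_i)$ with $g(\theta)=\cos^{-1}\!\left(\frac{(\pi-\theta)\cos\theta+\sin\theta}{\pi}\right)$. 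For the data, I would write $M=\ystar\ystar^\T+E$ --- noting $\mathbb{E}[\Sigma_N-\sigma^2 I_n]=\ystar\ystar^\T$ in the Wishart case and $E=\nu\mathcal{H}$ in the Wigner case --- and prove a range-restricted concentration bound: uniformly over $a,b$ in a neighborhood of the (piecewise-linear, essentially $k$-dimensional) set $\bigcup_x\mathrm{range}(\Lambda_x)$, the bilinear forms $a^\T E b$ are with high probability at most of order $\|\ystar\|_2^2\sqrt{k\log n/N}$ (Wishart), respectively $\nu\,\|\ystar\|_2^2\sqrt{k\log n/n}$ (Wigner). The appearance of $\sqrt{k}$ rather than $\sqrt{n}$ --- the whole point of the prior --- comes from a covering-number argument, since that set is a union of $n^{O(d)}$ pieces each of dimension $k$, so a net of resolution $n^{-O(1)}$ has cardinality $n^{O(dk)}$ and contributes only a $\log n$ factor.

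Granting these two conditions, the second stage is deterministic. Writing the (a.e.\ defined) gradient as
$\nabla f(x)=\|G(x)\|_2^2\,\Lambda_x^\T G(x)-\langle\ystar,G(x)\rangle\,\Lambda_x^\T\ystar-\Lambda_x^\T E\,G(x)$,
I would substitute $G(x)=\Lambda_x x$, $\ystar=\Lambda_{\xstar}\xstar$ and the WDC approximation to obtain $\nabla f(x)=h_{x,\xstar}(r,\theta)+\varepsilon(x)$, where $h_{x,\xstar}$ is an explicit vector field in $\mathrm{span}\{x,\xstar\}$ determined by $r=\|x\|_2$, $\|\xstar\|_2$ and $\theta$, and $\|\varepsilon(x)\|_2$ is bounded by the WDC error times $2^{-d}\|x\|_2\|\xstar\|_2^2$ plus the noise bound times $2^{-d}\|x\|_2$. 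The core of the argument is then to exhibit, for every nonzero $x$ outside small balls around $\xstar$ and around $-\rho_d\xstar$ (with $\rho_d\in(0,1]$ determined explicitly by the iterated map $g$) and outside a ball around $0$, a unit ``test direction'' $w_{x,\xstar}$ --- a prescribed combination of $x/\|x\|_2$ and $\xstar/\|\xstar\|_2$ --- along which $\langle h_{x,\xstar},w_{x,\xstar}\rangle$ is bounded below by a positive quantity strictly dominating $\|\varepsilon(x)\|_2$; this forces $\langle\nabla f(x),w_{x,\xstar}\rangle>0$, so $-w_{x,\xstar}$, and a.e.\ the negative gradient, is a strict descent direction, giving Part A. As in the phase-retrieval case this reduces to a one-dimensional case analysis over $\theta\in[0,\pi]$ and $r>0$, exploiting monotonicity of $g$ and of the scalar angle-compression factor $\xi_d(\theta):=\prod_{i=0}^{d-1}\frac{\pi-\theta_i}{\pi}$; the ReLU ``folding'' keeps $\langle G(x),\ystar\rangle$ strictly positive for all $\theta<\pi$, which is precisely what produces a single spurious region (near $-\rho_d\xstar$) and no other.

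Part B follows from the same deterministic expansion. Near $0$, the term $\langle G(x),\ystar\rangle^2\asymp 2^{-2d}\|x\|_2^2\|\xstar\|_2^2\,\xi_d(\theta)^2$ dominates both the quartic $\tfrac14\|G(x)\|_2^4$ and the noise contribution for $\|x\|_2$ small, so $f$ strictly decreases as one moves away from the origin; and evaluating the deterministic part at $-\rho_d\xstar$ versus $\xstar$ shows $f$ near $-\rho_d\xstar$ exceeds $f$ near $\xstar$ by a positive multiple of $\|\ystar\|_2^4$ (up to $d$-dependent factors), since the deterministic part vanishes at $\xstar$ while it is a strictly positive multiple of $\|\ystar\|_2^4$ at $-\rho_d\xstar$. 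For Part C I would localize near $\xstar$: there $h_{x,\xstar}$ behaves like a uniformly positive-definite field in the relevant directions, so any $x$ with $\nabla f(x)=0$ (more robustly, any limit point of the algorithm) must satisfy $\|h_{x,\xstar}(r,\theta)\|_2\lesssim\|\varepsilon(x)\|_2$; expanding $\|G(x)-\ystar\|_2^2=\|\Lambda_x x-\Lambda_{\xstar}\xstar\|_2^2$ via the WDC approximation and the near-isometry $\Lambda_x^\T\Lambda_x\approx 2^{-d}I_k$ converts this into $\|G(x)-\ystar\|_2\lesssim\sqrt{k\log n/N}$ (Wishart), respectively $\nu\sqrt{k\log n/n}$ (Wigner), up to the stated polynomial-in-$d$ factors.

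The step I expect to be the main obstacle is the deterministic vector-field analysis of the second stage. Compared with compressive phase retrieval, the objective here is quartic rather than quadratic in $G(x)$, so the radial profile of $h_{x,\xstar}$ and the interaction between its $\|G(x)\|_2^2$ and $\langle\ystar,G(x)\rangle$ pieces are different; verifying that one explicit test direction still works for all $(r,\theta)$ outside the two neighborhoods, and pinning down $\rho_d$ and the radii of those neighborhoods as explicit functions of the iterated map $g$, demands a careful and somewhat delicate case analysis. Making the constants line up --- so that the deterministic lower bound genuinely dominates the combined WDC-plus-noise error $\|\varepsilon(x)\|_2$ --- is the crux. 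A secondary technical point is the range-restricted concentration of $E$, where the covering must be fine enough to control $\Lambda_x^\T E\Lambda_y$ uniformly while keeping the net cardinality at $n^{O(dk)}$, so that only a $\log n$ factor (not a $\log$ of a continuum) enters the final sample complexity.
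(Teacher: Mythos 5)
Your proposal follows essentially the same route as the paper: reduce to a deterministic Weight Distribution Condition plus a range-restricted noise bound obtained by a union bound over the finitely many ReLU regions (the distinct $\Lambda_x$) together with a net on $\mathcal{S}^{k-1}$, then show the subgradient concentrates around an explicit vector field $h_{x,\xstar}$ built from the iterated angle map $g$, whose zeros are confined to neighborhoods of $\xstar$ and $-\rho_d\xstar$, with the value comparison and the $x=0$ analysis done on the concentrated objective $f_E$. The only small deviation is in Part C, where you argue via approximate criticality, whereas the claim is for \emph{every} point of $\mathcal{B}(\xstar,r_+)$ and the paper gets it more directly from the fact that the radius $r_+$ itself scales with the noise level combined with the local Lipschitzness of $G$ --- an ingredient you already have via the near-isometry $\Lambda_x^\T\Lambda_x\approx 2^{-d}I_k$.
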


% Favorable optimization geometry
% No spurious critical points, 
Our main result characterizes the global optimization geometry of the problem \eqref{eq:minM} for a network with an expansive architecture and Gaussian weights. 
Even though the objective function in \eqref{eq:minM} is a piecewise-quartic polynomial, we show that outside two small neighborhoods around $\xstar$ and a negative multiple of it, there are no other spurious local minima or saddles, and every nonzero point has a strict linear descent direction. The point $x = 0$ is a local maximum and a neighborhood around $\xstar$ contains the global minimum of $f$.

We note, moreover, that for any point $x$ in the ``benign neighborhood'' of $\xstar$, the reconstruction error $\| G(x) - \ystar \|$ has information-theoretically optimal rates \eqref{eq:ws_scaling} and \eqref{eq:wg_scaling} corresponding (up to $\log$ factors) to the best achievable even in the simple case of a $k$-dimensional subspace prior. This implies that for the Wishart model the number of samples required to estimate $\ystar$ scales like the latent dimension $k$ which corresponds to the intrinsic degrees of freedom of the signal $\ystar$. Similarly for the Wigner model this implies that enforcing the generative network prior leads to a reduction of the noise by a factor of ${k}/{n}$.

Furthermore we observe that the direction of descent guaranteed by the theorem are almost everywhere given by the gradient of the objective function $f$. Our result, therefore, suggests 
that spiked matrix recovery
with a deep (random) generative network prior can be solved rate-optimally by simply minimizing over the range of the network via simple and computationally tractable algorithms such as gradient descent methods. For small enough step sizes, the iterates of these methods would converge to  one of the two neighborhoods where the gradients are small (identified in Theorem \ref{thm:informal}A), and avoiding the bad neighborhood of $- \rho_d \xstar$ can be done by exploiting the knowledge of the properties of the loss function (described in Theorem \ref{thm:informal}B) as done in Algorithm \ref{alg:subgrad} below and shown in the numerical experiments.
Finally, proving a convexity-like property of the ``benign neighborhood'' around $\xstar$ would ensure that the iterates will remain in this neighborhood and gradient descent will converge to a point with optimal error-rates \eqref{eq:ws_scaling} and \eqref{eq:wg_scaling}.
Formally proving the optimality and polynomial runtime of a gradient method for spiked matrix recovery is left for future work.

Regarding the Gaussian weight assumption, we observe that there is empirical evidence that the distribution of the weights of deep neural networks have properties consistent with those of Gaussian matrices \citep{arora2015deep}. Moreover, these observations have been used in advancing the theoretical understanding of deep network trained in supervised setting and in particular their ability to preserve the metric structure of the data \citep{giryes2016deep}. The randomness 
assumption has been further used by \citep{arora2014provable} to show that autoencoders with random weights can be learned in polynomial time. 
More recently, a series of works (see for example \citep{li2018learning, du2018gradient, oymak2019towards, mei2019generalization, chizat2019lazy}) have been dedicated to theoretical guarantees for training deep neural networks in the close-to-random regime of the \textit{Neural Tangent Kernel} \citep{jacot2018neural}. 
Finally, as for the case of compressed sensing in which the analysis of the random setting has led to considerable understanding 
of the problem as well as tangible practical innovations, we hope that the analysis of the random setting for deep generative networks will provide 
insights and generate novel developments in the field of statistical inverse problems.

We finally observe that signal recovery problems where multiple signal structures hold simultaneously,
e.g. low-rankness and sparsity, have been notoriously difficult, leading to no tractable algorithms at optimal sample complexity (see the next section for further details). Consequently, one might expect that enforcing low-rankness and generative priors would be comparably hard. In this work, we show instead that this combination of structural priors is not inherently difficult. 
This would motivate practitioners to invest in building and using generative priors, as those studied in this paper, in contexts where other priors have been traditionally used with suboptimal theoretical guarantees or empirical performance. 
\section{Related work}

\subsubsection*{Sparse PCA and other computational-to-statistical  gaps.}

Given a large number of samples data  $\{y_i\}^{N}_{i=1} \in \R^n$
the important statistical task of finding 
the directions that explain most of the variance (principal components)
is classically solved by PCA.  Insights on the statistical performance of this algorithm 
can be gained by studying spiked covariance models \citep{johnstone2001distribution}.
Under this model it is assumed that the data are of the form: 
\begin{equation}\label{eq:spiked_samples}
y_{i} = u_{i} \ystar + \sigma z_{i}
\end{equation}
where $\sigma > 0$,  $u_i \sim \mathcal{N}(0,1)$ and $z_{i} \sim \mathcal{N}(0, I_n)$ 
are independent and identically distributed, and $\ystar$ is the unit norm planted spike. Note that a matrix $Y \in \R^{N \times n}$ with rows $\{y_{i}\}_i$ can be written as \eqref{eq:wishart_Y}, and the $y_{i}$s are i.i.d. samples of $\mathcal{N}(0, \Sigma)$ 
where the population covariance matrix is $\Sigma = \ystar \ystar^\T + \sigma^2 I_n$.
Principal Component Analysis, then, estimates $\ystar$ via the leading eigenvector $\hat{y}$ 
of the empirical covariance matrix
$\Sigma_{N} = \frac{1}{N}\sum_{i=1}^N y_{i} y_{i}^\T.$
Standard techniques of high dimensional probability then show that as long as\footnote{We write $f(n) \gtrsim g(n)$ if $f(n) \geq C n$ for 
	some constant $C > 0$ that might depend $\sigma$ and $\|\ystar\|^2$. Similarly for $f(n) \lesssim g(n)$.} $N \gtrsim n$, with overwhelming probability:
\begin{equation}\label{eq:PCA_est}
\min_{\eps = \pm} \| \eps \hat{y} - \ystar \|_2 \lesssim \sqrt{\frac{n}{N}}.
\end{equation}

However, in the modern high dimensional data regime, it is not uncommon
to consider cases where the ambient dimension of the data $n$ 
is larger, or of the order, of the number of samples $N$. In this case,
bounds of the form \eqref{eq:PCA_est} become meaningless. Even worse,
in the asymptotic regime $ n/N \to c > 0$ and for $\sigma^2$ large enough, the spike $\ystar$ and the estimate $\hat{y}$ become orthogonal \citep{johnstone2009}. Moreover, minimax techniques 
can be used to show that in this regime	 no other estimators can achieve better overlap with $\ystar$ \citep{wainwright2019high}.

These negative results motivated the use of additional 
structural prior on the spike $\ystar$, aimed at reducing
the sample complexity of the problem. In recent years 
various priors has been studied such as
positivity \citep{montanari2015non}, cone constraints \citep{deshpande2014cone} 
and in particular sparsity \citep{johnstone2009}, \citep{zou2006sparse}. 
In the latter case $\ystar$
is assumed to be  $s$-sparse,
and it can be shown that for $N \gtrsim s \log n$
and $n \gtrsim s$,
then the $s$-sparse largest eigenvector $\hat{y}_s$ of $\Sigma_N$:
\[
\hat{y}_s = \argmax_{y \in \mathcal{S}_2^{n-1}, \|y\|_0 \leq s} y^\T \Sigma_N y
\]
satisfies the condition:
\[
\min_{\eps = \pm} \|\eps \hat{y}_s - \ystar \|_2 \lesssim \sqrt{\frac{s \log n}{N}}.
\]
In particular the number of samples must scale linearly 
with the intrinsic dimension $s$ of the signal. 
These rates are also minimax optimal, see for example 
\citep{vu2012minimax} for the mean squared error 
and \citep{amini2008high} for the support recovery.
Despite these encouraging results no known 
polynomial time algorithm is known that 
achieves such performances and for example 
the covariance thresholding algorithm of \cite{krauthgamer2015semidefinite}
requires $N \gtrsim s^2$ samples 
in order to obtain exact support recovery or estimation rate:
\[
\min_{\eps = \pm} \|\eps \hat{y}_s - \ystar \|_2 \lesssim \sqrt{\frac{s^2 \log n}{N}},	
\]
as shown in  \cite{deshpande2014sparse}. 
In summary, only computationally intractable algorithms 
are known to reach  the statistical limit $N =  \Omega(s)$ for Sparse PCA, 
while polynomial time methods are only sub-optimal requiring $N = \Omega(s^2)$. 
The study of this computational-to-statistical  gap was initiated by \cite{berthet2013computational} who investigated the detection problem via a reduction to the planted clique problem
which is conjectured to be computationally hard.

The hardness of sparse PCA has been further suggested 
in a series of recent works \citep{cai2013sparse,ma2015sum,lesieur2015phase,brennan2019optimal}. 
These works fit in the growing and important body of literature on computational-to-statistical gaps, which have also been found and studied in a variety of other contexts such 
as tensor principal component analysis \citep{richard2014statistical}, community detection \citep{decelle2011asymptotic} and synchronization over groups \citep{perry2018message}.
Many of these problems can be phrased as recovery a spike vector from a spiked random matrix models, and the hardness can then be viewed as arising from imposing 
simultaneously  low-rankness and additional prior information on the signal (sparsity in case of Sparse PCA).
This difficulty can be found in sparse phase retrieval as well, where \cite{li2013sparse}
has shown that for an $s$-sparse signal of dimension $n$ lifted to a rank-one matrix, 
$m = \bigO(s \log n)$ number of quadratic measurements are enough to ensure well-posedness, while $m \geq \bigO(s^2/\log^2 n)$ measurements are necessary 
for the success of natural convex relaxations of the problem.
Similarly \cite{oymak2015simultaneously}  studies the 
recovery of simultaneously low-rank and sparse matrices,
and show the existence of a gap between what can be achieved with convex and tractable relaxations and nonconvex and intractable methods.

\subsubsection*{Recovery with a generative network prior}

Recently, in the wake of successes of deep learning , deep generative networks have 
gained popularity as a novel approach for encoding  and 
enforcing priors. They have been
successfully used as a  prior for various statistical estimation problems 
such as compressed sensing \citep{bora2017compressed},
blind deconvolution \citep{asim2018solving}, inpainting \citep{yeh2016semantic},
and many more \citep{sonderby2016amortised,yang2017dagan,qiu2019robust,xue2018segan}, etc. 

Parallel to these empirical successes, a recent line of  works have investigated
theoretical guarantees for 
various statistical estimation tasks with generative network priors. 
Following the work of \cite{bora2017compressed},
\cite{hand2017global} have given global guarantees for compressed sensing,
followed then by many others for various inverse problems
\citep{shah2018solving,mixon2018sunlayer,hand2019global,aubin2019exact,qiu2019robust}. In particular \cite{hand2018phase} have shown 
that  $m = \Omega(k \log n)$ number of measurements
are sufficient to recover a signal from random phaseless observations, assuming 
that the signal is the output of a trained generative network with latent space of dimension $k$. 
Note that, contrary to the sparse phase retrieval problem, generative priors for phase retrieval allow optimal sample complexity, up to logarithmic factors, with respect to the intrinsic dimension of the signal. Further, when modeled by generative priors, that dimensionality could be much smaller than the sparsity level $s$ under a sparsity prior and an appropriate basis. 

Recently \cite{aubin2019spiked} has shown that when $\ystar$ is in the range of an expansive-Gaussian generative network with Relu activation functions,
then low-rank matrix recovery does not have a computational-to-statistical gap,
in the asymptotic limit $k,n,N \to \infty$ with $n/k = \bigO(1)$ 
and  $N/n = \bigO(1)$. They also provide a spectral algorithm and demonstrate that it is able to match asymptotically the information-theoretical optimal. These methods were then extended to the phase-retrieval problem in \cite{aubin2019exact}.

\section{Low-rank matrix recovery under a generative network prior}\label{sec:SpikedRes}
We are now ready to formulate our main theoretical result
for the spiked random matrix models. Its analysis will be based on
the following assumptions on the weights of the network.

\begin{assumption}\label{hyp:randW}
	The generative network $G$ defined in \eqref{eq:Gx}, has weights $W_i \in \R^{n_i \times n_{i-1}}$ with i.i.d. entries from $\mathcal{N}(0,1/n_i)$ and satisfying the expansivity  condition with constant $\eps >0$:
	\begin{equation*}
	n_{i+1} \geq c \eps^{-2} \log(1/\eps) n_{i} \log n_{i}
	\end{equation*}
	for all $i$ and a universal constant $c > 0$.
\end{assumption}

We remark that no assumption on the layer-wise independence of the weight matrices is required.

Due to the non-smoothness of the Relu activation functions, the generative network $G$ and the loss function $f$ are not differentiable everywhere. Therefore, following \cite{HHHV18}, we resort to some concepts from nonsmooth analysis \footnote{The reader is referred to \cite{clason2017nonsmooth} for more details.}. Since $f$ is continuous and piecewise smooth, at every point $x \in \R^k$,  $f$  has a \textit{Clarke subdifferential} given by:
\begin{equation}\label{eq:subdiff}
\pa f(x) = \text{conv} \{ v_1, v_2, \dots, v_T \}
\end{equation}
where conv denotes the convex hull of the vectors $v_1, \dots, v_T$, gradients of the $T$ smooth functions adjoint at $x$.  
In particular at a point where $f$ is differentiable $\pa f(x) = \{\nabla f(x) \}$. The terms subgradients will be used for the vectors $v_x \in \pa f(x)$.  

The next theorem will demonstrate the favorable optimization geometry of the minimization problem \eqref{eq:minM} for the spiked matrix models \eqref{eq:wishart_Y} and \eqref{eq:wigner_Y}. 
We will show that provided that the noise scales linearly with the latent dimension $k$, outside 0 and two small Euclidean balls around $\xstar$ and a negative multiple of $\xstar$, 
the subgradient $v_x$ give a descent direction
for the function $f(x)$. We let 
$\mathcal{B}(x,r)$ denotes
the Euclidean ball of radius $r$ around $x$, and $D_v f(x)$ denotes the (normalized) one-sided directional derivative of $f$ in direction $v$: $D_{v} f(x) = \lim_{t \to 0} \frac{f(x + t v) - f(x)}{t \|v\|_2}$.

\begin{thm}[Global Landscape Analysis]\label{thm:main_rand}
	Let Assumption \ref{hyp:randW} be satisfied with $\eps \leq K_1 d^{-96}$, consider the minimization problem \eqref{eq:minM} and assume that the noise variance ${\omega}$ satisfies ${\omega} \leq K_2 \|\xstar\|_2^2 2^{-d}/d^{44}$ where:
	\begin{itemize}
		\item for the \textbf{Spiked Wishart Model} \eqref{eq:wishart_Y}  take $M~=\Sigma_N-\sigma^2 I_n$ with $\Sigma_N = Y^\T Y/N$ and:
		\[
		{\omega} :={(\|\ystar\|_2^2 + \sigma^2)}  \max \Bigg\{\sqrt{\frac{113 k \log (3\, n_1^d n_2^{d-1} \dots n_{d-1}^2 n) }{N}}, \frac{52 k \log (3\, n_1^d n_2^{d-1} \dots n_{d-1}^2 n) }{N} \Bigg\};
		\]
		
		\item for the \textbf{Spiked Wigner Model} \eqref{eq:wigner_Y}  take $M = Y$ and:
		\[
		{\omega} := {\nu} \sqrt{  \frac{30 k \log (3\, n_1^d n_2^{d-1} \dots n_{d-1}^2 n)  }{n}}.
		\]
	\end{itemize}
	
	Then for $\gamma_\eps > 0$ depending polynomially on $\eps$,
	with probability at least  $1 - 2e^{- k \log n} - \sum_{i=1}^d 8 n_i e^{- \gamma_\eps n_{i-1}}$
	the following holds.
	
	For all $x \in \R^k$:
	\begin{itemize}
		\item if $x \notin \mathcal{B}(\xstar, r_+)~\cup~\mathcal{B}(-\rho_d \xstar, r_-)~\cup~\{0\}$ and $v_x \in \pa f(x)$:
		\[
		D_{-v_x} f(x) < 0
		\]
		where 
		\[
		r_+ = K_3 (d^{14} {\eps^{1/2}} + 2^d d^{10} \omega  \|\xstar\|_2^{-2})  \|\xstar\|_2,
		\]
		and
		\[
		r_- =  K_4 (d^{12} {\eps}^{1/4} + 2^{d/2} d^{10} \omega^{1/2} \|\xstar\|_2^{-1})   \|\xstar\|_2
		\]
		\item if $x \in \mathcal{B}(0,  \|\xstar\|_2/16 \pi) \backslash \{0\}$  and $v_x \in \pa f(x)$ then \[\langle x, v_x\rangle < 0\]
		while if $x = 0$ and $v \in \mathcal{S}^{k-1}$ then 
		\[
		D_{-v} f(0) = 0
		\]
	\end{itemize} 
	Here $K_1, \dots, K_4$  are universal constants and $0< \rho_d \leq 1$ depends only on the depth $d$ and converges to 1 as $d \to \infty$.
\end{thm}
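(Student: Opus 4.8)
The plan is to adapt the framework for nonconvex signal recovery under expansive Gaussian ReLU generative priors developed in \cite{HV17,HHHV18,oscar2018phase}: approximate the random (sub)gradient field $v_x$ of the objective in \eqref{eq:minM} by a deterministic field $h_{x,\xstar}\in\R^k$ capturing its expected behaviour, establish a benign landscape for $h_{x,\xstar}$ by elementary calculus, and then transfer the conclusions to $v_x$ via a uniform concentration bound. Writing $M=\ystar\ystar^\T+E$ (so $E=\Sigma_N-\Sigma$ with $\Sigma=\ystar\ystar^\T+\sigma^2 I_n$ in the Wishart case, and $E=\nu\mathcal H$ in the Wigner case) and using the chain rule \eqref{eq:subdiff}, every subgradient is of the form $v_x=\Lambda_x^\T\big(\|G(x)\|_2^2\,G(x)-\langle\ystar,G(x)\rangle\,\ystar-E\,G(x)\big)$, where $\Lambda_x:=W_{d,+,x}\cdots W_{1,+,x}$ is the product of the layerwise linearizations $W_{i,+,x}:=\diag\!\big(W_i\Lambda_{i-1,x}x>0\big)W_i$ with $\Lambda_{0,x}:=I_k$, so $G(x)=\Lambda_x x$; different subgradient selections alter $\Lambda_x$ only on the measure-zero set where a preactivation vanishes, and since $\Lambda_x$ is constant along rays it suffices to analyze $v_x$ as a cubic-plus-linear polynomial in $\|x\|_2$ on each ray through the origin.

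First I would record the two probabilistic inputs. Under Assumption~\ref{hyp:randW}, each $W_i$ satisfies the Weight Distribution Condition with constant $\eps$ with probability at least $1-8n_i e^{-\gamma_\eps n_{i-1}}$, $\gamma_\eps$ polynomial in $\eps$; a union bound over the $d$ layers produces the term $\sum_{i=1}^d 8n_i e^{-\gamma_\eps n_{i-1}}$ in the failure probability. Second, a uniform noise bound: with probability at least $1-2e^{-k\log n}$ one has $\|\Pi E\Pi\|\le\omega$ for every orthogonal projection $\Pi$ onto a subspace realizable as $\mathrm{range}(\Lambda_x)$. This uses that $G$ is piecewise linear with at most $3\,n_1^d n_2^{d-1}\cdots n_{d-1}^2 n$ pieces, on each of which $\mathrm{range}(\Lambda_x)$ is a fixed subspace of dimension at most $k$: on such a subspace $\Pi(\Sigma_N-\Sigma)\Pi$ obeys the standard sample-covariance deviation inequality, yielding the $\max\{\sqrt{k\log(\cdot)/N},\,k\log(\cdot)/N\}$ rate with the $\log$ coming from the union bound over pieces, while $\Pi\mathcal H\Pi$, the compression of a GOE$(n)$ matrix to a $k$-dimensional subspace, has operator norm $\lesssim\sqrt{k\log(\cdot)/n}$ by GOE tail bounds; the union-bound parameter is tuned so that the residual probability equals $2e^{-k\log n}$.

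Next I would introduce and analyze the deterministic field. The WDC yields the deterministic approximations $\Lambda_x^\T\Lambda_x\approx 2^{-d}I_k$ and $\Lambda_x^\T\Lambda_{\xstar}\xstar\approx 2^{-d}$ times an explicit vector in $\mathrm{span}(x,\xstar)$ whose coefficients depend on the iterated angle $\thbar_d:=g^{(d)}(\thbar_0)$, where $\thbar_0:=\angle(x,\xstar)$ and $g(\theta):=\cos^{-1}\!\left(\frac{(\pi-\theta)\cos\theta+\sin\theta}{\pi}\right)$ (as in \cite{HV17,HHHV18}); substituting these into $v_x$ and dropping $E$ defines $h_{x,\xstar}=4^{-d}\big(\alpha\,x+\beta\,\xstar\big)$ for explicit scalars $\alpha,\beta$ depending on $\|x\|_2,\|\xstar\|_2,\thbar_0$. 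I would then verify, by elementary calculus in the two scalar variables $\big(\|x\|_2/\|\xstar\|_2,\thbar_0\big)$, that $\|h_{x,\xstar}\|_2\gtrsim 4^{-d}\|\xstar\|_2^3\,\psi(\|x\|_2/\|\xstar\|_2,\thbar_0)$ for a nonnegative $\psi$ that vanishes only at $x=\xstar$, at $x=-\rho_d\xstar$ with $\rho_d=\cos(g^{(d)}(\pi))\to 1$ (the minimizer of $f$ restricted to the ray $\thbar_0=\pi$), and at $x=0$; near $\xstar$ one has $\psi\gtrsim\|x-\xstar\|_2/\|\xstar\|_2$, near $-\rho_d\xstar$ the degeneracy is quadratic so $\psi\gtrsim\big(\|x+\rho_d\xstar\|_2/\|\xstar\|_2\big)^2$, while on $\mathcal{B}(0,\|\xstar\|_2/16\pi)\setminus\{0\}$ one instead uses $\langle x,h_{x,\xstar}\rangle=\|G(x)\|_2^4-\langle\ystar,G(x)\rangle^2<0$, which holds because the contraction of $g$ toward $0$ keeps $\langle\ystar,G(x)\rangle$ bounded below by a positive multiple of $2^{-d}\|x\|_2\|\xstar\|_2$. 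Comparing values along the rays through $\xstar$ and $-\rho_d\xstar$ also gives that $f$ near $-\rho_d\xstar$ strictly exceeds $f$ near $\xstar$, and $h_{0,\xstar}=0$ with $\nabla f$ vanishing at $G(0)=0$ yields $D_{-v}f(0)=0$.

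Finally I would quantify $\|v_x-h_{x,\xstar}\|_2$: the accumulated WDC error compounds multiplicatively across the $d$ layers and, after accounting for the contraction of $g$ near its fixed points, is of order $4^{-d}d^{14}\eps^{1/2}\|\xstar\|_2^3$, while the dropped noise term satisfies $\|\Lambda_x^\T E G(x)\|_2=\|\Lambda_x^\T\Pi E\Pi\Lambda_x x\|_2\le\|\Lambda_x\|_{\mathrm{op}}^2\|\Pi E\Pi\|\,\|x\|_2\lesssim 2^{-d}\omega\|x\|_2$. Since both bounds are uniform in $x$ and in the subgradient selection, the entire set $\pa f(x)$ lies in a ball about $h_{x,\xstar}$ of radius below a fixed fraction of $\|h_{x,\xstar}\|_2$ whenever $x\notin\mathcal{B}(\xstar,r_+)\cup\mathcal{B}(-\rho_d\xstar,r_-)\cup\{0\}$, where $r_+$ is dictated by the linear lower bound on $\psi$ near $\xstar$ — of order $\big(d^{14}\eps^{1/2}+2^d d^{10}\omega\|\xstar\|_2^{-2}\big)\|\xstar\|_2$, matching the stated $r_+$ — and $r_-$ by the quadratic lower bound near $-\rho_d\xstar$, whence the extra square roots $\eps^{1/4}$ and $\omega^{1/2}$. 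For any two $w,\tilde w$ in such a ball $\langle w,\tilde w\rangle>0$, so $D_{-v_x}f(x)=-\langle\tilde v_x,v_x\rangle/\|v_x\|_2<0$ where $\tilde v_x\in\pa f(x)$ is the subgradient branch active in direction $-v_x$; and on $\mathcal{B}(0,\|\xstar\|_2/16\pi)\setminus\{0\}$ the same radius bound together with $\langle x,h_{x,\xstar}\rangle<0$ gives $\langle x,v_x\rangle<0$. The hypotheses $\eps\le K_1 d^{-96}$ and $\omega\le K_2\|\xstar\|_2^2 2^{-d}/d^{44}$ are exactly what make $r_\pm$ small enough, relative to the mutual separation of $\xstar$, $-\rho_d\xstar$ and $0$, for this landscape description to be nonvacuous. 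I expect the main obstacle to be the combination of the first two ingredients: tracking the WDC perturbation through $d$ compositions with explicit polynomial-in-$d$ control (which is what forces the $d^{-96}$ scaling), and establishing the uniform noise bound so that the sample/noise complexity scales with the latent dimension $k$ rather than with $n$ — for which one must genuinely exploit the piecewise-linear structure of $G$ to reduce to concentration of $\Sigma_N-\Sigma$ (resp.\ $\nu\mathcal H$) on polynomially many fixed $k$-dimensional subspaces.
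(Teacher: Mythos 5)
Your proposal is correct and follows essentially the same route as the paper: WDC-based concentration of the (Clarke) subgradients around a deterministic field $h_{x,\xstar}$ built from the angle-contraction map $g$, a zero-set analysis of $h_{x,\xstar}$ with linear degeneracy at $\xstar$ and quadratic degeneracy at $-\rho_d\xstar$ (giving the $\eps^{1/2},\omega$ versus $\eps^{1/4},\omega^{1/2}$ radii), and a uniform noise bound obtained by union-bounding sub-Gaussian/$\chi^2$ concentration over the finitely many linearization matrices $\Lambda_x$. The only imprecision is the count of linear pieces, which should be $(n_1^d n_2^{d-1}\cdots n_d)^{2k}$ distinct $\Lambda_x$ rather than $3\,n_1^d\cdots n$ — but your stated $k\log(\cdot)$ rates show you are implicitly using the correct exponential-in-$k$ count.
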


% Noise scaling
According to the theorem the subgradients of $f$ are direction of strict descent for any nonzero point outside the two Euclidean balls around $\xstar$ and $-\xstar$. Furthermore, there are no other spurious critical points or non-escapable saddles apart from the local maximum $x = 0$.

For small enough $\eps > 0$, the size of the two neighborhoods around $\xstar$ and $-\xstar$ is a function of the control parameter $\omega$.
This quantity is analogous to the effective SNR $\sigma^2 \sqrt{s \log(n)/N}$
which governs sharp transitions in Sparse PCA \cite{amini2008high}.
In particular, for the Wishart model (structured PCA problem), in the interesting regime $k \log(n) \lesssim N$ and at fixed depth $d$, the size of the ball around $\xstar$ shrinks at the optimal rate $\sqrt{k \log(n)/N}$, implying that a number of samples $N$ proportional to $k$ is sufficient for a consistent estimate. In the same fashion, for the Wigner model the size of the noise $\nu$
it is required to be inversely proportional to the optimal rate $\sqrt{k/n}$.

We note that the quantity $2^d$ in the hypothesis and conclusions of the theorem, is an artifact of the scaling of the network and it should not be taken as requiring exponentially small noise. Indeed under the assumptions on the weights 
specified above, these matrices have spectral norm approximately 1, while the application of the Relu function zeros out approximately half of the entries of its argument leading to an ``effective'' operator norm of approximately $1/2$. The other polynomial dependence on the depth $d$ are likely not optimal and optimizing the proof for superior dependence on $d$ would not drastically alter the fundamental theoretical advance. As we show in the numerical experiments the bounds are quite conservative and the actual dependence on the depth is much better in practice.

Having analyzed the behavior of the loss $f$ outside the two balls around $x$ and $-\rho_d \xstar$, the next proposition 
will describe the local properties of these two neighborhoods.

\begin{prop}\label{prop:rand_localmin}
	Let the assumptions of Theorem  \ref{thm:main_rand} be satisfied.
	
	A. For any $x \in \mathcal{B}(\xstar, r_+)$ and $y \in \mathcal{B}(- \rho_d \, \xstar, r_-)$ it holds that:
	\[
	f(x) < f(y)
	\]
	
	B. In addition, for $K_5$ and $K_6$ positive absolute constants:
	\begin{itemize}
		\item for the \textbf{Spiked Wishart Model}, for all $x \in \mathcal{B}(\xstar, r_+)$:
		\[
		\| G(x) - \ystar\|_2	\leq K_5 \Big(d^4 {\eps^{1/2}} +  \frac{\omega}{\| \ystar\|_2^2 } \Big) d^{10} \|\ystar\|_2,
		\]
		
		\item  for the \textbf{Spiked Wigner Model}, for all $x \in \mathcal{B}(\xstar, r_+)$:
		\[
		\| G(x) - \ystar\|_2	\leq K_6 \Big(d^4 {\eps^{1/2}} + \frac{\omega}{\| \ystar\|_2^2 } \Big) d^{10} \|\ystar\|_2,
		\]
		
	\end{itemize}
	
\end{prop}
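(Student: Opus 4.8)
The plan is to reduce the proposition to three facts that come with the proof of Theorem~\ref{thm:main_rand}, on its high-probability event: (i) the weight-distribution estimates controlling $\|G(x)\|_2^2$, $\langle G(x),G(z)\rangle$ and the activation-pattern linearizations $\Lambda_x$ (the linear map with $\Lambda_x x = G(x)$, equal to the Jacobian of $G$ near $x$) up to multiplicative error $O(\mathrm{poly}(d)\,\eps^{1/2})$; (ii) the norm-propagation bounds $\|G(\xstar)\|_2 = \|\ystar\|_2$ and $\|\xstar\|_2 \asymp 2^{d/2}\|\ystar\|_2$; and (iii) the noise control $|\langle G(z)G(z)^\T - \ystar\ystar^\T,\, M - \ystar\ystar^\T\rangle_F| \le c\,\omega\,\|\ystar\|_2^2$, uniform over $z \in \mathcal{B}(\xstar,r_+) \cup \mathcal{B}(-\rho_d\xstar,r_-)$ --- the $k\log(\cdots)$ in $\omega$ enters through a net over $\mathrm{range}(G)$ and the $\le k$-dimensionality of the column spaces of $DG(z)$, through which the noise part of $\nabla f$ factors. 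Write $E := M - \ystar\ystar^\T$ and $R(x) := G(x)G(x)^\T - \ystar\ystar^\T$; since $G(\xstar) = \ystar$,
\[
f(x) - f(\xstar) \;=\; \tfrac14\|R(x)\|_F^2 \;-\; \tfrac12\langle R(x),E\rangle_F .
\]
Under $\eps \le K_1 d^{-96}$ and $\omega \le K_2\|\xstar\|_2^2\,2^{-d}/d^{44}$ one checks $r_+, r_- \le \mathrm{poly}(d)^{-1}\|\xstar\|_2$, so both balls avoid $0$ and lie in the ``local'' regime of (i).

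\textit{Part B.} Fix $x \in \mathcal{B}(\xstar,r_+)$. Since $\|x - \xstar\|_2 \le r_+ \ll \|\xstar\|_2$, we have $\angle(x,\xstar) = O(r_+/\|\xstar\|_2)$, and the local near-isometry of expansive random ReLU networks --- $\|\Lambda_\xstar\|_{op} \le C\,2^{-d/2}$ and $\|\Lambda_x - \Lambda_\xstar\|_{op} \lesssim 2^{-d/2}\bigl(\angle(x,\xstar) + \mathrm{poly}(d)\,\eps^{1/2}\bigr)$, both $\lesssim 2^{-d/2}$ in the present range --- gives, via $G(x) - \ystar = \Lambda_\xstar(x - \xstar) + (\Lambda_x - \Lambda_\xstar)x$, the bound $\|G(x) - \ystar\|_2 \le C'\,2^{-d/2}\,r_+$. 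Substituting $r_+ = K_3(d^{14}\eps^{1/2} + 2^d d^{10}\omega\|\xstar\|_2^{-2})\|\xstar\|_2$ and using $2^{-d/2}\|\xstar\|_2 \lesssim \|\ystar\|_2$ and $2^{d/2}\|\xstar\|_2^{-1} \lesssim \|\ystar\|_2^{-1}$ turns this into $\|G(x) - \ystar\|_2 \lesssim d^{14}\eps^{1/2}\|\ystar\|_2 + d^{10}(\omega/\|\ystar\|_2^2)\|\ystar\|_2$, which is the claimed bound. The two models are treated identically; the only model-specific quantity is $\omega$, and it appears only through (iii).

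\textit{Part A.} For $x \in \mathcal{B}(\xstar,r_+)$: since $\|G(x)\|_2 \le 2\|\ystar\|_2$ on this ball, $\|R(x)\|_F \le (\|G(x)\|_2 + \|\ystar\|_2)\|G(x) - \ystar\|_2 \le 3\|\ystar\|_2\cdot C''\bigl(d^{14}\eps^{1/2} + d^{10}\omega\|\ystar\|_2^{-2}\bigr)\|\ystar\|_2$ by Part~B, so with (iii),
\[
f(x) - f(\xstar) \;\le\; \tfrac14\|R(x)\|_F^2 + \tfrac12 c\,\omega\|\ystar\|_2^2 \;=:\; \alpha_+ \;\le\; \mathrm{poly}(d)\,\|\ystar\|_2^4\bigl(\eps + \omega^2\|\ystar\|_2^{-4} + \omega\|\ystar\|_2^{-2}\bigr).
\]
For $y \in \mathcal{B}(-\rho_d\xstar,r_-)$, by contrast, $R(y)$ is not small. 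Let $\zeta_d > 0$ be the value of $\|R(\cdot)\|_F^2/\|\ystar\|_2^4$ that (i) predicts at $x = -\rho_d\xstar$ --- a quantity depending only on $d$, obtained by running the angle recursion from input angle $\pi$ (explicitly $\rho_d = \cos\bar\theta_d(\pi)$ and $\zeta_d = 1 - \cos^4\bar\theta_d(\pi) \asymp d^{-2}$), and strictly positive: this is exactly the computation that produces the spurious neighbourhood in Theorem~\ref{thm:main_rand}. Estimates (i), together with the analogue of Part~B's local bound around $-\rho_d\xstar$ (valid since $r_- \ll \rho_d\|\xstar\|_2$), give $\bigl|\,\|R(y)\|_F^2 - \zeta_d\|\ystar\|_2^4\,\bigr| \le \mathrm{poly}(d)\,\|\ystar\|_2^4\bigl(\eps^{1/4} + \sqrt{\omega}\,\|\ystar\|_2^{-1}\bigr) \le \tfrac12\zeta_d\|\ystar\|_2^4$, the last step by the hypotheses --- this is where the $d^{-96},d^{-44}$ thresholds are spent, since $\zeta_d$ is only of size $d^{-2}$ while the errors carry $\eps^{1/4}$ and $\sqrt\omega$. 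Hence, again with (iii), $f(y) - f(\xstar) \ge \tfrac18\zeta_d\|\ystar\|_2^4 - \tfrac12 c\,\omega\|\ystar\|_2^2 \ge \tfrac1{16}\zeta_d\|\ystar\|_2^4$ (as $\omega \ll d^{-2}\|\ystar\|_2^2$), while $\alpha_+ \ll \zeta_d\|\ystar\|_2^4$; subtracting, $f(x) - f(\xstar) < f(y) - f(\xstar)$, i.e.\ $f(x) < f(y)$.

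\textit{Main obstacle.} Everything reduces to the lower bound $\|R(y)\|_F^2 \gtrsim \zeta_d\|\ystar\|_2^4$ near $-\rho_d\xstar$, i.e.\ to establishing the strictly positive \emph{deterministic} gap between the loss near $-\rho_d\xstar$ and near $\xstar$ and verifying that it dominates the radius-dependent errors. The gap is read off from the angle recursion at input angle $\pi$ --- the same analysis that locates the spurious region in Theorem~\ref{thm:main_rand} --- and because it is only $\asymp d^{-2}$, it is exactly this comparison that forces the aggressive smallness conditions on $\eps$ and $\omega$. Part~B is routine once the concatenated-weight estimates are available; the sole delicate point there is to bound $\|G(x) - \ystar\|_2$ through the operator norms of $\Lambda_\xstar$ and $\Lambda_x - \Lambda_\xstar$ (so that the $O(\eps^{1/2})$ relative errors multiply the small quantity $r_+$ rather than $\|\xstar\|_2$), which is what yields the sharp $\eps^{1/2}$ rate rather than $\eps^{1/4}$.
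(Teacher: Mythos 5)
Your proposal is correct and follows essentially the same route as the paper: Part B is the local Lipschitz bound $\|G(x)-\ystar\|\lesssim 2^{-d/2}\|x-\xstar\|$ combined with $r_+$ and the two-sided relation $\|\ystar\|\asymp 2^{-d/2}\|\xstar\|$, and Part A rests on the same deterministic gap $1-\rho_d^2\gtrsim d^{-2}$ between the WDC-surrogate loss values near $-\rho_d\xstar$ and near $\xstar$, dominating the $\eps$- and $\omega$-dependent errors and the noise term $|\langle R,H\rangle_F|\lesssim\omega 2^{-d}(\|x\|^2+\|\xstar\|^2)$. The only (cosmetic) difference is that you phrase the comparison in output space via $\|R(\cdot)\|_F^2$ while the paper works with the latent-space surrogate $f_E$ through $\htilde_x$; these are the same computation.
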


The previous results imply that, for $\eps$ small enough and under the assumptions on the noise level $\omega$, any point $x$ in the benign neighborhood $\mathcal{B}(\xstar, r_+)$ has reconstruction error $\| G(x) - \ystar\|$ which scales optimally according to \eqref{eq:ws_scaling} or \eqref{eq:wg_scaling}.

\subsection{Proofs outline and techniques}

%consider the minimization problem \eqref{eq:minM} with: $$M = G(\xstar)G(\xstar)^\T + H,$$ for an unknown symmetric matrix $H \in \R^{n \times n}$ and nonzero $\xstar \in \R^k$. 

The bulk of the analysis will be based on deterministic conditions on 
the weights of the network.
In particular we leverage a set of techinical results 
recently introduced by \cite{hand2017global}.

For $W \in \R^{n \times k}$ and $x \in \R^k$, define the operator $W_{+,x} := \diag(W x > 0) W$ such that $\relu(W x) = W_{+,x} x$.  Moreover let $W_{1,+,x} = (W_1)_{+,x} = \diag(W_1 x > 0) W_1$, and for $ 2 \leq i \leq d$:
\[
W_{i,+,x} = \diag(W_i, \Pi_{j=i-1}^{1} W_{j,+,x}  x > 0)W_i, 
\]
where $\Pi_{i=d}^{1} W_{i} = W_d W_{d-1} \dots W_1$. 
Finally we let $\Lambda_x=~\Pi_{j=d}^{1} W_{j,+,x} $
and note that $G(x) = \Lambda_x x$.

\begin{definition}[Weight Distribution Condition \citep{hand2017global}]\label{def:WDC}
	We say that $W \in \R^{n \times k}$ satisfies the \textbf{Weight Distribution Condition (WDC)} with constant $\epsilon > 0$ if for all $x_1, x_2 \in \R^k$:
	\[
	\| W_{+,x_1}^\T W_{+,x_2} - Q_{x_1,x_2}\|_2 \leq \eps,
	\] 
	where
	\[
	Q_{x_1,x_2} = \frac{\pi - \theta_{x_1,x_2}}{2 \pi} I_k + \frac{\sin \theta_{x_1,x_2}}{2 \pi} M_{\hat{x}_2\leftrightarrow \hat{x}_2}
	\]
	and $\theta_{x_1,x_2} = \angle (x_1,x_2)$, $\hat{x}_1 = x_1/\|x_1\|_2$, $\hat{x}_2 = x_2/\|x_2\|_2$, $I_k$ is the $k\times k$ identity matrix and $M_{\hat{x}_1\leftrightarrow \hat{x}_2}$ is the matrix that sends $\hat{x}_1 \mapsto \hat{x}_2$, $\hat{x}_2 \mapsto \hat{x}_1$, and with kernel span$(\{x_1,x_2\})^\perp$.
\end{definition}

Note that $Q_{x_1,x_2}$ is the expected value of $W_{+,x_1}^\T W_{+,x_2}$ when $W$ has rows $w_i \sim \mathcal{N}(0,I_k/n)$ and if $x_1 = x_2$ then $Q_{x_1,y_2}$ is an isometry up to the scaling factor $1/2$. 
This condition ensures that the angle between two 
vectors in the latent space is approximately preserved at the output layer and in turn guarantees the invertibility of the network. Under the Assumption \ref{hyp:randW}, \cite{hand2017global} shows that the WDC holds with high probability for all layers of the generative network $G$.

Next we observe that at a differentiable point the gradient of $f$, defined in \eqref{eq:minM}, is given by:
\begin{equation}\label{eq:gradx}
\nabla f(x) = \Lambda_x^\T [\Lambda_x x x^\T \Lambda_x^\T - M ] \Lambda_x x.
\end{equation}
Using the WDC, then, we demonstrate that $\nabla_x f(x)$ concentrates up to the noise level around a direction $h_x \in \R^k$ which is a continuous function of nonzero $x$ and $\xstar$.
%\begin{equation} \label{eq:h_def}
%h_x := \big[ \frac{1}{2^{2d}} x x^\T - \htilde_{x,\xstar} %\htilde_{x,\xstar}^\T \big] x,
%\end{equation}
%where $\htilde_{x,\xstar}$ is defined below and
%is a continuous function of $x$ and $\xstar$. The vector field  %$\htilde_{x,\xstar}$ depends on a function that controls how the 
%angles are contracted by the deep network, and defined as:
%\begin{equation}\label{eq:def_g}
%	g(\theta) := \cos^{-1}\big( \frac{(\pi - \theta) \cos \theta + \sin \theta}{\pi} \big)
%\end{equation}
%With this definition we let $\htilde_{x,\xstar}$ be:
%\[
%	\htilde_{x,\xstar} := \frac{1}{2^d} \big[ \big(\prod_{i=0}^{d-1} \frac{\pi - \bar{\theta}_i}{\pi} \big) \xstar + \sum_{i=1}^{d-1} \frac{\sin \bar{\theta}_i }{\pi}   \big(\prod_{j=i+1}^{d-1} \frac{\pi - \bar{\theta}_j}{\pi}\big) {\|\xstar\|} \hat{x} \big]
%\]
%where $\theta_i := g(\bar{\theta}_{i-1})$ for $g$ given by \eqref{eq:def_g} and $\theta_0 = \angle(x,y)$. For brevity of notation below we will use $\htilde_x = \htilde_{x,\xstar}$.
Furthermore using the characterization \eqref{eq:subdiff} of the Clarke subdifferential,
we show that this concentration extends also at non-differentiable points for subgradients.

A direct analysis then shows that any directional derivative of $f$ at zero is zero and
that $h_x$ is small in a neighborhood 
of $\xstar$ and its negative multiple $-\rho_d \xstar$. This in turn guarantees the existence of a descent direction in the complement of these sets.

Similarly, we use the WDC to show
that up to noise level, the loss function $f$ concentrates around:
\[
f_E(x) = \frac{1}{4} \Big( \frac{1}{2^{2d}} \|x\|^4_2 +  \frac{1}{2^{2d}} \|\xstar\|^4_2  - 2 \langle x, \tilde{h}_x\rangle^2 \Big).
\]
where $\htilde_{x,\xstar}$ is  continuous for nonzero $x$ and $\xstar$. 
Directly analyzing the properties of $f_E$ in a neighborhood of $\xstar$ and $-\rho_d \xstar$ allows to derive the first point of Proposition \ref{prop:rand_localmin}. The last part of Proposition \ref{prop:rand_localmin} follows by noticing that the generator $G$ is locally Lipschitz.

Finally we extend a technique of \cite{HHHV18} to control 
the noise term: in our case a Gaussian Orthogonal matrix $\mathcal{H}$ for the spiked Wigner model \eqref{eq:wigner_Y}, and the differene between the empirical and the population covariance $\Sigma_N - \Sigma$ for the spiked Wishart model. 
The analysis is based on a counting argument on the subspaces spanned by a depth $d$ generative networks, which leads to the sought rate-optimal bounds.
\section{A subgradient method and numerical experiments}\label{subsec:algo}

Informed by the analysis in the previous sections, we propose a gradient method for the solution of \eqref{eq:minM} and verify empirically its optimal properties. 

Recall that the main properties of the landscape of the minimization problem 
\eqref{eq:minM} were the global minimum in a neighborhood of the true latent vector $\xstar$ and a flat region in correspondence of $-\rho_d \xstar$. Moreover 
the latter region has larger loss function values than those in the vicinity of $\xstar$.  In order to overcome the non-convexity and avoid this bad region we run gradient descent from a random non-zero initialization and its negation. We then pick the iterate which has smaller final loss value.

\begin{algorithm}[H]
	\caption{Gradient method for the minizimization problem \eqref{eq:minM}}\label{alg:subgrad}
	\begin{algorithmic}[1]
		\State {\bfseries Input:} Weights $W_i$, observation matrix $M$, step size $\alpha > 0$, number of iterations $T$
		\State Choose $\hat{x}_0 \in \R^k \backslash \{0\}$ arbitrary
		\State Let $x_{0}^{(1)} = \hat{x}_0$ and $x_{0}^{(2)} = -\hat{x}_0$
		\For{i = 1, 2}
		\For{$j = 0, 1, \ldots, T-1$}
		\State Compute $v_{{x}_{j}}^{(i)} \in \pa f({x}_{j}^{(i)})$
		\State ${x}_{j+1}^{(i)} \gets {x}_{j}^{(i)} - \alpha v_{{x}_{j}}^{(i)}$;
		\EndFor
		\EndFor
		\If{$f({x}_{T}^{(1)}) < f({x}_{T}^{(2)})$}
		\State {Return:} {$x_{T}^{(1)}, G(x_{T}^{(1)})$}
		\Else
		\State {Return:} {$x_{T}^{(2)}, G(x_{T}^{(2)})$}
		\EndIf
	\end{algorithmic}
\end{algorithm}

Note that it will be highly unlikely that the iterates will be at a non-differentiable point, therefore 
in practice we can consider Algorithm \ref{alg:subgrad} with descent direction $v_{x} = \nabla f(x)$.
%\section{Numerical Experiments}

We verify our theoretical claims 
on synthetic generative priors. We  consider  2-layer generative networks with Relu activation functions, hidden layer of dimension $n_1 = 250$, output dimension $n = 1700$ and varying number of latent dimension $k \in [10, 30, 70]$. 
We randomly sample the weights of the matrix independently from $\mathcal{N}(0, 2/n_i)$, which removes that $2^d$ dependence in Theorem \ref{thm:main_rand}.
We then consider data $Y$ according the spiked models \eqref{eq:wishart_Y} and \eqref{eq:wigner_Y}, where $\xstar \in \R^k$ is chosen so that $\ystar = G(\xstar)$ has unit norm. For the Wishart model we vary the samples $N$ while for the Wigner model we vary the noise level $\nu$ so that the following quantities remain constant for the different networks (latent dimension $k$):
\[
\theta_{\text{\tiny WS}}:=\sqrt{k\log(n_1^2\,n)/N}, \qquad  \theta_{\text{\tiny WG}}:=\nu \sqrt{k\log(n_1^2\,n)/n}
\]
We then plot the reconstruction error 
given by $\|G(x) - \ystar \|_2$ against $\theta_{\text{\tiny WS}} \approx  \sqrt{\frac{k}{N}}$ and $\theta_{\text{\tiny WG}} \approx \nu\sqrt{\frac{k}{n}}$.
As predicted by Theorem \ref{thm:main_rand} the errors scale linearly with respect to these control parameters, and moreover all the plots overlap confirming that these rates are tight with respect to the order of $k$.

\begin{figure}[h]%
	\centering
	\subfloat{{\includegraphics[width=6.5cm]{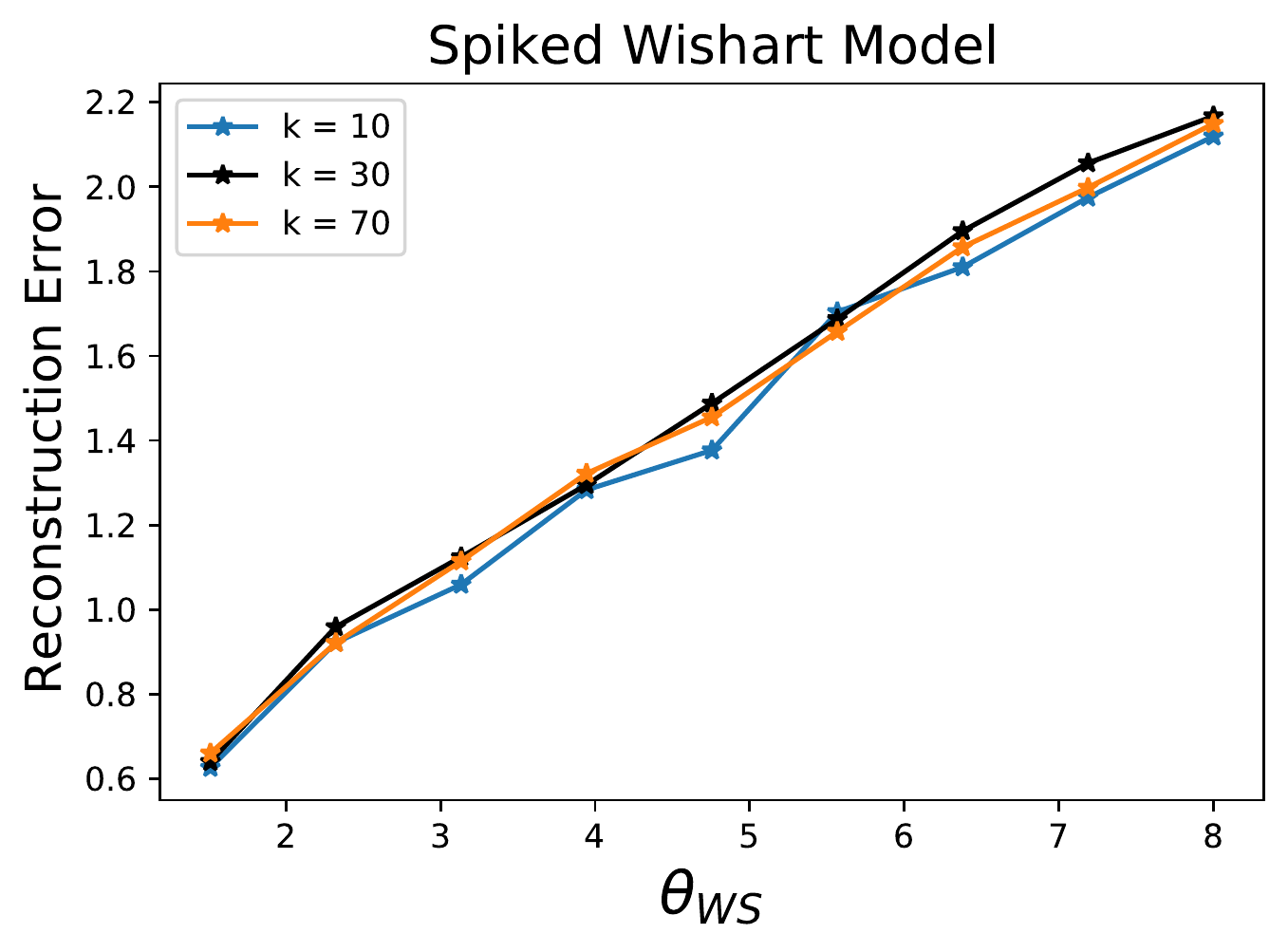} }}%
	\qquad
	\subfloat{{\includegraphics[width=6.5cm]{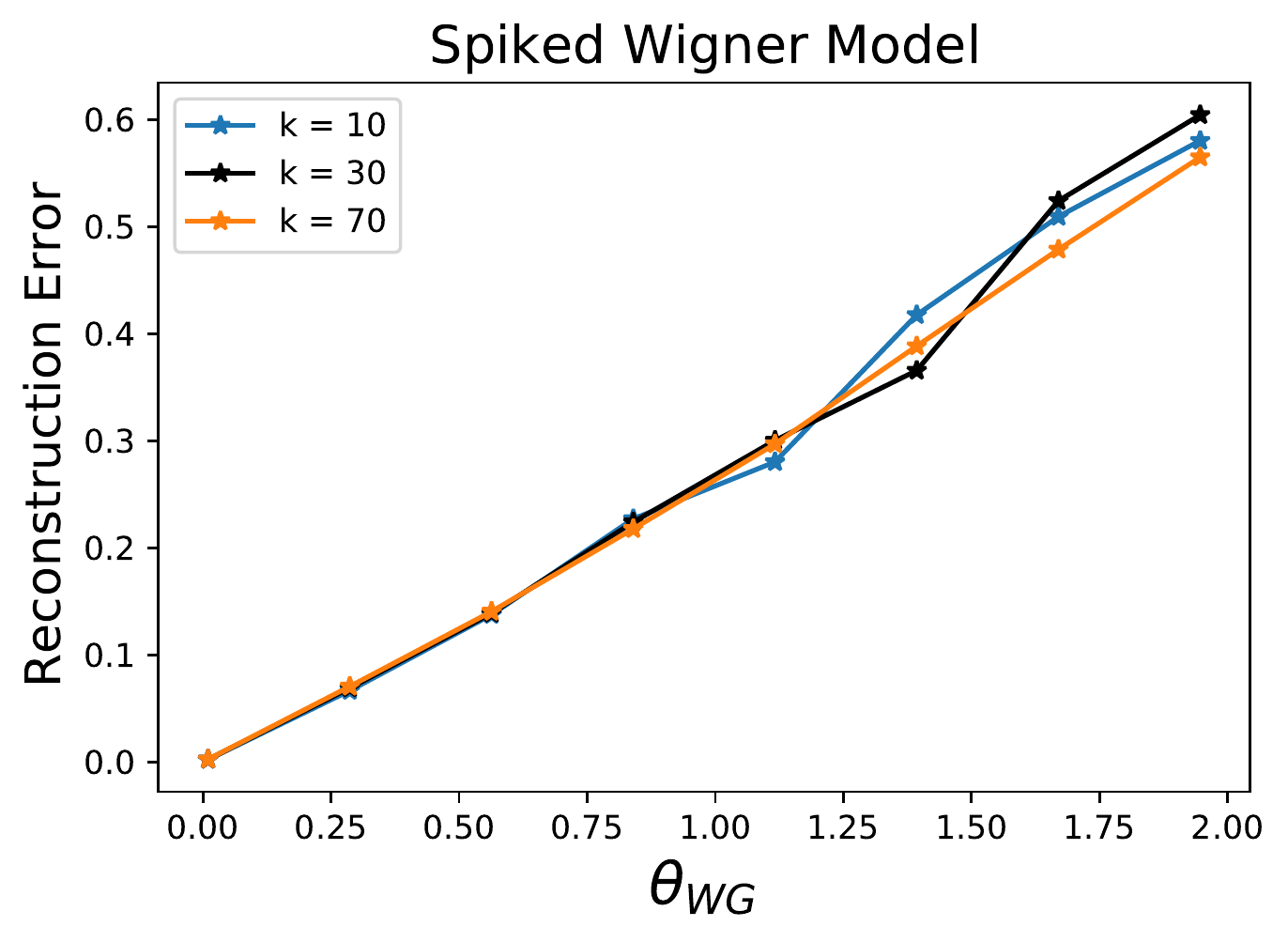} }}%
	\caption{Reconstruction error for the recovery of a spike $\ystar = G(\xstar)$ in Wishart and Wigner model with random generative network priors. Average over 50 random drawing of the network weights and samples. These plots demonstrate that the reconstruction error follow closely our theory.}%
	\label{fig:example}%
\end{figure}

%\begin{figure}[h]
%	\centering
%	\includegraphics[width=0.9\linewidth]{imgs/WIG_3Lvs2L_150dpi}
%	\caption{MSE as a function of the scaling parameter}
%	\label{fig:WIG_3Lvs2L_150dpi}
%\end{figure}

%\section*{Acknowledgements}
%PH is supported by NSF CAREER Grant DMS-1848087 and NSF Grant DMS-2022205.

\bibliography{references}

\newpage
\appendix
\section{Global landscape analysis under deterministic conditions}

As mentioned, the proof of Theorem \ref{thm:main_rand} and Proposition \ref{prop:rand_localmin}, will be based on deterministic conditions on 
the weights of the network and the noise matrix.
In particular we will consider the minimization problem \eqref{eq:minM} with: \[M = G(\xstar)G(\xstar)^\T + H,\] for an unknown symmetric matrix $H \in \R^{n \times n}$ and nonzero $\xstar \in \R^k$. 

Recall the definition \ref{def:WDC} of the WDC.
Below we will say that a $d$-layer generative network $G$ of the form \eqref{eq:Gx}, satisfies the WDC with constant $\eps>0$ if every weight matrix $W_i$ has the WDC with constant 
$\eps$ for all $i = 1, \dots d$.

We can now describe the landscape of the minimization problem \eqref{eq:minM} in a deterministic settings. 
\begin{thm}\label{thm:dir_deriv}
	
	Consider a generative network $G: \R^k \to \R^n$ as in \eqref{eq:Gx} and 
	the minimization problem \eqref{eq:minM} with unknown nonzero $\xstar$ and symmetric $H$.
	Fix $\eps > 0$ such that $K_1 d^{16} \sqrt{\eps} \leq 1$ and let $d \geq 2$. 
	Suppose that $G$ satisfies the WDC with constant $\eps$ and assume that:
	\begin{equation}\label{eq:noise}
	\|\Lambda_x^\T H \Lambda_x \|_2 \leq \frac{\omega}{2^d}  \qquad \text{for all}\; x \in \R^k, % \frac{\|\xstar\|^2}{2^{2d}} \, \omega
	\end{equation}
	with $2^d d^{12} w \leq K_2 \| \xstar \|^2$ and $K_2 < 1$. 
	
	Then for all $x \in \R^k$:
	\begin{itemize}
		\item if $x \notin \mathcal{B}(\xstar, r_+)~\cup~\mathcal{B}(-\rho_d \xstar, r_-)~\cup~\{0\}$ and \\$v_x \in \pa f(x)$:
		\[
		D_{-v_x} f(x) < 0
		\]
		where 
		\[
		r_+ = K_3 (d^4 {\eps^{1/2}} + 2^d \omega  \|\xstar\|_2^{-2}) d^{10} \|\xstar\|_2,
		\]
		and
		\[
		r_- =  K_4 (d^2 {\eps}^{1/4} + 2^{d/2} \omega^{1/2} \|\xstar\|_2^{-1}) d^{10}  \|\xstar\|
		\]
		\item if $x \in \mathcal{B}(0,  \|\xstar\|_2/16 \pi) \backslash \{0\}$  and $v_x \in \pa f(x)$ then 
		\[\langle x, v_x\rangle < 0\]
		while if $x = 0$ and $v \in \mathcal{S}^{k-1}$ then 
		\[D_{-v} f(0) = 0\]
	\end{itemize} 
	Here $\rho_d$ is a positive number that converges to 1 as $d \to \infty$ and $K_1, \dots, K_4$ are universal constants.
\end{thm}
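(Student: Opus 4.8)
The plan is to reduce the theorem to a purely deterministic study of a reference vector field, along the lines of \cite{hand2017global,HHHV18}. Writing $G(x)=\Lambda_x x$ and $M=\Lambda_{\xstar}\xstar\xstar^\T\Lambda_{\xstar}^\T+H$, at any differentiable point the gradient \eqref{eq:gradx} expands as
\[
\nabla f(x)=\|\Lambda_x x\|_2^2\,\Lambda_x^\T\Lambda_x x-\langle\Lambda_x x,\Lambda_{\xstar}\xstar\rangle\,\Lambda_x^\T\Lambda_{\xstar}\xstar-\Lambda_x^\T H\Lambda_x x .
\]
First I would invoke the iterated consequences of the WDC (Definition \ref{def:WDC}): composing the per-layer bound $\|W_{+,x_1}^\T W_{+,x_2}-Q_{x_1,x_2}\|_2\le\eps$ over the $d$ layers shows that $\Lambda_x^\T\Lambda_x$ is uniformly $\mathrm{poly}(d)\,\eps$-close to $2^{-d}I_k$ and that $\Lambda_x^\T\Lambda_{\xstar}\xstar$ is uniformly close (with a $\mathrm{poly}(d)\,\eps^{1/2}$ relative error) to $2^{-d}\,\check h_x$, where $\check h_x$ is an explicit continuous vector field supported on $\mathrm{span}\{x,\xstar\}$ built from the angle recursion $\theta_0=\angle(x,\xstar)$, $\theta_{i+1}=g(\theta_i)$ with $g(\theta)=\cos^{-1}\!\big(((\pi-\theta)\cos\theta+\sin\theta)/\pi\big)$ and the partial products $\prod_i\frac{\pi-\theta_i}{\pi}$. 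Combined with the noise hypothesis \eqref{eq:noise}, this shows $\nabla f(x)$ concentrates around a deterministic field $h_x$ (equal to the gradient of the surrogate $f_E$ appearing in the text), with an error governed by $\eps$ and by \eqref{eq:noise}.

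Next I would upgrade this concentration from the gradient to every Clarke subgradient. Since $f$ is continuous and piecewise smooth with $\pa f(x)=\text{conv}\{v_1,\dots,v_T\}$, each $v_j$ is the gradient of a smooth selection whose operator differs from $W_{+,x}$ on a negligible set of coordinates, so the WDC-based bounds still apply; by the argument of \cite{HHHV18} the concentration extends to each $v_j$, hence to every $v_x\in\pa f(x)$ by convexity. This reduces the whole statement to (i) a deterministic study of $h_x$: showing it vanishes only near $\xstar$ and near a specific negative multiple $-\rho_d\xstar$, with a quantitative lower bound on $\|h_x\|_2$ elsewhere and good alignment with every subgradient; and (ii) an analysis near the origin.

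For the landscape of $h_x$ I would work in the plane $\mathrm{span}\{x,\xstar\}$, which contains $h_x$. Decomposing $h_x$ along $x$ and along the component of $\xstar$ orthogonal to $x$, the equation $h_x=0$ is driven by the scalars produced by the angle recursion, and a direct analysis shows its only solutions are $(\theta,\|x\|_2)\approx(0,\|\xstar\|_2)$ (the point $\xstar$) and $(\theta,\|x\|_2)\approx(\pi,\rho_d\|\xstar\|_2)$, where $\rho_d\in(0,1]$ is the relevant fixed-point datum of the recursion and $\rho_d\to1$ as $d\to\infty$. Outside $\mathcal B(\xstar,r_+)\cup\mathcal B(-\rho_d\xstar,r_-)\cup\{0\}$ one obtains $\|h_x\|_2\gtrsim\mathrm{poly}(d)^{-1}2^{-2d}\|x\|_2^3$ and $\langle v_x,h_x\rangle>0$, whence $D_{-v_x}f(x)<0$. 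The radii $r_\pm$ are fixed by requiring this lower bound to dominate the error from the first two steps: near $\xstar$ the field is nondegenerate to first order in $x-\xstar$, giving $r_+=K_3(d^4\eps^{1/2}+2^d\omega\|\xstar\|_2^{-2})d^{10}\|\xstar\|_2$, while near $-\rho_d\xstar$ the leading term of $h_x$ cancels, so a second-order expansion in the radial and angular displacements yields the weaker $r_-=K_4(d^2\eps^{1/4}+2^{d/2}\omega^{1/2}\|\xstar\|_2^{-1})d^{10}\|\xstar\|_2$. For the origin, $f(tv)-f(0)=O(t^2)$ directly from \eqref{eq:minM}, so $D_{-v}f(0)=0$; and for $0<\|x\|_2\le\|\xstar\|_2/16\pi$ the quartic terms in $\nabla f(x)$ are $O(\|x\|_2^3)$ and negligible against the cross term, forcing $\langle x,v_x\rangle<0$ and identifying $0$ as a strict local maximum along rays.

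The main obstacle is twofold. First, the bookkeeping that converts the iterated WDC into the field $h_x$ with a bound that is \emph{uniform in $x\in\R^k$} and carries the correct $2^{-d}$ scalings and powers of $d$: this is the source of all $d$-dependence in the hypotheses and in $r_\pm$, and it must be tight enough that the error is still beaten by $\|h_x\|_2$ throughout the descent region. Second, and more delicate, is the behaviour of $h_x$ near $-\rho_d\xstar$: unlike at $\xstar$, the leading-order contribution vanishes there, so pinning down $\rho_d$ (and proving $\rho_d\to1$) and extracting the square-root radius $r_-$ requires a genuine second-order analysis of the angle recursion and of its partial products. Everything else is a finite case split on the angle $\theta=\angle(x,\xstar)$.
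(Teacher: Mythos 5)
Your proposal follows essentially the same route as the paper: concentration of every Clarke subgradient around the deterministic field $h_x=\big[2^{-2d}xx^\T-\htilde_x\htilde_x^\T\big]x$ via the iterated WDC and the noise bound \eqref{eq:noise}, a planar zero-set analysis of $h_x$ through the angle recursion with the small-angle/large-angle split (the degeneracy at $\thbar_0\approx\pi$ producing the $\sqrt{\beta}$-type radius $r_-$), and the direct quartic-vs-quadratic computations at and near the origin. The only quibble is descriptive: $h_x$ is not literally $\nabla f_E$ (it ignores the $x$-dependence of $\htilde_x$), but nothing in your argument relies on that identification.
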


Similarly, below we give the deterministic version of Proposition \ref{prop:rand_localmin}.
\begin{prop}\label{prop:local_min}
	Under the assumptions of Theorem \ref{thm:dir_deriv}, for any $\phi_d \in [\rho_d, 1]$, it holds that:
	\begin{equation}\label{eq:localmin}
	f(x) < f(y)
	\end{equation}
	for all $x \in \mathcal{B}(\phi_d \xstar, \varrho \|\xstar\| d^{-12})$ and $y \in \mathcal{B}(- \phi_d \xstar, \varrho  \|\xstar\| d^{-12})$ where $\varrho<1$ is a universal constant.
\end{prop}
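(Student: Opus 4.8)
The plan is to reduce the comparison $f(x) < f(y)$ to a comparison of the deterministic surrogate $f_E$, and then verify that $f_E$ is strictly smaller near $\phi_d \xstar$ than near $-\phi_d \xstar$. First I would invoke the concentration estimates from the proof of Theorem \ref{thm:dir_deriv}: under the WDC with constant $\eps$ and the noise bound \eqref{eq:noise}, one has $|f(x) - f_E(x)| \leq C_d(\eps^{1/2} + 2^d\omega\|\xstar\|_2^{-2})\|\xstar\|_2^4$ uniformly on, say, $\mathcal{B}(0, 2\|\xstar\|_2)$, where $C_d$ is polynomial in $d$ and $f_E(x) = \tfrac14\big(2^{-2d}\|x\|_2^4 + 2^{-2d}\|\xstar\|_2^4 - 2\langle x, \tilde h_x\rangle^2\big)$. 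So it suffices to lower bound $f_E(y) - f_E(x)$ for $x \in \mathcal{B}(\phi_d\xstar, \varrho\|\xstar\|_2 d^{-12})$ and $y \in \mathcal{B}(-\phi_d\xstar, \varrho\|\xstar\|_2 d^{-12})$ by something strictly larger than twice the above error term — this is what forces $\varrho$ to be a sufficiently small universal constant and is where the $d^{-12}$ radius scaling enters.

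Next I would evaluate $f_E$ at the two centers. At $x_0 = \phi_d\xstar$: since $\hat x_0 = \hat\xstar$, the quantity $\tilde h_{x_0}$ (which I expect is built so that $\langle x_0, \tilde h_{x_0}\rangle$ collapses to essentially $2^{-2d}\phi_d\|\xstar\|_2^2$ up to the depth-dependent geometry of the angle recursion) gives $f_E(\phi_d\xstar)$ roughly $\tfrac14 2^{-2d}(\phi_d^4\|\xstar\|_2^4 + \|\xstar\|_2^4 - 2\phi_d^2\|\xstar\|_2^4) = \tfrac14 2^{-2d}\|\xstar\|_2^4(1-\phi_d^2)^2$. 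At $y_0 = -\phi_d\xstar$: here $\theta_{y_0,\xstar} = \pi$, and the angle propagates through the layers to some $\theta_d > 0$ bounded away from $\pi$ (this is exactly the mechanism giving $\rho_d \to 1$), so $\langle y_0, \tilde h_{y_0}\rangle^2$ is strictly smaller in magnitude than its $x_0$-counterpart — in fact the cross term for $y_0$ is governed by $\check\xi_d := \prod$ of the per-layer factors $\tfrac{\pi - \theta_i}{\pi}$ type quantities, and one gets $f_E(-\phi_d\xstar) - f_E(\phi_d\xstar) \geq c\, 2^{-2d}\|\xstar\|_2^4$ for a universal $c > 0$ once $\phi_d$ is near its optimum. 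I would isolate this strictly-positive gap $g_d := f_E(y_0) - f_E(x_0) \geq c_0 2^{-2d}\|\xstar\|_2^4$ as the key algebraic fact, drawing on the angle-contraction lemmas already used to establish $\rho_d$ in Theorem \ref{thm:dir_deriv}.

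Then I would control the variation of $f_E$ over each ball. Since $f_E$ is a fixed quartic in $x$ composed with the continuous map $x \mapsto \tilde h_x$, on $\mathcal{B}(0,2\|\xstar\|_2)$ it is Lipschitz with constant $O(2^{-2d}\|\xstar\|_2^3)$ times a polynomial in $d$ (the $\tilde h_x$ term contributes a further $d$-polynomial factor from the layerwise angle derivatives). Hence for $x \in \mathcal{B}(\phi_d\xstar, \varrho\|\xstar\|_2 d^{-12})$, $|f_E(x) - f_E(x_0)| \leq \mathrm{poly}(d)\cdot 2^{-2d}\|\xstar\|_2^4 \cdot \varrho d^{-12}$, and similarly near $y_0$; choosing $\varrho$ small enough (universal, after the $d^{-12}$ already absorbs the polynomial) makes each of these at most $g_d/4$. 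Combining: for such $x,y$,
\[
f(y) - f(x) \geq \big(f_E(y_0) - f_E(x_0)\big) - |f_E(y) - f_E(y_0)| - |f_E(x) - f_E(x_0)| - 2\sup|f - f_E| \geq g_d/2 - 2\sup|f - f_E|,
\]
and the noise hypothesis $2^d d^{12}\omega \leq K_2\|\xstar\|_2^2$ together with $K_1 d^{16}\sqrt\eps \leq 1$ guarantees $2\sup|f-f_E| < g_d/2$, giving $f(x) < f(y)$.

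The main obstacle I anticipate is pinning down the per-layer angle recursion precisely enough to certify a \emph{uniform-in-$d$} strictly positive gap $g_d/2^{-2d}\|\xstar\|_2^4 \geq c_0 > 0$: the cross term $\langle x, \tilde h_x\rangle$ involves a product over $d$ layers of the angle-dependent scalars, and one must show that starting from the opposite-sign angle $\pi$ this product stays bounded \emph{strictly below} the aligned-case product by a constant that does not degrade as $d \to \infty$. This is essentially the same analysis underlying the existence of $\rho_d$ and its convergence to $1$ in Theorem \ref{thm:dir_deriv}, so I would quote those per-layer lemmas rather than rederive them; the remaining work is bookkeeping to ensure the $d$-polynomial Lipschitz/error constants are dominated by the $d^{-12}$ radius and the noise assumption, which the stated hypotheses are calibrated to provide.
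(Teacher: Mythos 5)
Your overall strategy matches the paper's: compare $f$ to the surrogate $f_E$, evaluate $f_E$ at the two centers, control the variation over each ball, and absorb the WDC and noise errors. The genuine gap is in your ``key algebraic fact.'' Evaluating $f_E$ at the centers gives $f_E(\phi_d\xstar)=\tfrac14 2^{-2d}(1-\phi_d^2)^2\|\xstar\|^4$ (aligned case: $\xi=1$, $\zeta=0$, so $\langle x,\htilde_x\rangle=2^{-d}\phi_d\|\xstar\|^2$) and $f_E(-\phi_d\xstar)=\tfrac14 2^{-2d}(\phi_d^4+1-2\rho_d^2\phi_d^2)\|\xstar\|^4$ (anti-aligned case: $\xi=0$, $\zeta=\rho_d$), so the gap is exactly $g_d=\tfrac12 2^{-2d}\phi_d^2(1-\rho_d^2)\|\xstar\|^4$. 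Since $\rho_d\to 1$ as $d\to\infty$, this is \emph{not} bounded below by a universal constant times $2^{-2d}\|\xstar\|^4$: it degrades to zero. The obstacle you flag at the end therefore cannot be resolved in the form you hope for --- the very mechanism that makes $\rho_d\to 1$ kills the uniform-in-$d$ gap, and your intuition that the anti-aligned product stays ``bounded strictly below'' the aligned one by a fixed constant is backwards.

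The fix, which is what the paper does, is quantitative rather than qualitative: Lemma 17 of \cite{HHHV18} gives $1-\rho_d\geq (K(d+2))^{-2}$, so $g_d\gtrsim d^{-2}2^{-2d}\|\xstar\|^4$. One must then verify that every error term --- the $d^4\sqrt{\eps}$ concentration error of Lemma \ref{lemma:f0_conctr}, the $O(\pi d a)$ and $O(\pi d^3 a)$ variation terms of Lemma \ref{lemma:UPandLOW} for ball radius $a\|\xstar\|$, and the noise cross term $\tfrac{\omega}{2^d}(\|x\|^2+\|\xstar\|^2)$ (note $f=f_H+\tfrac14\|H\|_F^2$, so only this cross term needs control; your $2\sup|f-f_E|$ as written would include $\|H\|_F^2$, which need not be small, though the constant cancels in the difference) --- is dominated by $d^{-2}2^{-2d}\|\xstar\|^4$. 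This is precisely what forces $\eps\lesssim d^{-12}$, the hypothesis $2^d d^{12}\omega\leq K_2\|\xstar\|^2$, and the radius $\varrho\|\xstar\|d^{-12}$; with a uniform gap none of those $d^{-12}$ factors would be necessary. So your outline is structurally correct but fails at the step you yourself identified as the crux, and the repair propagates through all the bookkeeping.
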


The rest of the paper is organized as follows.
After summarizing the notation used throughout the paper in 
Section \ref{sec:Notation} and deriving concentration results for the subgradients from the WDC in Section \ref{subsec:prelim}, we give the proof of Theorem \ref{thm:dir_deriv} in Section \ref{sec:proof_det}. In Section
\ref{sec:proof_loc_min} we prove Proposition \ref{prop:local_min}, while Section
\ref{sec:supp_proof} contains the proofs of supplementary lemmas needed in the main results. 
Finally in Section \ref{sec:rand_proofs}, we derive the main Theorem \ref{thm:main_rand} and Proposition \ref{prop:local_min} from the corresponding deterministic ones by
controlling the noise terms and recalling a result of \cite{hand2017global} which shows that the WDC holds with high probability.

\subsection{Notation}\label{sec:Notation}

We now collect the notation that is used throughout the paper.  For any real number $a$, let $\relu(a) = \max(a, 0)$ and for any vector $v \in \R^n$, denote the entrywise application of $\relu$ as $\relu(v)$ . Let $\diag(W x > 0)$ be the diagonal matrix with $i$-th diagonal element equal to 1 if $(Wx)_i > 0$ and 0 otherwise.
For any vector $x$ we denote with $\|x\|$ its Euclidean norm 
and for any matrix  $A$ we denote with $\|A\|$ its spectral norm and with $\|A\|_F$ its Frobenius norm. The euclidean inner product between two vectors $a$ and $b$ is $\langle a,b \rangle$, while for 
two matrices $A$ and $B$ their Frobenius inner product will be denoted by $\langle A, B\rangle_F$.
For any nonzero vector $x \in \R^n$, let $\hat{x} = x/\|x\|$.  For a set $S$ we will write $|S|$ for its cardinality and $S^c$ for its complement.
Let $\mathcal{B}(x,r)$ be the Euclidean ball of radius $r$ centered at $x$, and $\mathcal{S}^{k-1}$ be the unit sphere in $\R^k$.
With $D_v f(x)$ we denote the (normalized) one-sided directional derivative of $f$ in direction $v$: $D_{v} f(x) = \lim_{t \to 0} \frac{f(x + t v) - f(x)}{t \|v\|_2}$.
We will write $\gamma = \Omega(\delta)$ to mean that 
there exists a positive constant $C$ such that $\gamma \geq C \delta$ and similarly $\gamma = \bigO(\delta)$ if $\gamma \leq C \delta$. Additionally we will use $a = b + O_1(\delta)$ when 
$\|a - b\| \leq \delta$, where the norm is understood to be the absolute value for scalars, the Euclidean norm for vectors and the spectral norm for matrices.

%Let $\theta_0 = \angle(x, \xstar)$ and for $i \geq 0$ let $\theta_{i+1} = g(\theta_{i})$ where $g$ is defined in \eqref{eq:def_g}.

\subsection{Preliminaries}\label{subsec:prelim}

At a differentiable point, the gradient of $f$ is given by \eqref{eq:gradx} and will be denoted by $\vtilde_x$ and .
By the WDC, $\vtilde_x$ concentrates up to the 
noise level around the direction $h_x \in \R^k$:
\begin{equation} \label{eq:h_def}
h_x := \big[ \frac{1}{2^{2d}} x x^\T - \htilde_{x,\xstar} \htilde_{x,\xstar}^\T \big] x,
\end{equation}
where $\htilde_{x,\xstar}$ is defined below and
is a continuous function of $x$ and $\xstar$. The vector field  $\htilde_{x,\xstar}$ depends on a function that controls how the 
angles are contracted by the deep network, and defined as:
\begin{equation}\label{eq:def_g}
g(\theta) := \cos^{-1}\big( \frac{(\pi - \theta) \cos \theta + \sin \theta}{\pi} \big)
\end{equation}
With this definition we let $\htilde_{x,\xstar}$ be:
\[
\htilde_{x,\xstar} := \frac{1}{2^d} \big[ \big(\prod_{i=0}^{d-1} \frac{\pi - \bar{\theta}_i}{\pi} \big) \xstar + \sum_{i=1}^{d-1} \frac{\sin \bar{\theta}_i }{\pi}   \big(\prod_{j=i+1}^{d-1} \frac{\pi - \bar{\theta}_j}{\pi}\big) {\|\xstar\|} \hat{x} \big]
\]
where $\theta_i := g(\bar{\theta}_{i-1})$ for $g$ given by \eqref{eq:def_g} and $\theta_0 = \angle(x,y)$. For brevity of notation below we will use $\htilde_x = \htilde_{x,\xstar}$. For later convenience we also  define the following vectors:
\begin{equation*}
\begin{aligned}
p_x &:= \Lambda_x^\T \Lambda_x x; \\
q_x &:= \Lambda_x^\T \Lambda_\xstar  \xstar; \\
\vbar_x &:=  [p_x p_x^\T - q_x q_x^\T] \,x; \\
\eta_x &:= \Lambda_x^\T H \Lambda_x \, x.
\end{aligned}
\end{equation*}
and note that when $f$ is differentiable at $x$, then $\vtilde_x:= \nabla f(x) = \vbar_x - \eta_x$, in particular for zero noise $\vtilde_x = \vbar_x$.

We now observe the following facts.
\begin{lemma}[Lemma 8 in \cite{HV17}]\label{lemma:conctrWDC}
	Suppose that $d\geq 2$ and the WDC holds with $\eps < 1/(16 \pi d^2)^2$, then for all nonzero $x, \xstar \in \R^k$,
	\begin{align}
	\langle\Lambda_x x, \Lambda_\xstar \xstar \rangle &\geq \frac{1}{4 \pi} \frac{1}{2^d} \|x\|_2 \|\xstar\|, \label{eq:LxLy}\\
	\| \Lambda_x^\T \Lambda_\xstar \xstar - \htilde_{x,\xstar} \| &\leq 24 \frac{d^3 \sqrt{\eps}}{2^d} \|\xstar\|,  \; \text{and} \label{eq:htilde_conctr} \\
	\|\Lambda_x\|^2 \leq \frac{1}{2^d} (1 + 2 \eps)^d\leq \frac{1+ 4 \eps d}{2^d}  &\leq \frac{13}{12} \frac{1}{2^d} \label{eq:Lx_bound}. 
	\end{align}
\end{lemma}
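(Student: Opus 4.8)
The plan is to obtain all three estimates by propagating the Weight Distribution Condition through the network layer by layer, i.e.\ by induction on the depth $d$. Write $z^x_i := \big(\prod_{j=i}^{1} W_{j,+,x}\big)x$ for the output of the first $i$ ReLU layers of $G$ on input $x$, so $z^x_0 = x$ and $z^x_d = \Lambda_x x = G(x)$, and set $\theta_i := \angle(z^x_i, z^\xstar_i)$. Since the sign pattern defining $W_{i,+,x}$ is exactly the one induced by $z^x_{i-1}$, the WDC at layer $i$ applied to the inputs $z^x_{i-1}, z^\xstar_{i-1}\in\R^{n_{i-1}}$ gives $W_{i,+,x}^\T W_{i,+,\xstar} = Q_{z^x_{i-1},z^\xstar_{i-1}} + O_1(\eps)$, and in particular $W_{i,+,x}^\T W_{i,+,x} = \tfrac12 I + O_1(\eps)$. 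From the latter,
\[
\|z^x_{i}\|_2^2 = (z^x_{i-1})^\T W_{i,+,x}^\T W_{i,+,x}\, z^x_{i-1} = \tfrac12\|z^x_{i-1}\|_2^2 + O_1\!\big(\eps\|z^x_{i-1}\|_2^2\big),
\]
and, using $M_{\hat z^x_{i-1}\leftrightarrow \hat z^\xstar_{i-1}} z^\xstar_{i-1} = \tfrac{\|z^\xstar_{i-1}\|_2}{\|z^x_{i-1}\|_2} z^x_{i-1}$ and the definition of $Q$,
\[
\langle z^x_{i}, z^\xstar_{i}\rangle = \tfrac{\pi-\theta_{i-1}}{2\pi}\langle z^x_{i-1}, z^\xstar_{i-1}\rangle + \tfrac{\sin\theta_{i-1}}{2\pi}\|z^x_{i-1}\|_2\|z^\xstar_{i-1}\|_2 + O_1\!\big(\eps\|z^x_{i-1}\|_2\|z^\xstar_{i-1}\|_2\big).
\]
Dividing the second identity by $\|z^x_i\|_2\|z^\xstar_i\|_2$ and substituting the first yields $\cos\theta_i = h(\theta_{i-1}) + O_1(C\eps)$ for an absolute constant $C$, where $h(\theta):=\tfrac{(\pi-\theta)\cos\theta + \sin\theta}{\pi} = \cos g(\theta)$.

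Estimate \eqref{eq:Lx_bound} follows immediately: iterating the norm identity gives $\|z^x_i\|_2^2 \le 2^{-i}(1+2\eps)^i\|x\|_2^2$, so $\|\Lambda_x\|^2 = \sup_{\|x\|_2=1}\|z^x_d\|_2^2 \le 2^{-d}(1+2\eps)^d$, and the last two inequalities are elementary since $\eps<(16\pi d^2)^{-2}$ forces $2\eps d<1$, whence $(1+2\eps)^d\le 1+4\eps d\le 13/12$. For \eqref{eq:LxLy}, note that $h$ is strictly decreasing on $[0,\pi]$ with $h(0)=1$, $h(\pi)=0$, that $g$ is strictly increasing, and that $h(\theta)\ge\cos\theta$ for all $\theta$. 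Then $\cos\theta_1 = h(\theta_0)+O_1(C\eps)\ge -C\eps$ gives $\theta_1\le \pi/2 + O(\eps)$; monotonicity of $h$ then gives $\cos\theta_2 = h(\theta_1)+O_1(C\eps)\ge h(\pi/2)-O(\eps) = \tfrac1\pi - O(\eps)$; and $\cos\theta_{i+1}\ge\cos\theta_i - C\eps$ propagates this to $\cos\theta_i\ge \tfrac1\pi - O(i\eps)$ for all $i\ge2$. Since also $\|z^x_i\|_2^2\ge 2^{-i}(1-2\eps)^i\|x\|_2^2$, for $d\ge2$ and $\eps<(16\pi d^2)^{-2}$,
\[
\langle\Lambda_x x, \Lambda_\xstar\xstar\rangle = \|z^x_d\|_2\|z^\xstar_d\|_2\cos\theta_d \ge \tfrac{1}{2^d}\big(1-O(d\eps)\big)\big(\tfrac1\pi - O(d\eps)\big)\|x\|_2\|\xstar\|_2 \ge \tfrac{1}{4\pi}\tfrac{1}{2^d}\|x\|_2\|\xstar\|_2 .
\]

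For \eqref{eq:htilde_conctr}, first run the same computation with every $O_1(\eps)$ deleted: each transpose $W_{i,+,x}^\T$ then sends a vector in $\mathrm{span}\{z^x_{i},z^\xstar_{i}\}$ to one in $\mathrm{span}\{z^x_{i-1},z^\xstar_{i-1}\}$ with coefficients read off from $Q$ and from $M_{\hat z^x\leftrightarrow\hat z^\xstar}z^\xstar = \tfrac{\|z^\xstar\|_2}{\|z^x\|_2}z^x$, and unrolling all $d$ layers shows that the resulting vector in $\mathrm{span}\{x,\xstar\}$ is precisely $\htilde_{x,\xstar}$ — its defining recursion is exactly the one produced by this peeling. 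To pass to the true network, write $\Lambda_x^\T\Lambda_\xstar\xstar - \htilde_{x,\xstar}$ as a telescoping sum whose layer-$k$ term is a product of actual operators for layers below $k$ (spectral norm $\lesssim 2^{-(k-1)}$) times the single-layer discrepancy at layer $k$ times a product of idealized operators for layers above $k$ (spectral norm $\lesssim 2^{-(d-k)}$) applied to the propagated copy of $\xstar$; summing over $k$ produces the $2^{-d}\|\xstar\|_2$ scaling.

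The crux is the single-layer discrepancy: the WDC only controls $W_{k,+,x}^\T W_{k,+,\xstar}$ against $Q$ at the \emph{actual} angle $\theta_{k-1}$, while the idealized operator uses $Q$ at the \emph{ideal} angle $\bar\theta_{k-1}$ (with $\bar\theta_0 = \angle(x,\xstar)$, $\bar\theta_i = g(\bar\theta_{i-1})$), so — $Q$ being Lipschitz in its angle — one must bound $|\theta_{k-1} - \bar\theta_{k-1}|$. From $\cos\theta_i = h(\theta_{i-1}) + O_1(C\eps)$ versus $\cos\bar\theta_i = h(\bar\theta_{i-1})$, with $|h'|<1$, one gets a recursion for $|\cos\theta_i - \cos\bar\theta_i|$; but recovering an angle gap from a $\cos$ gap near $\theta=0$ costs a square root, which is precisely why \eqref{eq:htilde_conctr} carries $\sqrt\eps$ rather than $\eps$ and why a polynomial factor in $d$ (here $d^3$) appears once these per-layer square-root losses are propagated and summed — the relevant quantitative inputs being the near-$0$ expansion $g(\theta) = \theta - \tfrac{\theta^2}{3\pi} + O(\theta^3)$ and the concavity of $\sqrt{\cdot}$. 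Making this angle-tracking estimate rigorous is the technical heart of the matter, and is carried out in \cite{HV17}.
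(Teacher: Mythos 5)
Your proposal is sound, but note that the paper does not actually prove the first two bounds: it quotes them verbatim from \cite{HV17} and only supplies the one-line argument for \eqref{eq:Lx_bound}, namely $\|\Lambda_x\|^2 \le \prod_{i}\|W_{i,+,x}\|^2 \le 2^{-d}(1+2\eps)^d$, which is exactly your norm recursion in disguise. What you do differently is to unpack the citation. Your layer-wise recursions for $\|z_i^x\|_2^2$ and for the angles, giving $\cos\theta_i = h(\theta_{i-1}) + O_1(C\eps)$ with $h = \cos\circ g$, are the correct mechanism behind \eqref{eq:LxLy}: the observations $h(\theta)\ge\cos\theta$, $h\ge 0$ on $[0,\pi]$, and $h(\pi/2)=1/\pi$ do yield $\cos\theta_d \ge 1/\pi - O(d\eps)$, and combined with $\|z_d^x\|_2\|z_d^{\xstar}\|_2 \ge 2^{-d}(1-2\eps)^d\|x\|_2\|\xstar\|_2$ this comfortably clears the $\tfrac{1}{4\pi}\tfrac{1}{2^d}$ threshold under $\eps<(16\pi d^2)^{-2}$. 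For \eqref{eq:htilde_conctr} your telescoping decomposition and your diagnosis of where the $\sqrt{\eps}$ and the $d^3$ factors originate (recovering an angle from a cosine gap near $0$ costs a square root, and these per-layer losses accumulate over the depth) identify the right structure, but you explicitly defer the quantitative angle-tracking estimate to \cite{HV17}; since the lemma is itself imported from that reference, this is acceptable, though it means your argument for that particular bound is a faithful reconstruction of the cited proof rather than a self-contained one.
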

\begin{proof}
	The first two bounds can be found in \citep[Lemma 8]{HV17}. The third bound follows noticing that the WDC implies:
	\[
	\|\Lambda_x \|^2 \leq \Pi_{i=d}^1 \| W_{i,+,x} \|^2 \leq \frac{1}{2^d} (1 + 2 \eps)^d \leq \frac{1 + 4 \eps d}{2^d} \leq \frac{13}{12}\frac{1}{2^d}
	\]
	where we used $\log(1+z) \leq z$ and $e^{z}\leq (1 + 2 z)$ for all $0\leq z \leq 1$.
\end{proof}

The next lemma shows that the noiseless gradient $\vbar_x$, concentrates around $h_x$.
\begin{lemma}\label{lemma:conctr_grad1}
	Suppose $d\geq 2$ and the WDC holds with $\eps < 1/(16 \pi d^2)^2$, then for all nonzero $x, \xstar \in \R^k$:
	\[
	\|\vbar_x - h_x \| \leq 86 \frac{ d^4 \sqrt{\eps}}{2^{2d}} \max(\|\xstar\|^2, \| x\|^2) \|x\|
	\]
\end{lemma}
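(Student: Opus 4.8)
The plan is to reduce the estimate to two first-order concentration facts supplied by Lemma~\ref{lemma:conctrWDC}, and then expand the two rank-one differences that appear in $\vbar_x$ and $h_x$. First I would introduce the errors $\delta_p := p_x - 2^{-d} x$ and $\delta_q := q_x - \htilde_x$. For $\delta_q$, inequality \eqref{eq:htilde_conctr} gives directly $\|\delta_q\| \le 24\, d^3 \sqrt\eps\, 2^{-d}\|\xstar\|$. For $\delta_p$, I would apply \eqref{eq:htilde_conctr} with $\xstar$ replaced by $x$ itself: the relevant angle is then $\theta_0 = \angle(x,x) = 0$, so, since $g(0) = 0$, all the $\bar\theta_i$ vanish and the explicit formula for $\htilde_{x,\xstar}$ collapses to $\htilde_{x,x} = 2^{-d} x$, whence $\|\delta_p\| \le 24\, d^3 \sqrt\eps\, 2^{-d}\|x\|$. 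I would also record the elementary bound $\|\htilde_x\| \le 2^{-d}\|\xstar\|$, which follows from the explicit formula for $\htilde_{x,\xstar}$ by a direct induction on $d$ (peel the outermost factor and use $(\pi-\bar\theta_i)/\pi \le 1$ and $\sin t \le t$); it is important to get this bound with constant $1$ rather than through $\|\Lambda_x\|\,\|\Lambda_\xstar\|$, since otherwise an extra factor of $d$ leaks into the constant.

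Next I would use the definitions of $\vbar_x$, $p_x$, $q_x$ and of $h_x$ in \eqref{eq:h_def} to write
\[
\vbar_x - h_x \;=\; \big(p_x p_x^\T - 2^{-2d} x x^\T\big)x \;-\; \big(q_x q_x^\T - \htilde_x \htilde_x^\T\big)x,
\]
so it suffices to bound each of the two terms. Substituting $p_x = 2^{-d} x + \delta_p$ and expanding, $(p_x p_x^\T - 2^{-2d} x x^\T)x = 2^{-d}\langle\delta_p,x\rangle\,x + 2^{-d}\|x\|^2 \delta_p + \langle\delta_p,x\rangle\,\delta_p$, whose norm is at most $2\cdot 2^{-d}\|\delta_p\|\,\|x\|^2 + \|\delta_p\|^2\|x\|$. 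Inserting the bound on $\|\delta_p\|$ and invoking the hypothesis $\eps < (16\pi d^2)^{-2}$ — so that $\eps \le \sqrt\eps/(16\pi d^2)$, which downgrades the quadratic-in-$\delta_p$ term to one of order $\sqrt\eps$ — this is at most a constant times $d^4 \sqrt\eps\, 2^{-2d}\|x\|^3$. The identical expansion with $q_x = \htilde_x + \delta_q$, controlling the leading term through $\|\htilde_x\| \le 2^{-d}\|\xstar\|$ in place of $2^{-d}\|x\|$, bounds the second term by a constant times $d^4 \sqrt\eps\, 2^{-2d}\|\xstar\|^2\|x\|$.

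Finally I would add the two estimates and bound both $\|x\|^3$ and $\|\xstar\|^2\|x\|$ by $\max(\|\xstar\|^2,\|x\|^2)\|x\|$, which yields $\|\vbar_x - h_x\| \lesssim d^4 \sqrt\eps\, 2^{-2d}\max(\|\xstar\|^2,\|x\|^2)\|x\|$; a careful pass through the constants in the two rank-one expansions gives a leading factor comfortably below $86$. The only real work is this constant- and $d$-power bookkeeping, and in particular making sure that the quadratic error terms — which a priori carry a factor $\eps$ rather than $\sqrt\eps$ — are absorbed correctly via the smallness of $\eps$; there is no conceptual obstacle beyond Lemma~\ref{lemma:conctrWDC} and the elementary bound $\|\htilde_x\| \le 2^{-d}\|\xstar\|$.
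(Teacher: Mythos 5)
Your proof is correct and follows essentially the same route as the paper's: split $\vbar_x - h_x$ into the $p_x$- and $q_x$-rank-one differences, control $q_x - \htilde_x$ via \eqref{eq:htilde_conctr} and $p_x - 2^{-d}x$ via the same inequality applied with $\xstar$ replaced by $x$, then expand and sum. The only cosmetic differences are that the paper uses a telescoping first-order factorization of each rank-one difference (so no quadratic remainder has to be absorbed via the smallness of $\eps$) and is content with $\|\htilde_x\| \leq (1+d/\pi)2^{-d}\|\xstar\|$, spending the one spare power of $d$ that the target $d^4$ affords over the $d^3$ in \eqref{eq:htilde_conctr}.
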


We now use the characterization of the Clarke subdifferential given in \eqref{eq:subdiff}, to derive a bound on the concentration of $v_x \in \pa f(x)$ around $h_x$ up to the noise level.
\begin{lemma}\label{lemma:conctr_grad2}
	Under the assumption of Lemma \ref{lemma:conctr_grad1}, and with $H$ satisfying \eqref{eq:noise}, for any $v_x \in \pa f(x)$:
	\[
	\|v_x - h_x \| \leq 86 \frac{ d^4 \sqrt{\eps}}{2^{2d}} \max(\|\xstar\|^2, \| x\|^2) \|x\| +   \frac{\omega}{2^{d}} \,  \| x\|
	\]
\end{lemma}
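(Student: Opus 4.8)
The plan is to deduce the bound for an arbitrary subgradient $v_x \in \pa f(x)$ from the differentiable-point estimate of Lemma~\ref{lemma:conctr_grad1} by a continuity argument, exploiting the structure \eqref{eq:subdiff} of the Clarke subdifferential. Fix a nonzero $x$. By \eqref{eq:subdiff}, $\pa f(x) = \text{conv}\{v_1,\dots,v_T\}$, where each $v_t$ is the gradient at $x$ of a smooth piece of $f$ that is active on a full-dimensional cell accumulating at $x$: the ReLU activation patterns of all $d$ layers partition $\R^k$ into finitely many polyhedral cones, on the interior of each of which $G$ is linear and $f$ a quartic polynomial (hence smooth). Consequently, for each generator $v_t$ one can choose a sequence $y_m \to x$ lying in the interior of the cone associated with $v_t$, so that $f$ is differentiable at every $y_m$ and $\nabla f(y_m) \to v_t$.

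For such a sequence, recall from Section~\ref{subsec:prelim} that $\nabla f(y_m) = \vbar_{y_m} - \eta_{y_m}$. Lemma~\ref{lemma:conctr_grad1} gives $\| \vbar_{y_m} - h_{y_m}\| \le 86\,\frac{d^4\sqrt{\eps}}{2^{2d}}\max(\|\xstar\|^2,\|y_m\|^2)\,\|y_m\|$, while the noise hypothesis \eqref{eq:noise} gives $\|\eta_{y_m}\| = \|\Lambda_{y_m}^\T H \Lambda_{y_m} y_m\| \le \|\Lambda_{y_m}^\T H \Lambda_{y_m}\|\,\|y_m\| \le \frac{\omega}{2^d}\|y_m\|$; together, by the triangle inequality, $\|\nabla f(y_m) - h_{y_m}\| \le 86\,\frac{d^4\sqrt{\eps}}{2^{2d}}\max(\|\xstar\|^2,\|y_m\|^2)\|y_m\| + \frac{\omega}{2^d}\|y_m\|$. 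Now let $m\to\infty$: since $x\ne 0$ we have $\|y_m\|\to\|x\|$ and $\max(\|\xstar\|^2,\|y_m\|^2)\to\max(\|\xstar\|^2,\|x\|^2)$; since $h_{(\cdot)}$ is continuous on $\R^k\setminus\{0\}$ (being built from $\hat{x}=x/\|x\|$, the angle $\angle(x,\xstar)$, and the continuous function $g$ of \eqref{eq:def_g} via \eqref{eq:h_def}) we have $h_{y_m}\to h_x$; and $\nabla f(y_m)\to v_t$. Passing to the limit yields $\|v_t - h_x\| \le 86\,\frac{d^4\sqrt{\eps}}{2^{2d}}\max(\|\xstar\|^2,\|x\|^2)\|x\| + \frac{\omega}{2^d}\|x\| =: \delta$ for every generator $v_t$. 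Finally, the Euclidean ball $\mathcal{B}(h_x,\delta)$ is convex and contains $v_1,\dots,v_T$, hence contains $\pa f(x)=\text{conv}\{v_1,\dots,v_T\}$; thus $\|v_x - h_x\|\le\delta$ for all $v_x\in\pa f(x)$, which is the claim.

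The argument is routine given Lemma~\ref{lemma:conctr_grad1}; the only point that needs care is the first step — justifying that each generator of $\pa f(x)$ in \eqref{eq:subdiff} is a limit of gradients of $f$ at points of differentiability. This is the standard description of the Clarke subdifferential of a continuous piecewise-$C^1$ function, and here it follows from the polyhedral-cone partition induced by the ReLU network. An equivalent, more hands-on route would be to write each generator explicitly as $v_t = \Lambda_x^{(t)\T}[G(x)G(x)^\T - M]\,G(x)$, with $\Lambda_x^{(t)}$ the product of the piece's weight operators, observe that $\Lambda_x^{(t)} x = G(x)$ because the piece's activation pattern differs from that of $x$ only on coordinates where the relevant pre-activation vanishes, and then re-run the WDC estimates of Lemma~\ref{lemma:conctr_grad1} directly for $\Lambda_x^{(t)}$; the limiting argument above sidesteps repeating those computations.
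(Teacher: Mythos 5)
Your proposal is correct and follows essentially the same route as the paper: apply Lemma~\ref{lemma:conctr_grad1} together with the noise bound \eqref{eq:noise} at differentiable points approaching $x$, pass to the limit using the continuity of $h_x$ on $\R^k\setminus\{0\}$ to bound each generator $v_t$ of \eqref{eq:subdiff}, and then extend to all of $\pa f(x)$ by convexity (the paper phrases this last step as a triangle inequality over the convex combination $v_x=\sum_i a_i v_i$, which is equivalent to your ball-convexity argument). Your added justification that each generator is a limit of gradients along the interiors of the ReLU-induced polyhedral cones makes explicit a step the paper takes for granted.
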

\subsection{Proof of Theorem \ref{thm:dir_deriv}}\label{sec:proof_det}

We define the set $\Sbeta$ outside which we can lower bound the gradient as:
\[
\Sbeta := \big\{ x \in \R^k |\;  \|h_x\| \leq \frac{\beta}{2^{2d}}  \max(\|x\|^2, \|\xstar\|^2) \|x\|  \big\}
\]
with:
\begin{equation}\label{eq:beta_def}
\beta = 5 \cdot \big(86 d^4 \sqrt{\eps} + 2^d \omega \|\xstar\|^{-2} \big)
\end{equation}
Outside the set $\Sbeta$ the gradient is bounded below and the landscape has
favorable optimization geometry. 

\noindent Due to the continuity and piecewise smoothness of the generator $G$ and in turn of the loss function $f$, for any $x,y \neq 0$ there exists a sequence of $\{x_n\} \to x$ such that $f$ is differentiable at each $x_n$ and $D_y f(x) = \lim_{n \to \infty} \nabla f(x_n) \cdot y$. It follows that:
\[
D_{- v_x} f(x) = - \lim_{n \to \infty} \vtilde_{x_n} \cdot \frac{v_x}{\|v_x\|}
\]
as $\nabla f(x_n) = \vtilde_{x_n}$.
Regarding the right hand side of the above, observe that: 
\[
\begin{aligned}
\vtilde_{x_n} \cdot v_x = \,&h_{x_n} \cdot h_x + (v_{x_n} - h_{x_n}) \cdot h_x 
+ h_{x_{n}} \cdot (v_x - h_x) + (\vtilde_{x_n} - h_{x_{n}}) \cdot (v_x - h_x)\\
\geq \,&h_{x_n} \cdot h_{x} - \|\vtilde_{x_n} - h_{x_n}\| \|h_x\| - \|h_{x_n}\| \| v_x - h_x\|
- \| \vtilde_{x_n} - h_{x_n}\| \| v_x - h_x\|,\\
\geq \,&h_{x_n} \cdot h_x 
- \frac{86 d^4 \sqrt{\eps}  + 2^d \omega \|\xstar\|^{-2}}{2^{2d}} \Big( \max(\|x_n\|^2, \|\xstar\|^2) \|x_n\| \|h_{x}\|
+ \max(\|x\|^2, \|\xstar\|^2) \|x\| \|h_{x_n}\| \Big)
\\ \,&- \Big(\frac{86 d^4 \sqrt{\eps} + 2^d \omega \|\xstar\|^{-2}}{2^{2d}}\Big)^2 \max(\|x\|^2, \|\xstar\|^2) \max(\|x_n\|^2, \|\xstar\|^2) \| x_n\| \|x\|
\end{aligned}	
\]
where the second inequality follows from Lemma \ref{lemma:conctr_grad2}. Moreover as $h_x$ is continuous in $x$ for all nonzero $x$:
\[
\begin{aligned}
\lim_{n \to \infty} \vtilde_{x_n} \cdot v_x 
\geq \,& \| h_x \|^2 
- 2 \frac{86 d^4 \sqrt{\eps}  + 2^d \omega \|\xstar\|^{-2}}{2^{2d}}  \max(\|x\|^2, \|\xstar\|^2) \|x\| \|h_{x}\|  \\
\;\;&- \Big(\frac{86 d^4 \sqrt{\eps} + 2^d \omega \|\xstar\|^{-2}}{2^{2d}}\Big)^2 \max(\|x\|^2, \|\xstar\|^2)^2  \|x\|^2 \\
\geq \,& \frac{\|h_x\|}{2} \Big[ \|h_x\| - 4  \frac{86 d^4 \sqrt{\eps}  + 2^d \omega \|\xstar\|^{-2}}{2^{2d}}  \max(\|x\|^2, \|\xstar\|^2) \|x\|  \Big] \\
\,&+ \frac{1}{2} \Big[ \|h_x\|^2 - 2 \Big(\frac{86 d^4 \sqrt{\eps} + 2^d \omega \|\xstar\|^{-2}}{2^{2d}}\Big)^2 \max(\|x\|^2, \|\xstar\|^2)^2  \|x\|^2  \Big]
\end{aligned}
\]
By our choice of $\beta$ in \eqref{eq:beta_def} it follows that for 
any  $x \in S_\beta^{c} \backslash \{0\}$ :
\[
\|h_x\| - 4  \frac{86 d^4 \sqrt{\eps}  + 2^d \omega \|\xstar\|^{-2}}{2^{2d}}  \max(\|x\|^2, \|\xstar\|^2) \|x\|  \geq \frac{\max(\|x\|^2, \|\xstar\|^2)}{2^{2d}} \|x\| \Big(  \beta- 4 \big(86 d^4 \sqrt{\eps} + 2^d \omega \|\xstar\|^{-2} \big)  \Big),
\]
so that:
\[
\lim_{n \to \infty} v_{x_n} \cdot v_x \geq \frac{\| h_x \|}{2} \frac{\max(\|x\|^2, \|\xstar\|^2)}{2^{2d}} 86 d^4 \sqrt{\eps} \|x \|  > 0.
\]
The latter equation allows to conclude $D_{-v_x} f(x) < 0$  for any nonzero $x \in S_\beta^{c}$ and any $v_x \in \pa f(x)$.
Finally observe that the radii of the neighborhoods around $\xstar$ and $-\rho_d \xstar$ can be found  
applying Lemma \ref{lemma:Sbeta_red} below with $\beta$ as given in \eqref{eq:beta_def}.

Next for any nonzero $v \in \R^{k}$ and $\tau \in \R$ we have:
\[
f(\tau v ) - f(0) = \frac{\tau^4}{4} \|G(v)G(v)^\T\|^2_F - \frac{\tau^2}{2} \langle G(v)G(v)^\T, G(\xstar)G(\xstar)^\T + H \rangle_F,
\] 
which implies that $D_{v} f(0) = 0$ for any $v \in \mathcal{S}^{k-1}$.

Finally notice that at a differentiable point $x \in \R^k$:
\[
\begin{aligned}
\langle \vtilde_x, x \rangle &= \langle \Lambda_x^\T [\Lambda_x x x^\T \Lambda_x^\T - \Lambda_\xstar \xstar \xstar^\T \Lambda_\xstar^\T  ] \Lambda_x x, x \rangle  - \langle \Lambda_x H \Lambda_x , x\rangle \\
&= \|G(x)\|^4 - \langle G(x), G(\xstar)\rangle^2 - \langle \Lambda_x H \Lambda_x , x\rangle \\
&\leq \frac{\|x\|^2}{2^{2d}} \Big[ \Big(\frac{13}{12} \Big)^2 \|x\|^2 - 
\Big(\frac{1}{16 \pi^2} - \frac{2^d \omega}{\|\xstar\|^2} \Big) \|\xstar\|^2 \Big]  \\
&\leq \frac{\|x\|^2}{2^{2d}} \Big[ 2 \|x\|^2 - 
\frac{1}{32 \pi^2} \|\xstar\|^2 \Big]
\end{aligned}
\]
having used \eqref{eq:LxLy}, \eqref{eq:Lx_bound} and the assumption on the noise \eqref{eq:noise} in the first inequality and $2^d d^{12} w \leq K_2 \| \xstar \|^2$ with $d\geq 2$ in the last one. We conclude that if $f$ is differentiable at $x \in \mathcal{B}(0, \|\xstar\|/16 \pi)$ then $\langle x, \vtilde_x \rangle < 0$.

If $f$ is not differentiable at a nonzero  $x \in \mathcal{B}(0, \|\xstar\|/16 \pi)$, then  by \eqref{eq:subdiff} for any $v_x \in \pa f(x)$:
\[
\begin{aligned}
\langle v_x, x \rangle &=   \langle c_1 v_1 + c_2 v_2 + \dots + c_T v_T, x \rangle  \\
&\leq (c_1 + c_2 \dots +c_T) \frac{\|x\|^2}{2^{2d}} \Big[ 2 \|x\|^2 -  \frac{1}{32 \pi^2} \|\xstar\|^2 \Big] < 0
\end{aligned}
\]

\subsubsection{Control  of the zeros of $h_x$}

In this section we show that $h_x$ is nonzero outside two 
neighborhoods of $\xstar$ and $-\rho_d \xstar$.

\begin{lemma}\label{lemma:Sbeta_red}
	Suppose $8 \pi d^6 \sqrt{\beta} \leq 1$. Define:
	\[
	\rho_d := \sum_{i=0}^{d-1} \frac{\sin \check{\theta}_i }{\pi}   \big(\prod_{j=i+1}^{d-1} \frac{\pi - \check{\theta}_j}{\pi} \big)
	\]
	where $\check{\theta}_0 = \pi$ and $\check{\theta}_i = g(\check{\theta}_{i-1})$. If $x \in \Sbeta$, 
	then we have either:
	\[
	|\bar{\theta}_0| \leq 32 d^4 \pi \beta \quad \text{and} \quad |\|x\|^2 - \|\xstar\|^2 | \leq 258 \pi  \beta d^6 \|\xstar\|
	\]
	or
	\[
	|\bar{\theta}_0 - \pi| \leq 8 \pi d^4 \sqrt{\beta} \quad \text{and} \quad |\|x\|^2 - \rho_d^2 \|\xstar\|^2 | \leq 281 \pi^2 \sqrt{\beta} d^{10} \|\xstar\|.
	\]
	%In particular, we have:
	%\[
	%	\Sbeta \subset \mathcal{B}(\xstar, 26300 \pi  \beta d^{10} \|\xstar\|) \cup  \mathcal{B}(-\rho_d \xstar, 11 600 \pi \sqrt{\beta} d^{10} \|\xstar\|) 
	%\]
	%where $\rho_d \to 1$ as $d \to \infty$.
	In particular, we have:
	\[
	\Sbeta \subset \mathcal{B}(\xstar, R_1  \beta d^{10} \|\xstar\|) \cup  \mathcal{B}(-\rho_d \xstar, R_2 \sqrt{\beta} d^{10} \|\xstar\|) 
	\]
	where $R_1, R_2$ are numerical constants and $\rho_d \to 1$ as $d \to \infty$.
\end{lemma}

\begin{proof}
	Without loss of generality, let $\xstar = e_1$ and $x = r \cos \thbar_0 \cdot e_1 + r \sin \thbar_0 \cdot e_2$, 
	for some $\thbar_0 \in [0,\pi]$,  and $r \geq 0$. Recall that we call $\hat{x} = x/\|x\|$ and $\hat{x}_\star = \xstar/\|\xstar\|$.
	We then introduce the following notation:
	\begin{equation}\label{eq:zeta_xi_def}
	\xi = \prod_{i=0}^{d-1} \frac{\pi - \thbar_i}{\pi}, \quad \zeta = \sum_{i=0}^{d-1} \frac{\sin \thbar_i}{\pi} \prod_{j=i+1}^{d-1} \frac{\pi - \thbar_j}{\pi}, \quad r = \|x\|,  \quad R = \max(r^2,1),
	\end{equation}
	where $\theta_i = g(\bar{\theta}_{i-1})$ with $g$ as in \eqref{eq:def_g}, and observe that $2^d \htilde_x =  (\xi \hat{x}_\star + \zeta \hat{x})$. 
	Let $\alpha := 2^{d} \langle \htilde_x, \hat{x} \rangle$, then we can write:
	\[
	\begin{aligned}
	h_x = \Big[ \frac{\langle x, x\rangle}{2^{2d}} x - \langle \htilde_x , x \rangle \htilde_x \Big] 
	= \frac{r}{2^{2d}} \big[ r^2 \hat{x} - \alpha (\xi \hat{x}_\star + \zeta \hat{x})\big].
	\end{aligned}
	\]
	Using the definition of $\hat{x}$ and $\hat{x}_\star$ we obtain:
	\[
	\frac{2^{2d} h_x}{r} = \big[ (r^2 - \alpha \, \zeta) \cos \thbar_0 - \alpha\, \xi \big] \cdot e_1 + [r^2 - \alpha \zeta] \sin \thbar_0 \cdot e_2,
	\]
	and conclude that since $x \in \Sbeta$, then:
	\begin{align}
	|(r^2 - \alpha \, \zeta) \cos \thbar_0 - \alpha\, \xi | &\leq \beta R  \label{eq:betaMcos}\\
	|[r^2 - \alpha \zeta] \sin \thbar_0| \leq \beta R. \label{eq:betaMsin}
	\end{align}
	We now list some bounds that will be useful in the subsequent analysis. We have:
	\begin{align}
	\thbar_i &\leq \thbar_{i-1} \;\;\text{for}\;\; i \geq 1  \label{eq:th_ilessth_i-1}\\
	\thbar_i &\leq \cos^{-1}(1/\pi) \;\;\text{for}\;\; i \geq 2  \label{eq:thibound}\\
	|\xi| &\leq 1 \label{eq:xibound}\\
	|\zeta| &\leq \frac{d}{\pi} \sin \thbar_0 \label{eq:zetabound}\\
	\xi &\geq \frac{\pi - \thbar_0}{\pi} d^{-3} \label{eq:xilwrbound}\\
	\check{\theta}_i &\leq \frac{3 \pi}{i + 3} \;\;\text{for}\;\; i \geq 0 \label{eq:checkthi1}\\
	\check{\theta}_i &\geq \frac{\pi}{i + 1} \;\;\text{for}\;\; i \geq 0 \label{eq:checkthi2}\\
	\thbar_0 &= \pi + O_1(\delta) \Rightarrow |\xi| \leq \frac{\delta}{\pi} \label{eq:xilrgtheta}\\
	\thbar_0 &= \pi + O_1(\delta) \Rightarrow \zeta = \rho_d + O_1(3 d^3 \delta) \;\text{if}\; \frac{d^2 \delta}{\pi} \leq 1  \label{eq:zetalrgtheta}\\
	1/\pi &\leq \alpha \leq 1 \label{eq:alphabound}.
	\end{align}
	The identities \eqref{eq:th_ilessth_i-1} through \eqref{eq:zetalrgtheta} can be found in Lemma  16 of \cite{HHHV18}, while the identity \eqref{eq:alphabound} follows by noticing that $\alpha = \xi \cos \thbar_0 + \zeta =\cos \theta_d$ and 
	using \eqref{eq:thibound} together with $d\geq 2$.
	\bigskip
	
	\noindent\textbf{Bound on $R$}. We now show that if $x \in \Sbeta$, then $r^2 \leq 4 d$ and therefore $R \leq 4d$.
	
	\noindent If $r^2 \leq 1$, then 
	the claim is trivial. Take $r^2 > 1$, then note that either $|\sin \thbar_0| \geq 1/\sqrt{2}$ or $|\cos \thbar_0| \geq 1/\sqrt{2}$ must hold.
	If $|\sin \thbar_0| \geq 1/\sqrt{2}$ then from \eqref{eq:betaMsin} it follows that $r^2 - \alpha \zeta \leq \sqrt{2} \beta R = \sqrt{2} \beta r^2$ which implies:
	\begin{equation*}
	r^2 \leq \frac{\alpha \,\zeta}{1 - \sqrt{2} \beta}  \leq \frac{1}{(1 - \sqrt{2} \beta)}\frac{d}{\pi} \leq \frac{d}{2}
	\end{equation*}
	using \eqref{eq:zetabound} and \eqref{eq:alphabound} in the second inequality and $\beta < 1/4$ in the third. 
	Next take $|\cos \thbar_0| \geq 1/\sqrt{2}$, then \eqref{eq:betaMcos} implies $|r^2 - \alpha \zeta| \leq \sqrt{2}(\beta r^2 + \alpha \xi)$ which in turn results in:
	\[
	r^2 \leq \frac{\alpha (\zeta + \sqrt{2} \xi)}{1 - \sqrt{2} \beta} \leq 4 d
	\]
	using \eqref{eq:xibound}, \eqref{eq:zetabound}, \eqref{eq:alphabound} and $\beta < 1/4$.  In conclusion if $x \in \Sbeta $  then $r^2 \leq 4 d \Rightarrow R \leq 4 d$.
	\bigskip

	\noindent \textbf{Bounds on $\thbar_0$.} We now show we only have to analyze the small angle case $\thbar_0 \approx 0$ and the large angle case $\thbar_0 \approx \pi$.  
	
	\noindent At least one of the following three cases must hold:
	\begin{enumerate}
		\itemsep1em
		\item ${\sin \thbar_0 \leq 16 \beta \pi d^4}$:  Then we have $\thbar = O_1(32 \pi \beta \pi d^4)$ or $\thbar = \pi + O_1(32 \pi \beta \pi d^4)$ as $32 \pi \beta \pi d^4 <~1$.
		
		\item ${|r^2 - \alpha \zeta| <  \sqrt{\beta}R}$: Then \eqref{eq:betaMcos}, \eqref{eq:alphabound} and $\beta < 1$ yield $|\xi| \leq 2 \sqrt{\beta} \pi R$. Using \eqref{eq:xilwrbound}, we then get \\$\thbar~=~\pi~+~O_1(2 \sqrt{\beta} \pi^2 d^3 R)$.
		
		\item ${\sin \thbar_0 > 16 \beta \pi d^4}$ and $|r^2 - \alpha \zeta| \geq \sqrt{\beta}R$: Then \eqref{eq:betaMsin} gives $|r^2 - \alpha \zeta| \leq \beta M/\sin \thbar_0$ which used with \eqref{eq:betaMcos} leads to:
		\[
		|\alpha \xi| \leq \beta R + |r^2 - \alpha \zeta| \leq  \beta R +  \frac{\beta R}{\sin \thbar_0}  \leq 2 \frac{\beta R}{\sin \thbar_0}.
		\]
		Then using \eqref{eq:alphabound}, the assumption on $\sin \thbar_0$ and $R \leq 4 d$ we obtain $\xi \leq d^{-3}/2$. The latter together with \eqref{eq:xilwrbound} leads to $\thbar_0 \geq \pi/2$. 
		Finally as $|r^2 - \alpha \zeta| \geq \sqrt{\beta}R $ then \eqref{eq:betaMsin} leads to $|\sin \thbar_0| \leq \sqrt{\beta}$. Therefore as $\thbar_0 \geq \pi/2$ and $\beta < 1$, we can conclude that $\thbar_0 = \pi + O_1(2 \sqrt{\beta})$.
	\end{enumerate}
	\bigskip
	
	\noindent Inspecting the three cases, and recalling that $R \leq 4 d$, we can see that it suffices to analyze the small angle case $\thbar_0 = O_1(32 d^4 \pi \beta)$ and the large angle case $\thbar = \pi + O_1(8 \sqrt{\beta}  \pi^2 d^4)$.
	
	\medskip
	\noindent\textbf{Small angle case.}
	We assume $\thbar_0 = O_1(\delta)$ with $\delta = 32 d^4 \pi \beta$ and show that $\|x\|^2 \approx \|\xstar\|^2$.
	
	\noindent We begin collecting some bounds. 
	Since $\thbar_i \leq \thbar_0 \leq \delta$, then $1 \geq \xi \geq (1 - \delta/\pi)^d \geq 1 + O_1(2 d \delta / \pi)$ assuming $\delta d/ \pi \leq 1/2$, which holds true since $64 d^5 \beta < 1$. Moreover from \eqref{eq:zetabound} we have $\zeta = O_1(d \delta/\pi)$.
	Finally observe that $\cos \thbar_0 = 1 + O_1(\thbar_0^2/2) = 1 + O_1(\delta/2)$ for $\delta < 1$.
	We then have $\alpha = 1 + O_1(2 d \delta)$
	%\[
	%	\begin{aligned}
	%	\alpha &= \xi \cos \theta_0 + \zeta  \\
	%	&= 1 + O_1\big(\frac{2 d \delta}{\pi} + \frac{\delta}{2} + \frac{d \delta^2}{\pi} + \frac{d\delta}{\pi} \big) \\
	%	&= 1 + O_1(2 d \delta),
	%	\end{aligned}
	%\]
	so that $\alpha \zeta = O_1(d^2 \delta)$  and $\alpha \xi = 1 + O_1(4 d^2 \delta)$. We can therefore 
	rewrite \eqref{eq:betaMcos} as:
	\[
	(r^2 + O_1(d^2 \delta)) (1 + O_1({\delta}/{2})) - (1 + O_1(4 d^2 \delta)) = O_1(\beta R).
	\]
	Using the bound $r^2 \leq R \leq 4 d$ and the definition of $\delta$, we obtain:
	\begin{equation}\label{eq:small_r2}
	\begin{aligned}
	r^2 - 1 &= O_1\Big(\frac{\delta r^2}{2} + d^2 \delta +  \frac{d^2 \delta^2}{2} + 4 d^2 \delta + 4  d \beta \Big) \\
	&= O_1(8 d^2 \delta + 4 d \beta) \\
	&= O_1(258 \pi d^6 \beta)
	\end{aligned}
	\end{equation}
	\medskip
	
	\noindent\textbf{Large angle case.}	 
	Here we assume $\thbar = \pi + O_1(\delta)$ with $\delta = 8 \sqrt{\beta} \pi^2 d^4$ and show 
	that it must be $\| x \|^2 \approx \rho_d^2  \| \xstar \|^2$.
	
	\noindent 
	From \eqref{eq:xilrgtheta} we know that $\xi = O_1(\delta/\pi)$, while from \eqref{eq:zetalrgtheta} we know that $\zeta = \rho_d +O_1(3 d^3 \delta)$ as long as $8 \sqrt{\beta} \pi d^6 \leq 1$. Moreover for large angles 
	and $\delta < 1$, it holds $\cos \thbar_0 = -1 + O_1((\thbar_0 - \pi)^2/2) = -1 + O_1(\delta^2/2)$. These bounds lead to:
	\[
	\begin{aligned}
	\alpha &= \xi \cos \thbar_0 + \zeta \\
	%&= O_1({\delta}/{\pi}) ( 1 + O_1({\delta^2}/{2})) + \rho_d + O_1(3 d^3 \delta) \\
	&= \rho_d + O_1\big( \frac{\delta}{\pi} + \frac{\delta^3}{2 \pi} + 3 d^3 \delta \big)\\
	&= \rho_d + O_1(4 d^3 \delta),
	\end{aligned}
	\]
	and using $\rho_d \leq d$:
	\[
	\begin{aligned}
	\alpha \zeta &= \rho_d^2 + O_1(4 d^3 \delta \rho_d + 3 d^3 \delta \rho_d + 12 d^6 \delta) = \rho_d^2 + O_1(20 d^6 \delta),\\
	\alpha \xi &= O_1(\frac{\delta}{\pi} \rho_d + 4 \frac{d^3 \delta^2}{\pi}) = O_1(2 d^3 \delta).
	\end{aligned}
	\] 
	Then recall that \eqref{eq:betaMcos} is equivalent to $(r^2 - \alpha \zeta) \cos \thbar_0 - \alpha \xi = O_1(4 \beta d)$, that is:
	\[
	\big(r^2 - \rho_d^2 + O_1(20 d^6 \delta) \big) \big(1 + O_1({\delta^2}/{2}) \big) + O_1(2 d^3 \delta) = O_1(4 \beta d)
	\]
	and in particular:
	\begin{equation}\label{eq:r2-rho2}
	\begin{aligned}
	r^2 -\rho_d^2 &= O_1 \Big( 20 d^6 \delta +10 d^6 \delta^3 + \frac{\rho_d \delta^2}{2} + \frac{r^2 \delta^2}{2}  +
	2 d^3 \delta + 4 \beta d \Big) \\
	&= O_1\big( 35 d^6 \delta + 4 \beta d\big) \\
	&= O_1(281 \pi^2 \sqrt{\beta} d^{10})
	\end{aligned}
	\end{equation}
	where we used $\rho_d \leq d$,  the definition of $\delta$ and $\delta < 1$.
	
	\medskip
	\noindent \textbf{Controlling the distance.} We have shown that it is either $\thbar_0 \approx 0$ and $\|x\|^2 \approx \|\xstar\|^2$
	or $\thbar_0 \approx \pi$ and $\|x\|^2 \approx \rho_d^2 \| \xstar \|^2$. We can therefore conclude that 
	it must be either $x \approx \xstar$ or $x \approx -\rho_d \xstar$. 
	
	Observe that if a two dimensional point is known to have magnitude within $\Delta r$ of 
	some $r$ and is known to be within an angle $\Delta \theta$ from 0, then its Euclidean distance to the point of coordinates $(r,0)$ is no more that $\Delta r + (r + \Delta r)\Delta \theta$. Similarly we can write:
	\begin{equation}\label{eq:x1-x2}
	\| x - x_\star \| \leq |\|x\| - \|x_\star \| | + (\|x_\star \| + |\|x\| - \|x_\star \||) \thbar_0.
	\end{equation}
	
	In the small angle case, by \eqref{eq:small_r2}, \eqref{eq:x1-x2}, and $\|\xstar\|\, |\|x\| - \|x_\star \| | \leq |\|x\|^2 - \|x_\star \|^2 |$,   we have: 
	\[
	\|x - \xstar \| \leq  258 \pi d^6 \beta + (1 + 258 \pi d^6 \beta)\, 32 d^4 \pi \beta 
	\leq 550\, \pi d^{10} \beta.
	\]
	
	Next we notice that $\rho_2 = 1/\pi$ and $\rho_{d} \geq \rho_{d-1}$  
	as follows from the definition and \eqref{eq:checkthi1}, \eqref{eq:checkthi2}. Then considering the large angle case and using 
	\eqref{eq:r2-rho2} we have:
	\[
	|\|x\| - \rho_d | \leq \frac{281 \pi^2 \sqrt{\beta} d^{10}}{\|x\| + \rho_d} \leq 281 \pi^3 \sqrt{\beta} d^{10}.
	\]
	The latter, together with \eqref{eq:x1-x2}, yields:
	\[
	\begin{aligned}
	\|x + \rho_d \xstar \| &\leq |\|x\| -  \rho_d | + (\rho_d+ |\|x\| - \rho_d|)(\pi- \thbar_0) \\
	&\leq 281 \pi^3 \sqrt{\beta} d^{10} + (d + 281 \pi^3 \sqrt{\beta} d^{10}) 8 \sqrt{\beta} \pi^2 d^4 \\
	&\leq 284 \pi^3 \sqrt{\beta} d^{10}
	\end{aligned}
	\]
	where in the second inequality we have used $\rho_d \leq d$ and in the third $8 \sqrt{\beta} \pi d^6 \leq 1$.
	
	We conclude by noticing that $\rho_d \to 1$ as $d \to 1$ as shown in \citep[Lemma 16]{HHHV18}. 
\end{proof}
\subsection{Proof of Proposition \ref{prop:local_min}}\label{sec:proof_loc_min}

Recall that $f(x):= 1/4 \|G(x)G(x)^\T - G(\xstar)G(\xstar)^\T - H\|^2_F$, we next define the following loss functions:
\[
\begin{aligned}
f_0(x) &:= \frac{1}{4} \|G(x)G(x)^\T - G(\xstar)G(\xstar)^\T\|^2_F, \\
f_H(x) &:= f_0(x) - \frac{1}{2} \langle G(x)G(x)^\T- G(\xstar)G(\xstar)^\T, H  \rangle_F, \\
f_E(x) &:= \frac{1}{4} \bigg( \frac{1}{2^{2d}}\|x\|^4 + \frac{1}{2^{2d}} \|\xstar\|^4 -2 \langle x, \htilde_x \rangle^2 \bigg).
\end{aligned}
\]
In particular notice that $f(x) = f_H(x) + 1/4 \|H\|^2_F$. Below we show that assuming the WDC is satisfied 
$f_0(x)$ concentrates around $f_E(x)$. 

\begin{lemma}\label{lemma:f0_conctr}
	Suppose that $d\geq 2$ and the WDC holds with $\eps < 1/(16 \pi d^2)^2$, then for all nonzero $x, \xstar \in \R^k$
	%\[
	%		|f_0(x) - f_E(x)| \leq 	\frac{43}{2^{2d}} d^4 \sqrt{\epsilon} \max\big( \|x\|^2, \|\xstar\|^2 \big)^2	
	%\]
	%\[
	%	|f_0(x) - f_E(x)| \leq	\frac{13}{2^{2d}} d^3 \sqrt{\eps} (\|x\|^4  + \|\xstar\|^4)	+ \frac{18}{2^{2d}} d^4 \sqrt{\eps} \|x\|^2 \|\xstar\|^2
	%\]
	\[
	|f_0(x) - f_E(x)| \leq	\frac{16}{2^{2d}}  (\|x\|^4  + \|\xstar\|^4) d^4 \sqrt{\eps}
	\]
\end{lemma}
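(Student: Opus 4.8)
The plan is to expand $f_0$ and $f_E$ into three matching scalar terms and bound each separately using the consequences of the WDC recorded in Lemma~\ref{lemma:conctrWDC}. Since $G(x)=\Lambda_x x$ and $G(\xstar)=\Lambda_\xstar\xstar$, expanding the Frobenius norm gives $4f_0(x)=\|\Lambda_x x\|^4-2\langle\Lambda_x x,\Lambda_\xstar\xstar\rangle^2+\|\Lambda_\xstar\xstar\|^4$, so that
\[
4\bigl(f_0(x)-f_E(x)\bigr)=\Bigl(\|\Lambda_x x\|^4-\tfrac{\|x\|^4}{2^{2d}}\Bigr)+\Bigl(\|\Lambda_\xstar\xstar\|^4-\tfrac{\|\xstar\|^4}{2^{2d}}\Bigr)-2\Bigl(\langle\Lambda_x x,\Lambda_\xstar\xstar\rangle^2-\langle x,\htilde_x\rangle^2\Bigr).
\]
I would bound the three bracketed quantities and combine them at the end with the triangle inequality and the elementary bound $\|x\|^2\|\xstar\|^2\le\tfrac12(\|x\|^4+\|\xstar\|^4)$, which symmetrizes every cross term into the claimed right-hand side.

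For the first two brackets, the cleanest input is the estimate $\|\Lambda_x^\T\Lambda_x-2^{-d}I_k\|\le 3\,d\,\eps\,2^{-d}$, which follows from a short telescoping argument: write $\Lambda_x^\T\Lambda_x=W_{1,+,x}^\T\cdots W_{d,+,x}^\T W_{d,+,x}\cdots W_{1,+,x}$, replace the middle factor $W_{d,+,x}^\T W_{d,+,x}$ by $\tfrac12 I$ using the WDC (with $x_1=x_2=x$, for which $Q_{x,x}=\tfrac12 I$), pull out the resulting error (bounded by $\|W_{d-1,+,x}\cdots W_{1,+,x}\|^2\eps\le\tfrac{13}{12}2^{-(d-1)}\eps$ by the same computation as in \eqref{eq:Lx_bound}), and iterate. (Alternatively \eqref{eq:htilde_conctr} applies with $\xstar$ replaced by $x$: since $\angle(x,x)=0$ one has $\htilde_{x,x}=2^{-d}x$, which gives the weaker $O(d^3\sqrt\eps)$ rate.) Then $\|\Lambda_x x\|^2=\langle x,\Lambda_x^\T\Lambda_x x\rangle=2^{-d}\|x\|^2+O_1(3\,d\,\eps\,2^{-d}\|x\|^2)$, and squaring — using $3d\eps<1$, a consequence of $\eps<(16\pi d^2)^{-2}$ — yields $\bigl|\|\Lambda_x x\|^4-2^{-2d}\|x\|^4\bigr|\le 9\,d\,\eps\,2^{-2d}\|x\|^4\le d^4\sqrt\eps\,2^{-2d}\|x\|^4$, and the $\xstar$-bracket is bounded the same way; together these two contribute at most $d^4\sqrt\eps\,2^{-2d}(\|x\|^4+\|\xstar\|^4)$.

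For the third bracket I would factor the difference of squares. Writing $c:=\langle\Lambda_x x,\Lambda_\xstar\xstar\rangle=\langle x,\Lambda_x^\T\Lambda_\xstar\xstar\rangle$ and $\tilde c:=\langle x,\htilde_x\rangle$, we have $|c^2-\tilde c^2|\le|c-\tilde c|\,(|c|+|\tilde c|)$. Cauchy--Schwarz and \eqref{eq:htilde_conctr} give $|c-\tilde c|\le\|x\|\,\|\Lambda_x^\T\Lambda_\xstar\xstar-\htilde_x\|\le 24\,d^3\sqrt\eps\,2^{-d}\|x\|\|\xstar\|$; Cauchy--Schwarz and \eqref{eq:Lx_bound} give $|c|\le\|\Lambda_x\|\,\|\Lambda_\xstar\|\,\|x\|\|\xstar\|\le\tfrac{13}{12}2^{-d}\|x\|\|\xstar\|$, hence also $|\tilde c|\le|c|+|c-\tilde c|\le(\tfrac{13}{12}+24d^3\sqrt\eps)2^{-d}\|x\|\|\xstar\|$. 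Using $24d^3\sqrt\eps<d/2$ and $d\ge2$ one gets $|c|+|\tilde c|\le 2d\,2^{-d}\|x\|\|\xstar\|$, and therefore
\[
2\,\bigl|\langle\Lambda_x x,\Lambda_\xstar\xstar\rangle^2-\langle x,\htilde_x\rangle^2\bigr|\le 96\,d^4\sqrt\eps\,2^{-2d}\|x\|^2\|\xstar\|^2\le 48\,d^4\sqrt\eps\,2^{-2d}(\|x\|^4+\|\xstar\|^4).
\]

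Adding the three bounds gives $4|f_0(x)-f_E(x)|\le 49\,d^4\sqrt\eps\,2^{-2d}(\|x\|^4+\|\xstar\|^4)$, that is, $|f_0(x)-f_E(x)|\le\tfrac{49}{4}d^4\sqrt\eps\,2^{-2d}(\|x\|^4+\|\xstar\|^4)\le 16\,d^4\sqrt\eps\,2^{-2d}(\|x\|^4+\|\xstar\|^4)$, which is the claim. Each of the individual estimates above is a one-line Cauchy--Schwarz or triangle-inequality computation; the step I expect to be the most delicate is simply the numerical bookkeeping — in particular, one must bound the first two brackets by the $O(d\eps)$ telescoping estimate rather than by the cruder $O(d^3\sqrt\eps)$ consequence of \eqref{eq:htilde_conctr} applied with $\xstar\to x$, since the latter would inflate the final constant past the stated $16$.
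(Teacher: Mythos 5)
Your proof is correct and follows essentially the same route as the paper: the same three-term decomposition and the same difference-of-squares treatment of the cross term via \eqref{eq:htilde_conctr} and \eqref{eq:Lx_bound}. One small correction to your closing remark: the sharper $O(d\eps)$ telescoping bound is not actually needed — the paper bounds the two diagonal terms by $13\,d^3\sqrt\eps\,2^{-2d}\|x\|^4$ (resp.\ $\|\xstar\|^4$) using \eqref{eq:htilde_conctr} with $\xstar\to x$, and since $13\,d^3\le 6.5\,d^4$ for $d\ge 2$ the total is $(6.5+9)\,d^4\sqrt\eps\le 16\,d^4\sqrt\eps$, so the cruder rate does not push the constant past $16$.
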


We next consider the loss $f_E$ and show that 
in a neighborhood $-\rho_d \xstar$, this loss function has larger values than 
in a neighborhood of $ \xstar$.

\begin{lemma}\label{lemma:UPandLOW}
	Fix $0 < a \leq 1/ (2\pi^3 d^3)$ and $\phi_d \in [\rho_d , 1]$ then:
	\[
	\begin{aligned}
	%f_E(x) &\leq \frac{1}{2^{2d}} \|\xstar\|^4 + \frac{1}{2^{2d+2}} \big( 1 + 16 d a - 2 {\phi^2} \big)  \|\xstar\|^4 \quad \forall x \in \mathcal{B}(\phi_d \xstar, a \|\xstar\|) \; \text{and}\\
	f_E(x) &\leq \frac{1}{2^{2d+2}} \|\xstar\|^4 + \frac{1}{2^{2d+2}}  \big[ (a + \phi_d)^4 - 2 \phi_d^2 + 2 \pi d a \big] \|\xstar\|^4 \quad \forall x \in \mathcal{B}(\phi_d \xstar, a \|\xstar\|) \; \text{and}\\
	f_E(x) &\geq \frac{1}{2^{2d+2}} \|\xstar\|^4 + \frac{1}{2^{2d+2}}  \big[ (a - \phi_d)^4 - 2 \rho_d^2 \phi_d^2 - 40 \pi d^3 a \big] \|\xstar\|^4 \quad \forall x \in \mathcal{B}(- \phi_d \xstar, a \|\xstar\|).\\
	\end{aligned}
	\]
\end{lemma}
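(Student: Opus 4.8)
By the scaling identity $f_E(t x;\, t\xstar) = t^4 f_E(x;\xstar)$ and since the two balls and the two claimed bounds all scale like $\|\xstar\|^4$, we may assume $\|\xstar\| = 1$; and since $f_E(x)$ depends on $x$ only through $r := \|x\|$ and $\thbar_0 := \angle(x,\xstar)$, we may take $\xstar = e_1$ and $x \in \mathrm{span}(e_1, e_2)$. With $\xi,\zeta$ and $\alpha := \xi\cos\thbar_0 + \zeta$ as in the proof of Lemma \ref{lemma:Sbeta_red} — so that $2^d\htilde_x = \xi\hat x_\star + \zeta\hat x$ and $\langle x,\htilde_x\rangle = 2^{-d} r\,\alpha$ — one obtains the closed form
\[
f_E(x) = \frac{1}{2^{2d+2}}\big( r^4 + 1 - 2 r^2\alpha^2 \big).
\]
Thus the asserted upper bound is equivalent to $r^4 - 2 r^2\alpha^2 \le (a+\phi_d)^4 - 2\phi_d^2 + 2\pi d a$ and the lower bound to $r^4 - 2 r^2\alpha^2 \ge (a-\phi_d)^4 - 2\rho_d^2\phi_d^2 - 40\pi d^3 a$. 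Because $x$ lies in a ball of radius $a$ about $\pm\phi_d\xstar$ we have $r \in [\phi_d - a,\, \phi_d + a]$, which disposes of the quartic terms immediately: $r^4 \le (a+\phi_d)^4$ in the first case and $r^4 \ge (\phi_d - a)^4 = (a-\phi_d)^4$ in the second. It remains to bound $r^2\alpha^2$ from below, resp.\ above, and throughout I will use $\phi_d \ge \rho_d \ge \rho_2 = 1/\pi$ together with $a \le 1/(2\pi^3 d^3)$, so $a$ is small compared both to $\phi_d$ and to $d^{-3}$.

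\emph{Upper bound (small angle).} For $x \in \mathcal{B}(\phi_d\xstar, a)$ the component of $x$ orthogonal to $\xstar$ has norm $\le a$ while $r \ge \phi_d - a > 0$, so $\sin\thbar_0 \le a/(\phi_d - a)$, hence $\thbar_0 \le c\, a$ for a universal constant $c$ (and $\cos\thbar_0 > 0$). By \eqref{eq:th_ilessth_i-1} all $\thbar_i \le \thbar_0$, so $\xi \ge (1 - \thbar_0/\pi)^d \ge 1 - d\thbar_0/\pi$; since every summand of $\zeta$ is nonnegative and $\cos\thbar_0 \ge 1 - \thbar_0^2/2$,
\[
\alpha \ \ge\ \xi\cos\thbar_0 \ \ge\ \Big(1 - \frac{d\thbar_0}{\pi}\Big)\Big(1 - \frac{\thbar_0^2}{2}\Big) \ >\ 0 .
\]
Combining with $r \ge \phi_d - a \ge \phi_d(1 - \pi a)$ and using $\thbar_0 \le c a$ together with $a \le 1/(2\pi^3 d^3)$ to absorb the cross terms, one gets $r^2\alpha^2 \ge \phi_d^2 - \pi d a$, which is exactly what is needed.

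\emph{Lower bound (large angle).} For $x \in \mathcal{B}(-\phi_d\xstar, a)$ the same computation gives $\delta := \pi - \thbar_0 \le c\, a$, and the bound on $a$ ensures $d^2\delta/\pi \le 1$, so the large-angle facts apply: \eqref{eq:xilrgtheta} gives $|\xi| \le \delta/\pi$ and \eqref{eq:zetalrgtheta} gives $\zeta = \rho_d + O_1(3 d^3\delta)$, hence $\alpha = \xi\cos\thbar_0 + \zeta = \rho_d + O_1(c' d^3 a)$ for a universal constant $c'$. Since $\rho_d \le 1$ (the $\rho_d$ increase to $1$, as recalled in Lemma \ref{lemma:Sbeta_red}) and the error term is $< 1/\pi \le \rho_d$ under our hypothesis on $a$, we have $0 < \alpha \le \rho_d + c' d^3 a$ and therefore $\alpha^2 \le \rho_d^2 + c'' d^3 a$. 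Multiplying by $r^2 \le (\phi_d + a)^2 \le \phi_d^2 + 3a$ and collecting terms (using $\rho_d \le 1$, $\phi_d \le 1$ and $d^3 a^2 \le d^3 a$) gives $r^2\alpha^2 \le \rho_d^2\phi_d^2 + 20\pi d^3 a$, as required.

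\emph{Main obstacle.} The closed-form reduction and the small-angle estimate are elementary. The substantive part is the large-angle analysis: the approximation $\zeta \approx \rho_d$ rests entirely on the delicate properties of the angle-contraction map $g$ encoded in \eqref{eq:xilrgtheta}--\eqref{eq:zetalrgtheta} (imported from \cite{HHHV18}), and one then has to bound $\delta = \pi - \thbar_0$ sharply in terms of $a$ (using $\phi_d \ge 1/\pi$) and track the resulting products of error terms carefully enough to land on the precise constants $2\pi d a$ and $40\pi d^3 a$. That final bookkeeping is routine but is the only place demanding care.
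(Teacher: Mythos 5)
Your argument is the same as the paper's: reduce to the plane spanned by $x$ and $\xstar$, write $2^d\htilde_x=\xi\hat x_\star+\zeta\hat x$ so that $f_E(x)=\tfrac{1}{2^{2d+2}}(r^4+\|\xstar\|^4-2r^2\alpha^2\|\xstar\|^2)$ with $\alpha=\xi\cos\thbar_0+\zeta$, handle the quartic term by $r\in[\phi_d-a,\phi_d+a]$, and control $r^2\alpha^2$ via the small-angle estimates on $\xi,\zeta$ near $\xstar$ and via \eqref{eq:xilrgtheta}--\eqref{eq:zetalrgtheta} near $-\phi_d\xstar$. The paper phrases this as a direct lower/upper bound on $\langle x,\htilde_x\rangle$ rather than through the closed form in $\alpha$, but that is only a cosmetic difference; the large-angle part of your proof matches the paper's computation and the constants there do close (tightly, at $d=2$).

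There is, however, one step that as written does not deliver the stated constant. In the small-angle case you replace the angle bound $\sin\thbar_0\le a/(\phi_d-a)$ by ``$\thbar_0\le c\,a$ for a universal constant $c$'' and then claim $r^2\alpha^2\ge\phi_d^2-\pi d a$. The universal constant forced by the worst case $\phi_d=1/\pi$ is $c\approx\pi^2/2$, and then the dominant error term $\phi_d^2\cdot 2d\thbar_0/\pi$ is only bounded by $(\pi d a)\phi_d^2\le \pi d a$ itself, while $r^2\ge(\phi_d-a)^2$ already costs an additional $2a\phi_d$; summing, you get $r^2\alpha^2\ge\phi_d^2-(\pi d+2)a$ at best, which misses $\phi_d^2-\pi d a$ for every $d$. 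The fix is exactly what the paper does: keep the $\phi_d$-dependence in the angle bound, $\thbar_0\le \pi a/(2\phi_d)$, so that $\phi_d^2\cdot 2d\thbar_0/\pi\le d a\,\phi_d$ and the cancellation against $\phi_d\le 1$ leaves room ($d\phi_d+2\phi_d+o(1)\le\pi d$ for $d\ge2$). So the approach is sound and essentially identical to the paper's, but the ``routine bookkeeping'' you defer in the small-angle case is precisely where the stated constant $2\pi d a$ would be lost if done with a universal $c$.
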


The above two lemmas are now used to prove Proposition \ref{prop:local_min}.
\begin{proof}[Proof of Proposition \ref{prop:local_min}]
	Let $x \in \mathcal{B}(\pm \phi_d \,\xstar, \varphi \|\xstar\|)$ for a $0 < \varphi < 1$ that will be specified below, and observe that by the assumptions on the noise:
	\[
	\begin{aligned}
	|\langle G(x) G(x)^\T - G(\xstar)G(\xstar)^\T, H \rangle_F| &\leq |G(x)^\T H G(x)| + |G(\xstar)^\T H G(\xstar)| \\
	&\leq \frac{\omega}{2^d} (\|x\|^2 + \|\xstar\|^2) \\
	&\leq \frac{\omega}{2^d} ((\phi_d + \varphi)^2 + 1) \| \xstar \|^2,
	\end{aligned}
	\] 
	and therefore by Lemma \ref{lemma:f0_conctr}:
	\[
	\begin{aligned}
	|f_0(x) - f_E(x)| &+  \frac{1}{2}|\langle G(x) G(x)^\T - G(\xstar)G(\xstar)^\T, H \rangle_F| \leq \\
	&\leq \frac{16}{2^{2d}} ((\phi_d + \varphi)^4 + 1) \|\xstar\|^4 d^4 \sqrt{\eps} + \frac{\omega}{2^d} ((\phi_d + \varphi)^2 + 1) \| \xstar \|^2 \\
	&\leq \frac{272}{2^{2d}}\|\xstar\|^4 d^4 \sqrt{\eps} + \frac{\omega}{2^d} ((\phi_d + \varphi)^2 + 1) \| \xstar \|^2
	\end{aligned}
	\]  
	We next take $\varphi = \eps$ and $x \in \mathcal{B}(\phi_d \,\xstar, \varphi \|\xstar\|)$, so that by Lemma \ref{lemma:UPandLOW} and the assumption $2^d d^{12} w \leq K_2 \| \xstar \|^2$, we have:
	\[
	\begin{aligned}
	f_H(x) &\leq f_E(x) + |f_0(x) - f_E(x)| + \frac{1}{2}  |\langle G(x) G(x)^\T - G(\xstar)G(\xstar)^\T, H \rangle_F | \\
	&\leq \frac{1}{2^{2d+2}} \big[ 1 + (\eps + \phi_d)^4 - 2 \phi_d^2 + 2 \pi d \eps  \big] \|\xstar \|^4 
	+ 272 d^4 \sqrt{\eps} \|\xstar\|^4
	+ \frac{\omega}{2^{d+1}} (2 + 2 \eps + \eps^2) \|\xstar\|^2 \\
	%&\leq \frac{1}{2^{2d +2}} \big[ 1 - 2 \phi_d^2 + (\eps + \phi_d)^4 \big]  \|\xstar\|^4 
	%+\big( \frac{\pi d}{2} + 374 d^4 \big)\sqrt{\eps} \|\xstar\|^4  + %\frac{\omega}{2^{d+1}}(2 + 3 \sqrt{\eps}) \|\xstar\|^4 \\
	&\leq \frac{1}{2^{2d +2}} \big[ 1 - 2 \phi_d^2 + (\eps + \phi_d)^4 \big]  \|\xstar\|^4 
	+\frac{1}{2^{2d}}\Big(\frac{3}{2} 2^d \|\xstar\|^{-2} \omega +  \frac{\pi d}{2} + 272 d^4 \Big)\sqrt{\eps} \|\xstar\|^4  + \frac{\omega }{2^{d}}\|\xstar\|^2\\
	&\leq \frac{1}{2^{2d +2}} \big[ 1 - 2 \phi_d^2 + (\eps + \phi_d)^4 \big]  \|\xstar\|^4 
	+\frac{1}{2^{2d}}\Big(\frac{3}{2} K_2 d^{-12} +  \frac{\pi d}{2} + 272 d^4 \Big)\sqrt{\eps} \|\xstar\|^4  + K_2 \frac{\|\xstar\|^4}{2^{2d}} d^{-12}.
	\end{aligned}
	\]
	Similarly if $y \in \mathcal{B}(- \phi_d \,\xstar, \varphi \|\xstar\|)$, and $\varphi = \eps$ we obtain:
	\[
	\begin{aligned}
	f_H(y) &\geq f_E(y) - |f_0(y) - f_E(y)| - \frac{1}{2}  |\langle G(y) G(y)^\T - G(\xstar)G(\xstar)^\T, H \rangle | \\
	&\geq \frac{1}{2^{2d +2}} \big[ 1 - 2 \phi_d^2 \rho_d^2 + (\eps - \phi_d)^4 \big]  \|\xstar\|^4 
	- \frac{1}{2^{2d}} \Big( \frac{3}{2} 2^d \|\xstar\|^{-2} \omega +  10 \pi d^3 + 272 d^4 \Big)\sqrt{\eps} \|\xstar\|^4  - \frac{\omega }{2^{d}}\|\xstar\|^2
	\\&\geq \frac{1}{2^{2d +2}} \big[ 1 - 2 \phi_d^2 \rho_d^2 + (\eps - \phi_d)^4 \big]  \|\xstar\|^4 
	- \frac{1}{2^{2d}} \Big( \frac{3}{2} K_2 d^{-12} +  10 \pi d^3 + 272 d^4 \Big)\sqrt{\eps} \|\xstar\|^4  - K_2 \frac{\|\xstar\|^4}{2^{2d}} d^{-12}.
	\end{aligned}
	\]
	In order to guarantee that $f(y) > f(x)$, it suffices to have:
	\[
	2 (1 - \rho_d^2) \phi_d^2 -  8 K_2 d^{-12} > 4 C_d \sqrt{\eps}
	\]
	with $C_d := (544 d^4 + 10 \pi d^3 \pi + 3 K_2 d^{-12} + \pi d/2 + 1/100)$,
	that is to
	require:
	\[
	\varphi = {\eps} < \bigg( \frac{(1 - \rho_d^2) \phi_d^2 - 4 K_2 d^{-12}}{2 C_d} \bigg)^2.
	\]
	Finally notice that by Lemma 17 in \cite{HHHV18} it holds that $1- \rho_d \geq (K (d +2))^{-2}$ for some numerical constant $K$, we  therefore choose 
	$\eps = \varrho/d^{12}$ for some $\varrho >0$ small enough.
\end{proof}
\subsection{Supplementary proofs}\label{sec:supp_proof}

%\subsubsection{Proofs in subsection \ref{subsec:prelim}}

Below we prove Lemma \ref{lemma:conctr_grad1} on the concentration of the gradient of $f$ at a differentiable point. 
\begin{proof}[Proof of Lemma \ref{lemma:conctr_grad1}]
	We begin by noticing that:
		\[\vbar_x - h_x = \big[\langle p_x, x \rangle p_x - \langle x, x\rangle \frac{x}{2^{2d}} \big] + [ \langle \htilde_x , x \rangle x -  \langle q_x , x \rangle x ].\]
		Below we show  that:
		\begin{equation}\label{eq:px-x}
		\| \langle p_x, x \rangle p_x - \langle x, x\rangle \frac{x}{2^{2d}}\| \leq \frac{50}{2^{2d}} d^3 \sqrt{\eps} \max\{\|x\|^2, \|\xstar\|^2\} \|x\|.
		\end{equation}
		and
		\begin{equation}\label{eq:qx-htx}
		\| \langle q_x, x \rangle p_x - \langle \htilde_x, x\rangle \htilde_x| \leq \frac{36}{2^{2d}} d^4 \sqrt{\eps} \max\{\|x\|^2, \|\xstar\|^2\} \|x\|.
		\end{equation}
		from which the thesis follows.

		Regarding equation \eqref{eq:px-x} observe that:
		\[
			\begin{aligned}
				\| \langle p_x, x \rangle p_x - \langle x, x\rangle \frac{x}{2^{2d}}\| &= \| \langle p_x ,x \rangle  \big[ p_x - \frac{x}{2^d} \big] +  \langle p_x - \frac{x}{2^d} , \frac{x}{2^d} \rangle x \| \\				
				&\leq  \big( \|\Lambda_x x\|^2 + \frac{\|x\|^2}{2^{d}} \big) \|p_x - \frac{x}{2^d}\| \\
				& \leq \frac{50}{2^{2d}} d^3 \sqrt{\eps} \|x\|^3
			\end{aligned}
		\]
		where in the first inequality we used $\langle p_x, x\rangle = \|\Lambda_x x\|^2$ and in the second we used equations \eqref{eq:htilde_conctr} and \eqref{eq:Lx_bound} of Lemma \ref{lemma:conctrWDC}.
		
		Next note that:
		\[
			\begin{aligned}
			\| \langle q_x, x \rangle q_x - \langle \htilde_x, x\rangle \htilde_x \| &= \| \langle q_x ,x \rangle  (q_x - \htilde_x) +  \langle q_x - \htilde_x , x \rangle \htilde_x \| \\				
			&\leq  ( \|q_x\| + \|\htilde_x\| ) \| x\| \|q_x - \htilde_x\| \\
			& \leq \big(\frac{13}{12} + 1 + \frac{d}{\pi}\big)  \frac{\|x\| \| \xstar\|}{2^d} \|q_x - \htilde_x\|\\
			&\leq \frac{3}{2} d \frac{\|x\| \| \xstar\|}{2^d} \|q_x - \htilde_x\|
			\end{aligned}
		\]
		where in the second inequality we have the bound
		\eqref{eq:Lx_bound} and the definition of $\htilde_{x}$. Equation \eqref{eq:qx-htx} is then 
		found by appealing to equation \eqref{eq:htilde_conctr} in Lemma \ref{lemma:conctrWDC}.
\end{proof}

The previous lemma is now used to control the concentration 
of the subgradients $v_x$ of $f$ around $h_x$.

\begin{proof}[Proof of Lemma \ref{lemma:conctr_grad2}]
	
	When $f$ is differentiable at $x$, $\nabla f(x) = \vtilde_x = \vbar_x + \eta_x$, so that by
	Lemma \ref{lemma:conctr_grad1} and the assumption on the noise:
	\begin{equation}\label{eq:gradwnoise}
	\begin{aligned}
		\| v_x - h_x \| &\leq \| \vbar_x
		- h_x\| + \|\eta_x\| \\
		&\leq 86 \frac{ d^4 \sqrt{\eps}}{2^{2d}} \max(\|\xstar\|^2, \| x\|^2) \|x\| +   \frac{\omega}{2^{d}} \,  \| x\|.
	\end{aligned}
	\end{equation}
	Observe, now, that by \eqref{eq:subdiff}, for any $x \in \R^k$,  $v_x \in \pa f(x) = $conv$(v_1, \dots, v_t)$, and therefore $v_x = a_1 v_1 + \dots + a_T v_T$ for some $a_1, \dots, a_T  \geq 0$, $\sum_i a_i = 1$.
	Moreover for each $v_i$ there exist a $w_i$ such that $v_i = \lim_{\delta \to 0^+} \vtilde_{x + \delta w_i} $. Therefore using equation \eqref{eq:gradwnoise}, the continuity of $h_x$ with respect to nonzero $x$ and $\sum_i a_i = 1$:
	\[
	\begin{aligned}
	\| v_x - h_x \| &\leq \sum_{i=1}^T a_i \| v_i  - h_x\| \\
	& \leq  \sum_{i=1}^T a_i \lim_{\delta \to 0} \|   \vtilde_{x + \delta w_i}  - h_{x + \delta w_i}\| \\
	&\leq 86 \frac{ d^4 \sqrt{\eps}}{2^{2d}} \max(\|\xstar\|^2, \| x\|^2) \|x\| + \frac{\omega}{2^{d}}  \,  \| x\|.
	\end{aligned}
	\]
\end{proof}

We now prove Lemma \ref{lemma:f0_conctr} on the concentration of 
the noiseless objective function.

\begin{proof}[Proof of Lemma \ref{lemma:f0_conctr}]
	Observe that:
	\[
	\begin{aligned}
	|f_0(x) - f_E(x)| &\leq \frac{1}{4} |\|G(x)\|^4 - \frac{1}{2^{2d}} \|x\|^4 | \\ 
	&+\frac{1}{4}  |\|G(\xstar)\|^4 - \frac{1}{2^{2d}} \|\xstar\|^4 | \\ 
	&+ \frac{1}{2} |\langle G(x),G(\xstar) \rangle^2 - \langle x, \htilde_x \rangle |.
	\end{aligned}
	\]
	We analyze each term separately. 
	The first term can be bounded as:
	\begin{equation*}
	\begin{aligned}
	\frac{1}{4} |\|G(x)\|^4 - \frac{1}{2^{2d}} \|x\|^4 | &=  \frac{1}{4} |\|G(x)\|^2 + \frac{1}{2^d} \|x\|^2 | \;\:  | \|G(x)\|^2 - \frac{1}{2^d} \|x\|^2 | \\ 
	&\leq  \frac{1}{4}  \frac{1}{2^d} \big( \frac{13}{12} + 1 \big) \|x\|^2  \;\:  | \|G(x)\|^2 - \frac{1}{2^d} \|x\|^2 | \\
	&\leq  \frac{1}{4}  \frac{1}{2^d} \Big( \frac{13}{12} + 1 \Big) \|x\|^2  \; \: 24 \frac{d^3 \sqrt{\eps}}{2^d} \|x\|^2  \\	
	&\leq \frac{1}{2^{2d}} 13 d^3\, \sqrt{\eps} \|x\|^4	
	\end{aligned}
	\end{equation*}
	where in the first inequality we used $\eqref{eq:Lx_bound}$ and in the second inequality \eqref{eq:htilde_conctr} . 
	Similarly we can bound the second term:
	\[
	\frac{1}{4} |\|G(\xstar)\|^4 - \frac{1}{2^{2d}} \|\xstar\|^4 |  \leq \frac{1}{2^{2d}} 13 d^3\, \sqrt{\eps} \|\xstar\|^4.
	\] 
	
	We next note that $\|\htilde_x \| \leq 2^{-d} (1 + d/\pi) \|\xstar\|$ and therefore
	from \eqref{eq:Lx_bound} and $d\geq 2$ we obtain:
	\[
	|\|G(x)\| \|G(\xstar)\| + \|x\| \|\htilde_x\| | \leq \frac{1}{2^d}	\big(\frac{13}{12} + 1 + \frac{d}{\pi}	\big) \|x\| \|\xstar\| \leq \frac{1}{2^d} \frac{3}{2} d \|x\| \|\xstar\|	
	\]
	We can then conclude that:
	\[
	\begin{aligned}
	\frac{1}{2} |\langle G(x),G(\xstar) \rangle^2 - \langle x, \htilde_x \rangle^2 | 
	&\leq	\frac{1}{2}  |\langle x, \Lambda_x^T \Lambda_\xstar \xstar - \htilde_x \rangle| \: \; |\|G(x)\| \|G(\xstar)\| + \|x\| \|\htilde_x\| | \\
	&\leq 	\frac{1}{2} \|x\| 24 \frac{d^3 \sqrt{\eps}}{2^d} \|\xstar\| \: \;
	\frac{1}{2^d} \frac{3}{2} d \; \|x\| \|\xstar\|	\\ 
	&\leq \frac{9}{2^{2d}} d^4 \sqrt{\eps}  (\|\xstar\|^4 + \|x\|^4) 
	\end{aligned}.
	\]
\end{proof}

Below we prove lower and upper bound on the loss $f_E$ as in Lemma \ref{lemma:UPandLOW}.

\begin{proof}[Proof of Lemma \ref{lemma:UPandLOW}]
	Let $x \in \mathcal{B}(\phi_d \xstar, a \|\xstar\|)$ then observe that $0 \leq \thbar_i \leq \thbar_0 \leq \pi a / 2 \phi_d$ and $(\phi_d - a) \|\xstar\| \leq  \|x \| \leq (a + \phi_d) \|\xstar\|$. 
	Then observe that:
	\[
	\begin{aligned}
	\langle x, \htilde_d \rangle &= \frac{1}{2^d}  \big(\prod_{i=0}^{d-1} \frac{\pi - \thbar_i}{\pi}\big) \|\xstar\| \| x\| \cos \thbar_0  + \frac{1}{2^d} \sum_{i=0}^{d-1} \frac{\sin \thbar_i}{\pi} \prod_{j=i+1}^{d-1} \frac{\pi - \thbar_j}{\pi} \:  \|\xstar\| \| x\| \\
	%\leq  \frac{1}{2^d}  \big(\prod_{i=0}^{d-1} \frac{\pi - \thbar_i}{\pi}\big) \:   \|x\|  \| \xstar \| \cos \thbar_0 \\
	&\geq \frac{1}{2^d}  \big(\prod_{i=0}^{d-1} \frac{\pi -\frac{ \pi a}{2 \phi_d}}{\pi} \big) \:  (\phi_d - a)   \| \xstar \|^2   \big( 1 - \frac{\pi^2 a^2}{8 \phi_d^2}\big) \\
	&\geq \frac{1}{2^d}  \big( 1- \frac{d a}{\phi_d} \big)  (\phi_d - a)  \big( 1 - \frac{\pi^2 a^2}{8 \phi_d^2}\big) \| \xstar \|^2.
	\end{aligned}
	\] 
	using  $\cos \theta \geq 1 - \theta^2/2$ and $(1 - x)^d \geq (1 - 2 d x)$ as long as $0 \leq x \leq 1$. We can therefore write:
	\[
	\begin{aligned}
	f_E(x) - \frac{\|\xstar\|^4}{2^{2d+2}} &\leq \frac{1}{2^{2d+2}} \|x\|^4 - 
	\frac{1}{2^{2d+1}}  \big( 1- \frac{d a}{\phi_d} \big)^{2}  (\phi_d - a)^{2}  \big( 1 - \frac{\pi^2 a^2}{8 \phi_d^2}\big)^{2} \| \xstar \|^4 \\
	&\leq \frac{1}{2^{2d+2}} \Big[  (\phi_d + a)^4 - 2  \big( 1- 2 \frac{d a}{\phi_d} \big)  (\phi_d - a)^{2}  \big( 1 - \frac{\pi^2 a^2}{4 \phi_d^2}\big) \Big]  \| \xstar \|^4
	\end{aligned}
	\]
	where in the second inequality we used $(1 - x)^2 \geq 1 - 2x\;$ for all $x \in \R$. We then observe that:
	\[
	\begin{aligned}
		\big( 1- 2 \frac{d a}{\phi_d} \big)  (\phi_d - a)^{2}  \big( 1 - \frac{\pi^2 a^2}{4 \phi_d^2}\big) &\geq \big( 1 -\frac{\pi ^2 a^2}{4 \phi_d ^2}-\frac{2 a d}{\phi_d }\big) \phi_d^2 + a (a - 2 \phi_d)	\big( 1- 2 \frac{d a}{\phi_d} \big)   \big( 1 - \frac{\pi^2 a^2}{4 \phi_d^2}\big) \\
		&\geq \phi_d^2 -  a \big( \frac{1}{2 \pi d^3} +  {2 d \phi_d} \big) + a (a - 2 \phi_d) \big( 1- 2 \frac{d a}{\phi_d} \big)   \big( 1 - \frac{\pi^2 a^2}{4 \phi_d^2}\big)  \\
		&\geq \phi_d^2 -  a \big( \frac{1}{2 \pi d^3} +  {2 d \phi_d} + 2 \phi_d  \big) \\
		&\geq \phi_d^2 - \pi d a,
	\end{aligned}
	\]
	where in the second inequality we have used $\pi^3 d^3 a \leq 2$ and in the last  one $d \geq 2$ and $\phi_d \leq 1$.
	We can then conclude that:
	\[
	%\begin{aligned}
	f_E(x) - \frac{\|\xstar\|^4}{2^{2d+2}}  \leq \frac{1}{2^{2d+2}} \big[ (\phi_d + a)^4 - 2  (\phi_d^2 - \pi d a) \big]\|\xstar\|^4 %\\
	%&\leq \frac{1}{2^{2d+2}} \big( 1 + 16 d a - 2 {\phi_d^2} \big) \|\xstar\|^4
	%\end{aligned}
	\]
	
	We next take $x \in \mathcal{B}(- \phi_d \xstar, a \|\xstar\|)$ which implies $0 \leq \pi - \thbar_0 \leq \pi^2 a /2 =: \delta $ and $\|x\| \leq (a + \phi_d) \|\xstar\|$. We then note that for 
	$\xi$ and $\zeta$ as defined in \eqref{eq:zeta_xi_def} we have:
	\[
	\begin{aligned}
	|2^d x^\T \htilde_x|^2 &\leq (|\xi| + |\zeta|)^2 (a + \phi_d)^2 \|\xstar\|^4 \\
	&\leq \big(\frac{\delta}{\pi} + 3 d^3 \delta + \rho_d \big)^2 (a + \phi_d)^2 \|\xstar\|^4 \\ 
	%&\leq  \big( \pi d^3 \delta + \rho_d \big)^2 (a + \phi_d)^2 \|\xstar\|^4 \\
	&\leq \big( \frac{\pi^3 d^3}{2} a+ \rho_d \big)^2 (a + \phi)^2 \|\xstar\|^4 \\
	&\leq ( 2 \pi^3 d^3 a + \rho_d^2 ) (a + \phi_d)^2 \|\xstar\|^4 \\
	%&= [  ( 2 \pi^3 d^3 a + \rho_d^2 ) (a + 2 \phi_d) + 2 \pi^3 d^3 \phi_d^2 ] a + \rho_d^2 \phi_d^2\\
	&\leq 20 \pi d^3 a + \rho_d^2 \phi_d^2		
	\end{aligned}
	\]
	where the second inequality is due to \eqref{eq:xilrgtheta} and \eqref{eq:zetalrgtheta}, the rest  from $d \geq 2$, $ \rho_d \leq \phi_d \leq 1$ and  $2 \pi^3 d^3 a \leq 1$.
	Finally using $(\phi_d - a) \|\xstar\|\leq \|x\|$, we can then conclude that:
	\[
	f_E(x) - \frac{\|\xstar\|^4}{2^{2d+2}}  \geq \frac{1}{2^{2d+2}} \big[ (\phi_d - a)^4 - 2 (20 \pi d^3 a + \rho_d^2 \phi_d^2) \big] \|\xstar\|^4.
	\]  
\end{proof}
%\section{Proofs of the Spike Random Models }
\section{Proofs for the random spiked and generative models}\label{sec:rand_proofs}

%\subsubsection{{Proofs of recover in the spike Wigner model}
We are now ready to prove our main results for random spiked models and generative networks with random weights. We begin by recalling the following fact on the WDC of a single Gaussian layer. 

\begin{lemma}[Lemma 11 in \cite{HV17}]\label{lemma:rndWDC}
	Fix $0 < \eps < 1$ and suppose $W \in \R^{n \times k}$ has $i.i.d. \;\mathcal{N}(0,1/n)$ entries. Then if $n \geq C_\eps k \log k$, then with probability at least $1 - 8 n \exp(-\gamma_\eps k)$, $W$ satisfies the WDC with constant $\eps$. Here $C_\eps$ and $\gamma_\eps^{-1}$ depend 
	polynomially on $\eps^{-1}$.
\end{lemma}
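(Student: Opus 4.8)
This is a restatement of Lemma~11 of \cite{HV17}, so in the paper it would be invoked directly; here I sketch the argument I would use to prove it. Writing $w_1,\dots,w_n$ for the rows of $W$, the plan is to first observe that
\[
W_{+,x_1}^\T W_{+,x_2} = \sum_{i=1}^n \mathbbm{1}[\langle w_i, x_1\rangle > 0]\,\mathbbm{1}[\langle w_i, x_2\rangle > 0]\, w_i w_i^\T
\]
is a sum of $n$ i.i.d.\ positive semidefinite rank-one matrices, each depending on $(x_1,x_2)$ only through the unit vectors $\hat x_1,\hat x_2$. A two-dimensional Gaussian integral---splitting $w_i$ into its projection onto $\mathrm{span}(x_1,x_2)$, on which the indicators act, and its orthogonal part---gives $\mathbb{E}[\mathbbm{1}[\langle w_i, x_1\rangle>0]\mathbbm{1}[\langle w_i, x_2\rangle>0]\, w_i w_i^\T] = \tfrac1n Q_{x_1,x_2}$, so the WDC is exactly a uniform-in-$(\hat x_1,\hat x_2)$ concentration statement for this empirical average about its mean.

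Next I would establish the pointwise bound. Conditioning on the high-probability event $\mathcal E := \{\max_i \|w_i\|_2^2 \le C k/n\}$---which holds with probability at least $1 - n e^{-c k}$ by a $\chi^2$ tail estimate and a union bound over the $n$ rows---every summand has operator norm $O(k/n)$ and a matrix-variance proxy of order $k/n^2$, so matrix Bernstein yields, for each fixed $(\hat x_1,\hat x_2)$,
\[
\mathbb{P}\big[\,\|W_{+,x_1}^\T W_{+,x_2} - Q_{x_1,x_2}\|_2 > \eps/2,\ \mathcal E\,\big] \le 2k\exp(-c'\eps^2 n/k).
\]

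The crux is promoting this to a uniform bound over all $(x_1,x_2)$, which is delicate because $Q_{x_1,x_2}$ varies continuously with the angle and the $2$-plane while $W_{+,x_1}^\T W_{+,x_2}$ is only piecewise constant, jumping when some $\langle w_i, x_j\rangle$ changes sign. I would take an $\eps$-net $\mathcal N$ of $\mathcal S^{k-1}\times\mathcal S^{k-1}$, of size $e^{C k\log(1/\eps)}$, union-bound the pointwise estimate over $\mathcal N$, and then control the discretization error: since $(\hat x_1,\hat x_2)\mapsto Q_{x_1,x_2}$ is $O(1)$-Lipschitz, it is enough to show $\|W_{+,x_1}^\T W_{+,x_2} - W_{+,x_1'}^\T W_{+,x_2'}\|_2 \le \eps/2$ for the nearest net point $(x_1',x_2')$. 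Only rows $i$ with $|\langle w_i,\hat x_j\rangle|$ of order $\|\hat x_j - \hat x_j'\|_2\lesssim\eps$ can change the relevant indicator, and an anti-concentration (small-ball) estimate for $\langle w_i,\hat x_j\rangle$ together with a union bound over the $O(n^k)$ sign patterns cut out by the hyperplanes $\{w_i^\perp\}$ would show that at most $O(\eps n + \mathrm{poly}(k,\log n))$ such rows occur simultaneously for all directions, each contributing $O(k/n)$ on $\mathcal E$. Taking $n \ge C_\eps k\log k$ makes $n \gtrsim \eps^{-2} k\log n$, which absorbs the net entropy $k\log(1/\eps)$, the $2k$ Bernstein prefactor, and the $k\log n$ from the sign-pattern count; all error terms then fall below $\eps/2$, and the overall failure probability takes the form $8n\,e^{-\gamma_\eps k}$ with $C_\eps$ and $\gamma_\eps^{-1}$ polynomial in $\eps^{-1}$.

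The main obstacle is this last uniformity step: the discontinuity of the ReLU indicators rules out a naive Lipschitz-in-$(x_1,x_2)$ argument, so one must bound the number of sign flips near net points uniformly over all directions via anti-concentration and the combinatorial count of hyperplane-arrangement cells---and it is precisely this that forces the extra logarithmic factor in the sample complexity and the polynomial blow-up of $C_\eps,\gamma_\eps$ as $\eps\to0$.
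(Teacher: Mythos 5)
The paper does not prove this lemma; it is imported verbatim as Lemma 11 of \cite{HV17}. Your sketch correctly reconstructs the strategy of that cited proof---the two-dimensional Gaussian integral identifying $Q_{x_1,x_2}$ as the mean, pointwise matrix concentration for fixed directions, and a net argument whose discretization error is controlled by uniformly bounding the number of rows whose ReLU sign pattern can flip near each net point---so it is consistent with the source and with how the present paper invokes the result.
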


By a union bound over all layers, using the above result we can 
conclude that the WDC holds simultaneously for all layers of the network with probability at least
$    1  - \sum_{i=1}^d 8 n_i e^{- \gamma_\eps n_{i-1}}.$
Note in particular that this argument does not requires the independence of the layers.

By Lemma \ref{lemma:rndWDC}, with high probability 
the  random generative network $G$ satisfies the WDC. Therefore if we can guarantee the assumptions on the noise term, then the proof of the main Theorem \ref{thm:main_rand}
follows from the deterministic Theorem \ref{thm:dir_deriv} and the previous lemma. 

Before turning to the bounds of the noise terms in the spiked models,
we recall the following lemma which bounds the number of possible $\Lambda_x$ for $x\neq 0$. Note that this is related to the number of possible regions defined by a deep Relu network.

\begin{lemma}[Proof of Lemma 8 in \cite{HHHV18}]\label{lemma:NLx}
	Consider a network $G$ as defined in \eqref{eq:Gx} with $d \geq 2$, weight matrices $W_i \in \R^{n_i \times n_{i-1}}$ with i.i.d. entries $\mathcal{N}(0,1/n_i)$ and $\log(10) \leq k/4 \log(n_1)$. Then, with probability one, for any $x \neq 0$ the number of different matrices $\Lambda_x$ is:
	\[
	| \{ \Lambda_x | x \neq 0 \} | \leq 10^{d^2} (n_1^d n_2^{d-1} \dots n_d )^k \leq (n_1^d n_2^{d-1} \dots n_d)^{2 k}
	\] 
\end{lemma}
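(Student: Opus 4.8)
The plan is to bound the number of distinct $\Lambda_x$ by the number of distinct \emph{activation patterns} of the network and then invoke the classical estimate for the number of cells of a central hyperplane arrangement. Note first that $\Lambda_x = \prod_{j=d}^{1} W_{j,+,x}$ depends on $x$ only through the tuple $s(x) = (s_1(x),\dots,s_d(x))$, where $s_i(x)\in\{0,1\}^{n_i}$ is the sign pattern of the $i$-th pre-activation $W_i W_{i-1,+,x}\cdots W_{1,+,x}\,x$, since $W_{i,+,x}=\diag(s_i(x))W_i$. Hence it suffices to bound $|\{s(x): x\in\R^k\setminus\{0\}\}|$, from which $|\{\Lambda_x : x\neq 0\}|$ inherits the same bound.

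First I would argue layer by layer. Fix an attainable tuple $(s_1,\dots,s_{i-1})$ and let $R\subseteq\R^k\setminus\{0\}$ be the (cone-shaped) set of inputs realizing it; on $R$ the matrix $W_{i-1,+,x}\cdots W_{1,+,x}$ equals the constant matrix $M_{i-1}:=\diag(s_{i-1})W_{i-1}\cdots\diag(s_1)W_1$, so the $i$-th pre-activation is the \emph{linear} function $x\mapsto W_iM_{i-1}x$ on $R$. Consequently $s_i$ is constant on each cell of the arrangement of the (at most) $n_i$ hyperplanes $\{x:(W_iM_{i-1}x)_\ell=0\}$, $\ell\le n_i$, through the origin of $\R^k$; with probability one over the Gaussian weights these are honest hyperplanes, and any central arrangement of $m$ hyperplanes in $\R^k$ has at most $2\sum_{j=0}^{k-1}\binom{m-1}{j}\le 10\,m^k$ cells. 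Therefore the number of attainable $(s_1,\dots,s_i)$ is at most $10\,n_i^k$ times the number of attainable $(s_1,\dots,s_{i-1})$, and iterating over $i=1,\dots,d$ yields
\[
|\{\Lambda_x: x\neq0\}|\;\le\;\prod_{i=1}^d 10\,n_i^k\;=\;10^d\,(n_1 n_2\cdots n_d)^k\;\le\;10^{d^2}\,(n_1^d n_2^{d-1}\cdots n_d)^k,
\]
the last inequality because $10^d\le 10^{d^2}$ and $n_1 n_2\cdots n_d\le n_1^d n_2^{d-1}\cdots n_d$.

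For the second stated inequality it is enough to verify $10^{d^2}\le (n_1^d n_2^{d-1}\cdots n_d)^k$: by the expansivity $n_1\le\cdots\le n_d$ one has $n_1^d n_2^{d-1}\cdots n_d\ge n_1^{d(d+1)/2}$, and then the hypothesis $\log 10\le\frac{k}{4}\log n_1$, i.e.\ $n_1^{k/4}\ge10$, gives $(n_1^d n_2^{d-1}\cdots n_d)^k\ge(n_1^{k/4})^{2d(d+1)}\ge10^{2d(d+1)}\ge10^{d^2}$. The main obstacle is the second paragraph: one must set up the induction carefully — in particular justify that a deep ReLU map, restricted to a single activation region, is an exact linear function of the $k$-dimensional input $x$, so that the hyperplane-arrangement count can be applied cleanly at every layer — and then keep the constants loose enough to absorb them into the stated $10^{d^2}$ while still producing a bound of the claimed product form.
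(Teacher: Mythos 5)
Your argument is correct and is essentially the standard one: the paper itself states this lemma without proof, importing it verbatim from the cited reference \cite{HHHV18}, whose argument is exactly this layer-by-layer count of activation patterns via cells of central hyperplane arrangements, with the hypotheses $d\ge 2$ and $\log 10 \le \tfrac{k}{4}\log n_1$ serving (as you use them) to absorb the arrangement-count constants into the loose $10^{d^2}(n_1^d n_2^{d-1}\cdots n_d)^k$ form. Your recursive bound $10^d(n_1\cdots n_d)^k$ is in fact tighter than the stated one, and your final absorption step is valid given the expansivity $n_1\le\cdots\le n_d$ from Assumption~\ref{hyp:randW}.
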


In the next section we use this lemma to control the noise term $\Lambda_x^\T H \Lambda_x$ where:
\begin{itemize}
	\item in the \textbf{Spiked Wishart Model} $H~=~\Sigma_N~-~\Sigma$;
	\item in the \textbf{Spiked Wigner Model} $H~=\mathcal{H}$.
\end{itemize}

We then conclude in section \ref{sec:min_rand} with the proof of Proposition \ref{prop:rand_localmin}.B.

\subsection{Spiked Wigner Model}

Recall that in the Wigner model $Y = G(\xstar) G(\xstar)^\T + \mathcal{H}$ and  the symmetric noise matrix $\mathcal{H}$ follows a \textit{Gaussian Orthogonal Ensemble} GOE$(\nu, n)$, that is $\mathcal{H}_{ii} \sim \mathcal{N}(0, 2\nu/n)$ for all $1 \leq i \leq n$ and $\mathcal{H}_{ij} \sim \mathcal{N}(0,\nu/n)$ for $1 \leq j < i \leq n$. 
Our goal is to bound $ \| \Lambda_x^\T \mathcal{H} \Lambda_x \| $ uniformly over $x$ with high probability.
%and we need to bound $\|\Lambda_x^\T H \Lambda_x \|_2$ with high probability. 
%Note that since $H$ is symmetric:
%\[ 
%	\|\Lambda_x^\T H \Lambda_x \|_2 = \max_{z \in S^{k-1}} |z^\T \Lambda_x^\T H \Lambda_x z|
%\] 

\noindent Fix $x \in \R^{k}$, 
and let $\mathcal{N}_{{1}/{4}}$ be a $1/4$-net on the sphere $\mathcal{S}^{k-1}$ such that $|\mathcal{N}_{1/4}| \leq 9^k$ and:
\[
\|\Lambda_x^\T \mathcal{H} \Lambda_x \| \leq 2 \max_{z \in \mathcal{N}_{{1}/{4}}} |\langle \Lambda_x^\T \mathcal{H} \Lambda_x z,   z\rangle|.
\]
For any  $z \in\mathcal{N}_{{1}/{4}}$ let $\ell_{x,z} := \Lambda_x z \in \R^n$ and note that by the assumption on the  entries of $\mathcal{H}$ it holds that
$\ell_{x,z}^\T \mathcal{H} \ell_{x,z} \sim  \mathcal{N}(0,  {\nu^2 \|\ell_{x,z}\|^4}/{n})$.
%\[\ell_{x,z}^\T H \ell_{x,z}\sim \mathcal{N}\Big(0, \frac{\sigma^2}{n} \big(\sum_{i} \ell_{x,z}_i^4 + 2 \sum_{i < j} \ell_{x,z}_i^2 \ell_{x,z}_j^2  \big) \Big) \sim \mathcal{N}(0,  \frac{\sigma^2 \|\ell_{x,z}\|^4}{n}).\]
In particular by Lemma \ref{lemma:conctrWDC}, the quadratic form $\ell_{x,z}^\T \mathcal{H} \ell_{x,z}$ is sub-Gaussian with parameter $\gamma^2$ given by: 
\[{\gamma}^2 := \frac{\nu^2}{n} \Big(\frac{13}{12}\Big)^2 \frac{1}{2^{2d}}.\]
%and a standard sub-Gaussian tail bound gives:
%\[
%	\PX( |\langle \Lambda_x^\T H \Lambda_x z,   z\rangle| \geq 2 u ) \leq 2 e^{-\frac{t^2}{2 {\gamma}^2}}.
%\]
Then for fixed $x \in \R^k$, standard sub-Gaussian tail bounds and a union bound over $\mathcal{N}_{{1}/{4}}$ give:
\[
\begin{aligned}
\PX\big[ \| \Lambda_x^\T \mathcal{H}  \Lambda_x \| \geq 2 u \big]
&\leq \PX\big[ \max_{z \in \mathcal{N}_{1/4}} \| \ell_{x,z}^\T \mathcal{H} \ell_{x,z}\| \geq u \big] \\
&\leq \sum_{z \in \mathcal{N}_{1/4}} \PX\big[  \| \ell_{x,z}^\T \mathcal{H}  \ell_{x,z}\| \geq u \big]
\leq 2 \cdot 9^{k} e^{-\frac{u^2}{2 {\gamma}^2}}.
\end{aligned}
\]
Lemma \ref{lemma:NLx}, then ensures that the number of possible $\Lambda_x$ is at most $(n_1^d n_2^{d-1} \dots n_d)^{2 k}$, so a union bound over this set allows to conclude that:
\[
\PX\big[ \| \Lambda_x^\T \mathcal{H}  \Lambda_x \| \leq  \frac{\nu}{2^d} \sqrt{  \frac{30 k \log (3\, n_1^d n_2^{d-1} \dots n_d)}{n}}, \; \text{for all}\, x \big] 		
\geq 1 - 2 e^{- k \log (n)}.
\]

\subsection{Spiked Wishart Model}

Recall that the data $\{y_i \}_{i=1}^N$ are i.i.d. samples from $\mathcal{N}(0, \Sigma)$ where $\Sigma = G(\xstar)G(\xstar)^\T + \sigma^2 I_n$.
In the minimization problem \eqref{eq:minM} we take $Y = \Sigma_N - \sigma^2 I_n$ where $\Sigma_N$ is the empirical covariance matrix. The symmetric noise matrix $H$ is then given by $H = \Sigma_N - \Sigma$ and by the Law of Large Numbers $H \to 0$ as $N \to \infty$.
We bound $\|\Lambda_x^\T H \Lambda_x \|$ with high probability uniformly over $x \in \R^k$.

Fix $x \in \R^k$, let $\mathcal{N}_{{1}/{4}}$ be a $1/4$-net on the sphere $\mathcal{S}^{k-1}$ such that $|\mathcal{N}_{1/4}| \leq 9^k$, and notice that:
\[
\|\Lambda_x^\T H \Lambda_x \| \leq 2 \max_{z \in \mathcal{N}_{{1}/{4}}} |z^\T \Lambda_x^\T H \Lambda_x   z|.
\]
By a union bound on $\mathcal{N}_{1/4}$ we obtain for any fixed $z \in \mathcal{N}_{{1}/{4}}$:
\[
\PX\big[\|\Lambda_x^\T H \Lambda_x\| \geq 2 u \big] \leq 9^k \PX\big[|z^T \Lambda_x^\T H \Lambda_x z | \geq u\big].
\]
Let $\ell_x := \Lambda_x z$ and note that:
\[
\begin{aligned}
z^T \Lambda_x^\T H \Lambda_x z &= \frac{1}{N} \sum_{i=1}^N (\ell_{x}^\T y_i)^2 - \EX[(\ell_{x}^\T y_i)^2 ]  %\\
%&= \frac{1}{N} \sum_{i=1}^N \nu_i - \EX[\nu_i ] 
\end{aligned}
\]
Since $s_i := \ell_{x}^\T y_i \sim \mathcal{N}(0,\gamma^2)$ where $\gamma^2 = \ell_{x}^\T \Sigma \ell_{x}$, then
for $u/\gamma^2 \in (0,1)$ by small deviation bounds for $\chi^2$ random variables (see for example \cite[Example 2.11]{wainwright2019high}): 
\[ 
\begin{aligned}
\PX\big[\|\Lambda_x^\T H \Lambda_x\| \geq 2 u \big] \leq 9^k \PX\Big[|\frac{1}{N}\sum_{i=1} ({s_i}/{\gamma})^2 - 1| \geq \frac{u}{\gamma^2}\Big]  
\leq 2 \exp\big[ k \log 9 - \frac{N}{8}\frac{u^2}{\gamma^4} \big].
\end{aligned}
\]
Recall now that $|\{\Lambda_x | x \neq 0 \}| \leq (n_1^d n_2^d \dots n_d )^k$, then
proceeding as for the Wigner case by a union bound over all possible $\Lambda_x$:
\[
\PX\big[ \| \Lambda_x^\T H \Lambda_x \| \leq 2 \sqrt{\frac{24 k \log (3\, n_1^d n_2^{d-1} \dots n_d) }{N}} \gamma^2, \;\text{for all}\; x  \big] \geq 1 - 2 e^{- k \log(3 n)} 
\]
Similarly when $u/\gamma^2 \geq 1$ we obtain 
\[
\PX\big[ \| \Lambda_x^\T H \Lambda_x \| \leq 2 {\frac{24 k \log (3\, n_1^d n_2^{d-1} \dots n_d) }{N}} \gamma^2, \;\text{for all}\; x  \big] \geq 1 - 2 e^{- k \log(3 n)} 
\]
\subsection{Proof of Proposition \ref{prop:rand_localmin}}\label{sec:min_rand}

Observe that Proposition \ref{prop:rand_localmin}.A follows from Proposition \ref{prop:local_min} after noticing that the assumptions on $\eps$ and $\omega$ in Theorem \ref{thm:main_rand} imply that $ \mathcal{B}(\xstar, r_+) \subset \mathcal{B}(\xstar, \varrho \|\xstar\| d^{-12})$ and $\mathcal{B}(- \rho_d \, \xstar, r_-) \subset  \mathcal{B}(-\rho_d \xstar, \varrho \|\xstar\| d^{-12})$. 

We next recall the following fact on the local Lipschitz property of the generative network.

\begin{lemma}[Lemma 21 in \cite{HHHV18}]
	Suppose $x \in \mathcal{B}(\xstar, d \sqrt{\eps} \|\xstar\|)$, and the WDC holds with $\eps < 1/(200)^4 /d^6$. Then it holds that:
	\[
	\|G(x) - G(\xstar) \|\leq \frac{1.2}{2^{d/2}} \|x - \xstar\|.
	\]
\end{lemma}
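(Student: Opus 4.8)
The plan is to prove the stronger statement that $G$ is \emph{globally} Lipschitz with constant $\sqrt{13/12}\,2^{-d/2}$, from which the claim follows immediately since $\sqrt{13/12} = 1.04\ldots < 1.2$; in particular the restriction $x\in\mathcal{B}(\xstar, d\sqrt\eps\|\xstar\|)$ is not needed for this bound. The two ingredients are the operator-norm estimate \eqref{eq:Lx_bound} of Lemma \ref{lemma:conctrWDC} and the fact that $G$ is a continuous piecewise-linear map whose Jacobian, where it exists, equals $\Lambda_x$.

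First I would note that $\eps < 1/(200)^4/d^6$ comfortably implies both $\eps < 1/(16\pi d^2)^2$ and $4\eps d \le 1/12$, so Lemma \ref{lemma:conctrWDC} applies and \eqref{eq:Lx_bound} yields $\|\Lambda_y\|^2 \le 2^{-d}(1+2\eps)^d \le \tfrac{13}{12}2^{-d}$, hence $\|\Lambda_y\| \le \sqrt{13/12}\,2^{-d/2}$ for every $y \neq 0$ (and $\Lambda_0 = 0$, so the bound is vacuous there). Next I would record the standard structural facts about $G$: writing $y^{(0)} = y$ and $y^{(i)} = \relu(W_i y^{(i-1)})$ for the layer outputs, $G$ is continuous and piecewise linear, and on the open activation cell containing a given $y$ one has $G(\cdot) = \Lambda_y(\cdot)$, so $\nabla G(y) = \Lambda_y$ at every point of differentiability; the non-differentiability set is contained in the finite union $\bigcup_{i,j}\{y : (W_i y^{(i-1)})_j = 0\}$ of piecewise-linear hypersurfaces and thus has Lebesgue measure zero, and by Lemma \ref{lemma:NLx} only finitely many matrices $\Lambda_y$ occur.

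The main step is then the routine passage from a pointwise Jacobian bound to a global Lipschitz bound. For $x,\xstar$ in general position, the segment $\gamma(t) = \xstar + t(x-\xstar)$, $t\in[0,1]$, meets the above hypersurfaces in only finitely many points, so $[0,1]$ splits into finitely many subintervals on each of which $G\circ\gamma$ is affine with derivative $\Lambda_{\gamma(t)}(x-\xstar)$; since $G$ is continuous, telescoping the increments gives $\|G(x) - G(\xstar)\| \le \big(\max_{t\in[0,1]}\|\Lambda_{\gamma(t)}\|\big)\|x-\xstar\| \le \sqrt{13/12}\,2^{-d/2}\|x-\xstar\|$, and the general case follows by perturbing $x$ slightly and invoking continuity of $G$. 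This is the desired inequality.

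I do not anticipate a serious obstacle. The only mildly delicate point is the measure-zero / absolute-continuity bookkeeping in the last step, i.e.\ justifying that $G$ may be recovered by integrating $\nabla G$ along the segment across the finitely many activation hypersurfaces; this is standard for continuous piecewise-linear maps. One could instead attempt a layer-by-layer induction, bounding the distance between the layer-$i$ outputs of $x$ and $\xstar$ by half the distance at layer $i-1$ up to a WDC error (using $\cos g(\theta)\ge\cos\theta$); this works but is messier, and the accumulated additive errors of order $\eps 2^{-i}\|\xstar\|^2$ degrade the constant unless $\|x-\xstar\|$ is comparable to $d\sqrt\eps\|\xstar\|$ --- which is presumably why the source states the lemma on that ball, but the clean global argument above avoids the issue entirely.
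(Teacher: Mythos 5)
Your argument is correct, and it is worth noting that the paper itself does not prove this lemma at all --- it is imported verbatim from \cite{HHHV18} --- so there is no in-paper proof to match; your write-up is a valid self-contained substitute. The key points all check out: the WDC applied to each layer gives $\|W_{i,+,x}\|^2 \le \tfrac12 + \eps$ for every $x$, hence $\|\Lambda_x\|^2 \le 2^{-d}(1+2\eps)^d \le \tfrac{13}{12}2^{-d}$ exactly as in the paper's proof of \eqref{eq:Lx_bound} (the hypothesis $\eps < 1/(200)^4/d^6$ comfortably ensures $4\eps d \le 1/12$); $G$ is continuous and piecewise linear with linear part $\Lambda_y$ on each activation cell; and a continuous, locally Lipschitz, piecewise-linear map whose pieces all have operator norm at most $L$ is globally $L$-Lipschitz, by exactly the telescoping-along-a-generic-segment argument you give (or, more robustly, absolute continuity of $t\mapsto G(\gamma(t))$ plus a.e.\ differentiability). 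Since $\sqrt{13/12} < 1.2$, you obtain the stated bound globally, without the restriction to $\mathcal{B}(\xstar, d\sqrt\eps\|\xstar\|)$. The source's proof proceeds differently, expanding $\|G(x)-G(\xstar)\|^2$ and invoking the WDC concentration of the cross term $\langle \Lambda_x x, \Lambda_\xstar \xstar\rangle$ around an angle-dependent quantity; that route incurs additive $O(\mathrm{poly}(d)\sqrt{\eps}\,\|\xstar\|^2/2^d)$ errors and hence needs the locality hypothesis, whereas your operator-norm argument buys a cleaner, stronger (global) statement at no cost. Two cosmetic quibbles: the bound at $x=0$ is trivially true rather than ``vacuous'' (since $\Lambda_0=0$), and you should state explicitly that the WDC is being applied to the intermediate layer input $z_i = \Pi_{j=i-1}^{1}W_{j,+,x}\,x$ when bounding $\|W_{i,+,x}\|$.
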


The proof of Proposition \ref{prop:rand_localmin}.B follows now from $\ystar = G(\xstar)$, the above Lemma, the bounds \eqref{eq:LxLy} and \eqref{eq:Lx_bound} and the assumptions on $\epsilon$ and the noise term.

\end{document}